\newcommand{\E}{\mathbb{E}}
\newtheorem{lemma}{Lemma}
\newtheorem{assumption}{Assumption}
\newtheorem{remark}{Remark}
\newtheorem{theorem}{Theorem}
\newcommand{\removelatexerror}{\let\@latex@error\@gobble}
\title{RL in Markov Games with Independent  Function Approximation: Improved Sample Complexity Bound under the Local Access Model}
\begin{document}
%
\runningtitle{RL in Markov Games with Independent Function Approximation}

%
\runningauthor{Junyi Fan, Yuxuan Han, Jialin Zeng, Jian-Feng Cai, Yang Wang, Yang Xiang, Jiheng Zhang}

\twocolumn[

\aistatstitle{RL in Markov Games with Independent Function Approximation: Improved Sample Complexity Bound under the Local Access Model}

\aistatsauthor{  Junyi Fan$^{*\dagger}$ \And Yuxuan Han$^{*\dagger}$ \And  Jialin Zeng$^{*\dagger}$ \And Jian-Feng Cai$^{\dagger \S}$ }
\aistatsauthor{  Yang Wang$^{\dagger \ddagger }$ \And Yang Xiang$^{\dagger \S}$ \And Jiheng Zhang$^{\dagger \ddagger }$ }

\aistatsaddress{$\dagger$ Department of Mathematics, HKUST \\   $\ddagger$Department of Industrial Engineering and Decision Analytics, HKUST \\  $\S$HKUST Shenzhen-Hong Kong Collaborative Innovation Research Institute\\
$*$Equal Contribution, Correspondence to: maxiang@ust.hk, jiheng@ust.hk}

]

\begin{abstract}
Efficiently learning equilibria with large state and action spaces in general-sum Markov games while overcoming the curse of multi-agency is a challenging problem.
Recent works have attempted to solve this problem by employing independent linear function classes to approximate the marginal $Q$-value for each agent.
However, existing sample complexity bounds under such a framework have a suboptimal dependency on the desired accuracy $\varepsilon$ or the action space.
In this work, we introduce a new algorithm, Lin-Confident-FTRL, for learning coarse correlated equilibria (CCE) with local access to the simulator,  i.e., one can interact with the underlying environment on the visited states.
 Up to 
 a logarithmic dependence on the size of the state space, Lin-Confident-FTRL learns $\epsilon$-CCE with a provable optimal accuracy bound $O(\epsilon^{-2})$
and gets rids of the linear dependency on the action space, while scaling polynomially with relevant problem parameters (such as the number of agents and time horizon).
Moreover, our analysis of Linear-Confident-FTRL generalizes the virtual policy iteration technique in the single-agent local planning literature,
which yields a new computationally efficient algorithm with a tighter sample complexity bound when assuming random access to the simulator.
\end{abstract}

\section{Introduction}

As a flourishing subfield of reinforcement learning, multi-agent reinforcement learning (MARL) systems have demonstrated impressive success across a variety of modern artificial intelligence tasks, such as chess and GO games \citep{silver2017mastering}, Poker \citep{brown2019superhuman}, autonomous self-driving \citep{shalev2016safe}, and multi-robot controls\citep{matignon2012coordinated}. MARL investigates how multiple agents interact in an unknown shared environment and learn to take actions that maximize their individual reward. 
Compared to single-agent RL, where an agent only needs to optimize its own behavior by interacting with the environment, the presence of complex interactions among multiple players in MARL poses 
 some novel challenges.

MARL encounters similar challenges as single-agent reinforcement learning in dealing with large state and action spaces, which are further compounded in the multi-agent scenario. 
In single-agent RL, function approximation is widely employed to tackle the challenges arising from large state and action spaces that cannot be exhaustively explored \citep{CisnerosVelarde2023FinitesampleGF,wen2017efficient, jiang2017contextual, du2019good, yang2020reinforcement, jin2020provably, wang2020reinforcement, zanette2020learning,jin2021bellman,du2021bilinear,yin2022efficient,foster2021statistical}. 
However, applying function approximation to MARL using a global function approximation that captures the joint $Q$-value of all agents results in the curse of multi-agency, where the sample complexity scales exponentially with the number of agents \citep{xie2020learning, huang2021towards, chen2021almost,jin2022power, chen2022unified,ni2022representation}.
To address this problem, decentralized, or independent linear function approximation has been proposed in \cite{wang2023breaking, cui2023breaking} for learning equilibrium in multi-agent general-sum Markov games, where the linear function class only models the marginal $Q$-value for each agent.   Specifically, \cite{wang2023breaking} combine new policy replay mechanisms with $V$-learning that can learn $\varepsilon$-coarse correlated equilibirum (CCE) with $O(\varepsilon^{-2})$ sample complexity. However, the sample complexity of their algorithm still depends polynomially on the size of the largest action space
$\text{max}_{i=1}^m A_i$, which is affected by the large action space issue and does not fully utilize the advantage of function approximation.
 \cite{cui2023breaking} also employ policy replay techniques but with on-policy samples that eliminate the dependency on the number of actions. However, this approach yields a sub-optimal sample complexity of  $O(\varepsilon^{-4})$ for finding $\varepsilon$-CCE.

\begingroup 
\renewcommand{\arraystretch}{1.4} 

\begin{table*}[ht!]\label{table:MG}
    \centering
    \resizebox{\textwidth}{!}{
    \renewcommand{\arraystretch}{1.5}
    \begin{tabular}{|c|c|c|c|}
    \hline
    \textbf{Result} & \textbf{Sample Complexity} & \textbf{Tabular Case Complexity} &\textbf{Sampling Protocol}  \\ \hline

 Theorem~6, \cite{jin2021v} & $\tilde{O}({ H^6S\max_iA_i}{\varepsilon^{-2}})$ & \multirow{4}{*}{N.A.} & Online Access \\         \cline{1-2} \cline{4-4}

   Theorem~3.3, \cite{zhang2020generative} & $\tilde{O}({ H^3SA_1A_2}{\varepsilon^{-2}})$ &  & Random Access \\ 
 \cline{1-2} \cline{4-4}    
   Theorem~2, \cite{li2022minimax} & $\tilde{O}({ H^4S\sum_{i=1}^m A_i}{\varepsilon^{-2}})$ &  & Random Access \\ 
    \hline

    Theorem~2, \cite{xie2020learning} & $\tilde{O}({d^3H^4}{\varepsilon^{-2}})$ &\multirow{1}{*}{$d = SA_1A_2$} & Online Access \\ 
    \hline

     Theorem~5.2, \cite{chen2021almost} & $\tilde{O}({d^2H^3}{\varepsilon^{-2}})$ &\multirow{1}{*}{$d = S^2A_1A_2$} & Online Access \\ 
    \hline

     Theorem~5, \cite{wang2023breaking} & $\tilde{O}({d^4H^6m^2\max_i A_i^5}{\varepsilon^{-2}})$ & \multirow{4}{*}{$d = S\max_i A_i$} & Online Access \\ 
    \cline{1-2} \cline{4-4}
    
     Theorem~1, \cite{cui2023breaking} & $\tilde{O}(d^4H^{10}m^4\varepsilon^{-4})$ & & Online Access \\\cline{1-2} \cline{4-4}

    Theorem~4, \citet{dai2024refined}$^\ddagger$& $\tilde{O}(m^4d^5H^6 \log S \varepsilon^{-2} )$ & & Online Access \\\cline{1-2} \cline{4-4}

    Theorem~1 \textbf{(This Paper)}$^\dagger$& $\tilde{O}(\min\{
\frac{\log(S)}{d}, \max_i A_i\}d^3H^6m^2\varepsilon^{-2})$ & & Local Access \\
    \cline{1-2} \cline{4-4}

    Theorem~2 \textbf{(This Paper)}   & $\tilde{O}(\min\{\varepsilon^{-2}dH^2,\frac{\log(S)}{d}, \max_i A_i\}d^2H^5m\varepsilon^{-2})$ & & Random Access \\ \hline
\end{tabular}}

\caption{
Comparison of different algorithms, where in $\tilde{O}(\cdot)$ we omit $\text{polylog}(A,H,m,d,\varepsilon)$ terms. Results in \cite{zhang2020generative,chen2021almost,xie2020learning} are for learning the $\varepsilon$-Nash Equilibrium(NE) in two player zero-sum Markov Games while other results are for learning $\varepsilon$-CCE for $m$-player general-sum Markov Games.\\
\footnotesize{$^\ddagger$ See the last paragraph of section~1.2.}\\
\footnotesize{$^\dagger$ When $\min\{d^{-1}\log S, \max_iA_i \} \geq \varepsilon^{-2}$, Theorem~1 also ensures a sample complexity bound independent of $\log S$ and $A$, details are presented in section~3.}
}
\end{table*}
\endgroup

  While both \cite{wang2023breaking} and \cite{cui2023breaking} have utilized the online access model, it is reasonable to believe that compared to the online access model, more flexible sampling protocols, such as local access or random access models can lead to an improved sample complexity.  This observation raises the following open question:
 \begin{center}
 \textit{Can we design more sample-efficient algorithms for MARL with independent linear function approximation under stronger access models?}
 \end{center}
  In this paper, we make an effort to answer this question by designing an algorithm that achieves sharper dependency 
  under the local access model and random access model.  Random access model, also known as generative model, allows the player to query any state-action pair. Recently, the local access model has gained popularity in the single-agent RL with function approximation both theoretically \citep{weisz2022confident, yin2022efficient,hao2022confident,li2021sample} and empirically \citep{Tavakoli2020ExploringRD,lan2023can,Yin2023SampleED}. This model allows the agent to query the simulator with previously visited states, providing more versatility than the random access model and accommodating many realistic scenarios. For example, in many video games, players can revisit previously recorded states. We summarize our key contributions and technical innovations under these two models below.

\subsection{Our Contribution}

\textbf{Independent linear Markov Games under the local access model. }
We propose a more efficient algorithm, Linear-Confident-FTRL, for independent linear Markov games with local access to a simulator.
To leverage accumulated information and prevent unnecessary revisits, the algorithm maintains a distinct core set of state-action pairs for each agent, which then determine a common confident state set. Then each agent performs policy learning over his own core set. Whenever a new state outside the confident state set is detected during the learning, the core set is expanded, and policy learning is restarted for all agents. To conduct policy learning, the algorithm employs a decentralized Follow-The-Regularized-Leader (FTRL) subroutine, which is executed by each agent over their own core sets, utilizing an adaptive sampling strategy extended from the tabular and the random model setting \citep{li2022minimax}. 
This adaptive sampling strategy effectively mitigates the curse of multi-agency, which is caused by uniform sampling over all state-action pairs.


\textbf{Sample complexity bound under the local access model. } By querying from the local access model, the Linear-Confident-FTRL algorithm is provable to learn an $\varepsilon$-CCE with {\small $\tilde{O}(\min\{\frac{\log(S)}{d},\max_i A_i\}d^3H^6m^2\varepsilon^{-2})$} samples for independent linear Markov Games. Here, $d$ denotes the dimension of the linear function, $S$ is the size of the state space, $m$ represents the number of agents, $H$ stands for the time horizon, and $A_i$ is the number of actions for player $i$.
When $S \lesssim e^{d\max_iA_i}$,  we get rid of the dependency on action space and achieve near-optimal dependency on $\varepsilon$.
For possibly infinite $S$, our algorithm achieves $\tilde{O}(\varepsilon^{-2}d^3H^6m^2\max_i A_i)$ sample complexity, which is similar to \cite{wang2023breaking} but sharpens the dependency on $\max_i{A}_i$ and $d$.
We make detailed comparisons with prior works in table~\ref{table:MG}.

\textbf{Sample complexity bound under the random access model. } 
Our analysis of Linear-Confident-FTRL generalizes the virtual policy iteration technique in the single-agent local planning literature \citep{hao2022confident,yin2022efficient}, in which a virtual algorithm is constructed and used as a bridge to analyze the performance of the main algorithm. In particular,
our construction of the  virtual algorithm also yields a new algorithm with a tighter sample complexity bound $\tilde{O}(\min\{\varepsilon^{-2}dH^2,\frac{\log(S)}{d},\max_i A_i\}d^2H^6m^2\varepsilon^{-2})$ when the random access to the simulator is available. 
It is worth noting that the minimax lower bound in the tabular case is $\Omega(S\max_i A_iH^4\varepsilon^{-2})$ \citep{li2022minimax}. Since the independent linear approximation recovers the tabular case with $d = S\max_i A_i$, a lower bound of $\Omega(dH^4\varepsilon^{-2})$ can be derived within this framework. By comparing our sample complexity bound to this lower bound, we can demonstrate that when $S$ is not exponentially large, our proposed algorithm under the random access model achieves optimal dependency on $d$ and $\varepsilon$.
On the  other hand, for possibly infinity $S$, our sample complexity bound achieves the minimum over the $\tilde{O}(\varepsilon^{-4})$ result in \cite{cui2023breaking} and the $\tilde{O}(\varepsilon^{-2}A)$ result in \cite{wang2023breaking}, with all other problem-relevant parameters are sharpened.

\subsection{Related Work}

\textbf{Multi-Agent Markov Game. }
There exist plenty of prior works in Multi-Agent Games, which offer wide exploration of different algorithms under different settings. \cite{zhang2020model, liu2021sharp} provide model-based algorithms under different sampling protocols, while exponential growth on the number of agents ($\Pi_{i\in [m]}A_i$) are induced in the sample complexity. \cite{bai2020nearoptimal, song2022learn,jin2021v, mao2022improving} circumvent the curse of multi-agency via decentralized algorithms but return the non-Markov policies. \cite{daskalakis2022complexity} propose an algorithm producing Markov policies, which only depend on current state information, but at the cost of higher sample complexity. In the tabular multi-agent game, \cite{li2022minimax} provide the first algorithm for learning the $\varepsilon$-NE in two players zero-sum game and $\varepsilon$-CCE in multi-player general-sum game with minimax optimal sample complexity bound under the random access model.

\textbf{Function Approximation in RL. }
The function approximation framework has been widely applied in single-agent RL with large state and action spaces \citep{zanette2020learning, jin2020provably, yang2020reinforcement,jin2021bellman, wang2020reinforcement,du2021bilinear,foster2021statistical}. The same framework has also been generalized to Markov games \citep{xie2020learning, chen2021almost,jin2022power, huang2021towards,ni2022representation} in a centralized manner, i.e., they approximate the joint $Q$ function defined on $\mathcal{S} \times \prod_{i\in [m]}\mathcal{A}_i$, which results in the complexity of the considered function class inherently depend on $\prod_{i\in[m]}A_i$. In contrast, we consider the function approximation in a decentralized manner as in  \cite{cui2023breaking,wang2023breaking} to get rid of the curse of the multi-agency.


\textbf{RL under Local Access Model. }
Single-agent RL with linear function approximation under the local access model has been well investigated in previous works \citep{li2021sample,wang2021exponential,weisz2021query,yin2022efficient,hao2022confident,weisz2022confident}. \cite{yin2022efficient,hao2022confident} propose provably efficient algorithms for single-agent learning under the linear realizability assumption.  
Their algorithm design and analysis rely on the concept of the core set and the construction of virtual algorithms, which we have generalized in our paper to the multi-agent setting using a decentralized approach.
The only work considering multi-agent learning under local access, to our knowledge, is \cite{tkachuk2023efficient}. They consider the \textit{cooperative} multi-agent learning but with \textit{global} linear function approximation. They focus on designing sample efficient algorithms for learning the globally optimal policy with computational complexity scales in $\text{Poly}(\max_i{A}_i,d)$ instead of $\text{Poly}(\Pi_i{A}_i,d)$ under the additive decomposition assumption on the global $Q$ function.
In contrast, our work addresses the learning CCE of \textit{general-sum} Markov games with \textit{independent} function approximation in a \textit{decentralized} manner.
It's important to highlight that while the general-sum game encompasses the cooperative game as a particular instance, the CCE policy might not always align with the global optimal policy. 
This distinction complicates a direct comparison between our results and those presented in \cite{tkachuk2023efficient}.
Lastly, we would like to point out that the computational complexity of our algorithm  also scales in $\text{Poly}(\max_i A_i,d)$.  This is directly inferred from our algorithm design detailed in Section~3.

\paragraph{Recent Refined Sample Complexity Bounds under Online Access}
After the submission of our paper, a recent and independent work by \citet{dai2024refined} studied the same problem in the online access setting and achieved significant improvements in the sample complexity bounds originally presented by \citet{cui2023breaking} and \citet{wang2023breaking}. By utilizing tools developed for the single-agent setting in \citet{dai2023refined} and refining the AVLPR scheme of \citet{wang2023breaking}, \citet{dai2024refined} demonstrated that it is possible to obtain a sample complexity bound of the order $\tilde{O}(\frac{m^4d^5H^6\log S}{\varepsilon^2})$. Notably, \cite{dai2024refined} achieved a similar sample complexity bound as our results under a weaker access model than ours, where the dependency on $\varepsilon$ is optimal, no polynomial dependency in $A$ is incurred, and only logarithmic dependency on $S$ is present. Refining both our results and those of \citet{dai2024refined} to achieve bounds completely independent of the state space size $S$, while maintaining favorable dependencies on $A$ and $\varepsilon$, remains a challenging task and is left as a valuable direction for future research.

\section{Preliminaries}
\paragraph{Notation} For a positive integer $m$, we use $[m]$ to denote $\{1,\dots,m\}.$ We write $a\lesssim b$ or $a = \tilde{O}(b)$ to denote $a\leq C\text{polylog}\big(A,m,\varepsilon^{-1},H, \log(1/\delta)\big) \cdot b$ for some absolute constant $C$.  We use $\lVert \cdot \rVert_2$ and $\lVert \cdot \rVert_\infty$ to denote the $\ell_2$ and $\ell_\infty$ norm. Given a finite set $I,$ we denote $\text{Unif}(I)$ the uniform distribution over $I$. 

\subsection{Markov Games}

We consider the finite horizon general-sum Markov games $
(\mathcal{S}, H, \{\mathcal{A}_i\}^m_{i=1}, \{\mathbb{P}_{h}\}_{h=1}^H, \{r_{h,i}\}_{h,i=1}^{H,m})$. Here, $\mathcal{S}$ is the state space, $H$ denotes the time horizon, and $\mathcal{A}_i$ stands for the action space of the $i$-th player. We let $\mathcal{A} = \prod_{i=1}^m\mathcal{A}_i$ be the joint action space and $\bm{a} = (a_1, a_2, \cdots, a_m) \in \mathcal{A}$ represent the joint action. Given $s\in \mathcal{S}$ and $\bm a\in \mathcal{A}$, $\mathbb{P}_h(\cdot|s,\bm{a})$ denotes the transition probability and $r_{h,i} (s,\bm a)\in [0, 1]$ denotes the deterministic reward received by the $i$-th player at time-step $h$. We denote $S:=|\mathcal{S}|, A_i:=|\mathcal{A}_i|, A:= \max_i A_i$ the cardinality of state and action spaces.
Throughout the paper, we assume the considered Markov games always start at some fixed initial state $s_1$.\footnote{This assumption can be easily generalized to the setting where the initial state is sampled from some fixed distribution $\mu$, as in \cite{cui2023breaking,jin2021v}.}

\textbf{Markov Policy.} In this work, we consider the learning of Markov policies. A Markov policy selects action depending on historical information only through the current state $s$ and time step $h$. The Markov policy of player $i$ can be represented as $\pi_i := \{\pi_{h,i}\}_{h\in [H]}$ with $\pi_{h,i}: \mathcal{S} \rightarrow \Delta(\mathcal{A}_i)$. The joint policy of all agents is denoted by $\pi = (\pi_1,\dots,\pi_m)$. For a joint policy $\pi$, we denote $\pi_{-i}$ the joint policy excluding the one of player $i$. 
For $\pi_i^{\prime}: \mathcal{S}\times [H] \to \Delta(\mathcal{A}_i)$, we use $\pi_i^{\prime} \times \pi_{-i}$ to describe the policy where all players except player $i$ execute the joint policy $\pi_{-i}$ while player $i$ independently deploys policy $\pi_i^{\prime}$.

\textbf{Value function.} For a policy $\pi$, the value function $V_{h,i}^{\pi}: \mathcal{S} \rightarrow \mathbb{R}$ of the $i$-th player under a Markov policy $\pi$ at step $h$ is defined as
\begin{align}
    V_{h,i}^{\pi}(s)=\mathbb{E}\big[\sum_{t=h}^Hr_{t,i}(s_t,\bm{a}_t)|s_h = s\big], \quad \forall s \in \mathcal{S},
\end{align}
where the expectation is taken over the state transition and the randomness of policy $\pi$.  The $V^{\pi}_{h,i}$ satisfies the \textit{Bellman equation}: 
\begin{equation}\label{eq-V-bellman}\begin{aligned}V^{\pi}_{h,i}(s) &= \mathbb{E}_{\bm a \sim \pi}[Q^{\pi}_{h,i}(s,\bm a)],\\
Q_{h,i}^{\pi}(s,\bm a): &= r_{h,i}(s,\bm a) + \mathbb{P}_{h} V_{h+1,i}^{\pi}(s,\bm a),\end{aligned}\end{equation}
where $\mathbb{P}_h V_{h+1,i}^{\pi}(s,\bm a): = \E_{s' \sim \mathbb{P}_h(\cdot \lvert s,\bm a)} [V_{h+1,i}^{\pi}(s')]$.

Given other players acting according to $\pi_{-i}$, the \textit{best response policy} of the $i$-th player is the policy independent of the randomness of $\pi_{-i}$ achieving $V^{\dagger,\pi_{-i}}_{h,i}(s) := \max_{\pi_i^\prime}V_{h,i}^{\pi_i^{\prime} \times \pi_{-i}}(s).$ With the dynamic satisfied similar to (\ref{eq-V-bellman}), 
\begin{align*}
    V_{h,i}^{\dagger, \tilde{\pi}_{-i}}(s) = \max_{a}\big\{ r_{h,i}^{\tilde{\pi}_{h,-i}}(s,a) + \mathbb{P}_{h}^{\tilde{\pi}_{-i}}V_{h+1,i}^{\dagger, \tilde{\pi}_{-i}}(s,a)\big\},
\end{align*}
and $\mathbb{P}_h ^{\tilde{\pi}_{-i}}V(s,a): = \E_{\bm a_{-i} \sim \tilde{\pi}_{-i}} [\E_{s' \sim \mathbb{P}_h(\cdot \lvert s,a,\bm{a}_{-i})}[V(s')]]$.

\textbf{Nash equilibrium(NE).}
A product Markov policy $\pi = \pi_1 \times \cdots \times \pi_m$ is a Markov Nash equilibrium at state $s_1$ if $V_{1,i}^{\pi}(s_1) = V_{1,i}^{\dagger,\pi_{-i}}(s_1), \forall i \in [m].$

\textbf{Coarse correlated equilibrium(CCE).} 
A joint Markov policy $\pi$ is a Markov CCE at a state $s_1$ if $V_{1,i}^{\pi}(s_1) \geq V_{1,i}^{\dagger,\pi_{-i}}(s_1), \forall i \in [m].$ In this paper, we study the efficient learning of an $\varepsilon$-Markov CCE policy $\pi$ satisfying: 
\begin{align}
\max\limits_{i\in[m]}\{V_{1,i}^{\dagger,\pi_{-i}}(s_1)-V_{1,i}^{\pi}(s_1)\} \le \varepsilon.
\end{align}

Obviously, for general-sum Markov games, a Markov NE is also a Markov CCE. Further more, in two player zero-sum games, NE and CCE are equivalent.  For multi-player general-sum Markov games, computing the NE is statistically intractable. Therefore, we resort to the weaker and more relaxed equilibrium CCE, which can be calculated in polynomial computational time for general-sum Markov games \cite{Papadimitriou2008}. Still, it might be challenging for finding such an optimal relaxed equilibrium. We consider the approximated sub-optimal notation, $\varepsilon$-Markov CCE. In this work, our goal is to compute an $\varepsilon$-Markov CCE for the game with as few samples as possible.

\subsection{RL with Different Sampling Protocols}
 Given Markov Games, the learner does not have access to the underlying transition probabilities $\{\mathbb{P}_h\}$ and the reward functions $\{r_{h,i}\}$, but is assumed access to a random simulator. Whenever the learner queries the simulator with $(s,\bm{a},h)\in \mathcal{S}\times\mathcal{A}\times [H]$, he receives an independent sample $s'$ drawn from $\mathbb{P}_h(\cdot \lvert s,\bm{a})$. Based on the accessible range of state-action pairs using a simulator, we clarify three different sampling protocols typically used in RL as in \cite{yin2022efficient}:

\textbf{Online Access.} The learner can only interact with the simulator (environment) in real-time, and the state can be either reset to an initial state or transit to the next state given the current state and an action.

\textbf{Local Access.} The learner can query the simulator with any previously visited state paired with an arbitrary action.

\textbf{Random Access. } The learner can query the simulator with arbitrary state-action pairs.
Note that the random access model is often referred to as the generative model in the RL literature \citep{zhang2020generative, li2022minimax,li2020generative}.

The online access protocol imposes the least stringent requirement for accessing the simulator, whereas random access is the most restrictive assumption. 
The local access assumption, which is the central focus of this paper, is stronger than the online access protocol but more practical than the random access assumption. 
It has been successfully applied in the design of large-scale RL algorithms for practical problems, as demonstrated by previous studies \citep{Yin2023SampleED,Tavakoli2020ExploringRD,Ecoffet2019GoExploreAN,lan2023can}. 
In this paper, we show that the local access assumption can lead to improved sample complexity bounds compared to the online access setting.

\subsection{Independent Function Approximation}

Throughout this paper, we make the following assumption about the Markov Games:
\begin{assumption}[$\nu$-misspecified independent linear  MDP]\label{assumption-MDP}
Given a policy class $\Pi$ of interest, each player $i$ is able to access a feature map $\phi_i: \mathcal{S} \times \mathcal{A}_i \to \mathbb{R}^{d}$ with $\max_{s\in \mathcal{S},a\in \mathcal{A}_i}\lVert \phi_i(s,a) \rVert_2 \leq 1$. And there exists some $\nu >0$ so that for any $h\in [H]$ and $V: \mathcal{S} \to [0,H+1-h]$,  \begin{align}\label{eq-linear-approximation}
    \sup_{\pi \in \Pi} \min_{\lVert \theta\rVert_2 \leq H \sqrt{d} } \big\lVert  Q^{\pi_{-i},V}_{h,i}(\cdot,\cdot)  - \phi_i(\cdot,\cdot)^\top \theta\big\rVert_\infty \leq \nu.
\end{align}
where $Q^{\pi_{-i},V}_{h,i}(s,a):=\mathbb{E}_{\bm{a}_{-i}\sim \pi_{h,-i}(\cdot \lvert s)}\big[r_{h,i}(s,a,\bm{a}_{-i})+\mathbb{E}_{s^{\prime} \sim \mathbb{P}_h(\cdot \lvert s,a,\bm{a}_{-i})}[V(s^{\prime})]\big]$ is the marginal $Q$ function associated with $V$.
\end{assumption}

Assumption~\ref{assumption-MDP} asserts that for any $i\in [m]$ and $\pi \in \Pi$, if all the other players act according to $\pi_{-i}$, then the $i$-th player's environment is approximately linear MDP. This assumption extends the widely used linear MDP assumption in single-agent RL to multi-agent settings.

Compared to the centralized approximation approach used in prior works \citep{chen2021almost,xie2020learning,CisnerosVelarde2023FinitesampleGF}, which employs a $d \propto S\prod_{i=1}^m A_i$-dimensional linear function class to approximate a global $Q$-function for tabular Markov games, the independent approximation framework presented in Assumption~\ref{assumption-MDP} allows for the representation of the same environment with individual $Q$-functions of dimensions $d \propto S A$. This assumption avoids the need for the considered function class to have complexity proportional to the exponential of the number of agents.

As in \cite{cui2023breaking,wang2023breaking}, we restrict \eqref{eq-linear-approximation} to a particular policy $\Pi$. As discussed in Appendix~D of \cite{wang2023breaking}, if \eqref{eq-linear-approximation} holds with $\nu = 0$ for all $\Pi$, then the MG is essentially tabular. 
Since our algorithm design does not require prior knowledge of $\Pi$ , we defer the discussion of the policy class $\Pi$ considered in this paper in Appendix~\ref{appendix-policy-class}.

%
\allowdisplaybreaks

\section{Algorithm and Guarantees for Independent Linear Markov Games}

In this section, we present the Lin-Confident-FTRL algorithm for learning $\varepsilon$-CCE with local access to the simulator.
We then provide the sample complexity guarantee for this algorithm.
\subsection{The Lin-Confident-FTRL Algorithm}

We now describe the Lin-Confident-FTRL algorithm (Algorithm~\ref{alg:Main}). 

Our algorithm design is based on the idea that each agent maintains a core set of state-action pairs. 
The algorithm consists of two phases: the \textbf{policy learning phase} and the \textbf{rollout checking phase}. 
In the policy learning phase, each agent performs decentralized policy learning based on his own core set. 
In the rollout checking phase, the algorithm performs rollout with the learned policy to ensure the trajectory of the policy is well covered within the core set of each player. 
We will provide a detailed explanation of these two phases in Sections~\ref{policy learning} and~\ref{rollout checking}, respectively.

Before the policy learning phase, the $m$ players draw a joint trajectory of states $s_1, \dots, s_H$ of length $H$ by independently sampling actions following a uniform policy. Each player then initializes distinct core sets $\{\mathcal{D}_{h,i}\}_{h=1}^H$ with this trajectory through an exploration subroutine described below (Algorithm~\ref{alg:explore}).

\textbf{Core set expansion through an exploration subroutine. }
During the Explore subroutine at time $h$ with input state $s$, each agent $i$ iteratively appends state action pairs $(s,a)$ to its core set $\mathcal{D}_{h,i}$ and update $\Lambda_{h,i}$ until the coverage condition $\max_{a \in \mathcal{A}_i}\phi_i(s,a)^\top \Lambda_{h,i}^{-1} \phi_i(s,a) \leq \tau$ is met at state $s$.
Here $\tau$ is a predetermined threshold and $\Lambda_{h,i}^{-1}$ is the precision matrix corresponding to $\mathcal{D}_{h,i}$.

Given $\Lambda_{h,i},$ we define 
\begin{equation}
    \mathcal{C}_{h,i}:=\{s\in \mathcal{S}: \max_{a \in \mathcal{A}_i}\phi_i(s,a)^\top \Lambda_{h,i}^{-1} \phi_i(s,a) \leq \tau\}\end{equation} 
as the set of well-covered states for agent $i$ at step $h$. We refer 
$\mathcal{C}_h:= \cap_{i}\mathcal{C}_{h,i}$ as the confident state set for all agents. Note that for the implementation of the algorithm, it is not necessary to compute $\mathcal{C}_h$. We introduced it merely for the sake of describing the algorithm conventionally. In fact, the only operation that involves $\mathcal{C}_h$ is to determine whether a state $s$ belongs to it, and this can be done using solely the information from $\Lambda_{h,i}$.

\begin{figure}[!t]
 \removelatexerror
\begin{algorithm}[H]
\label{alg:explore}  
\caption{Explore$(s,h)$}
\textbf{Input: }{state $s$, time-step $h$}\\
\For{$i = 1$ \KwTo $m$}{
  \While{$\max_{a \in \mathcal{A}_i}\phi_i(s,a)^\top \Lambda_{h,i}^{-1} \phi_i(s,a) > \tau $}  
  { $\hat{a}_i = \text{argmax}_{a \in {A}_i} \phi_i(s,a)^\top \Lambda_{h,i}^{-1} \phi_i(s,a)  $ \\
    $\mathcal{D}_{h,i} \leftarrow \mathcal{D}_{h,i}  \cup \{(s,\hat{a}_i )\} $  \\
    ${\Lambda}_{h,i} \leftarrow {\Lambda}_{h,i}+ \phi_i(s,\hat{a}_i )\phi_i(s,\hat{a}_i )^\top$\\    
    } ${\mathcal{C}}_{h,i} \leftarrow \{s\in S: \max_{a\in \mathcal{A}_i} \phi(s,a)^\top \Lambda_{h,i}^{-1} \phi(s,a) \leq \tau \} $\textcolor{blue}{// the well-covered state set}
}
  ${\mathcal{C}}_h \leftarrow \cap_{i} \mathcal{C}_{h,i}$ 
\end{algorithm}
\end{figure}

\begin{figure}[!t]
 \removelatexerror

\begin{algorithm}[H]
  \label{alg:Main}
  \caption{Lin-Confident-FTRL}
  \SetKwInOut{Input}{Input}
  \SetKwInOut{Output}{output}
  \SetAlgoLined
  \textbf{Initialize Global variables:}  $ \mathcal{C}_{H+1} = \mathcal{S},\mathcal{C}_h = \emptyset, \forall h \in [H]$  and \begin{align*}&\widehat{V}_{h,i} = H +1 - h, \hat{V}^{\dagger}_{h,i} = H +1 - h, \mathcal{D}_{h,i} = \emptyset,\\
  &\Lambda_{h,i} = \lambda I,\quad \pi^1_{h,i}(\cdot \lvert s) = \text{Unif}(A_i),\quad \forall s,i,h.\end{align*}
  
  Sample a trajectory $\{s_1, \cdots, s_H\}$ of length $H$ with policy $\pi^1_{h,i}$

  \For{$h = 1$ \KwTo $H$}{\text{Explore}($s_h,h$) \textcolor{blue}{//See Algorithm~\ref{alg:explore}} }
\textcolor{blue}{//Policy Learning Phase}\\
  \For{$h = H$ \KwTo $1$}{
    Success $\leftarrow$ Multi-Agent-Learning($h$) .\textcolor{blue}{//See Algorithm~\ref{alg:UC-linear}}\\
      \If{Success = False}{
        Back to Line~5
        \textcolor{blue}{//Restart the loop from $h= H$.}
      }
      }
    { $\hat{\pi}_{h}\leftarrow \frac{1}{K}\sum_{k=1}^K  {\pi}_{h,1}^k\times \dots \times {\pi}_{h,m}^k, \forall h\in [H]$.}\\
    
  \textcolor{blue}{//Rollout Checking Phase}

    Success $\leftarrow$ Policy Rollout($\hat\pi,s_1,N$)\textcolor{blue}{//See Algorithm~\ref{alg:rollout}}
    
    \If{Success = False}
    {Return to Line~5} 
    
    \For{$i\in [m]$}
    {
      \For{$h = H$ \KwTo $1$}
      {
        Success $\leftarrow$ Single-Agent-Learning($h$, $i$, $\hat\pi_{h, -i}$).\textcolor{blue}{//See Algorithm~\ref{alg:linear-single}}\\
        \If{Success = False}{
          Back to Line~5
        }}
    }

    \For{$i\in [m]$}{
    Success $\leftarrow$ Policy Rollout($\hat\pi^\dagger_i\times \hat{\pi}_{-i},s_1,N$)  \\
    \If{Success = False}
    {Return to Line~5} 
    }
    \Return{ $\{\hat{\pi}_{h}\}_{h \in [H]}$ } 
\end{algorithm}
\end{figure}

During the subsequent policy learning and rollout checking phases, whenever a state outside the confident set is encountered, the exploration subroutine will be triggered to expand the core set and the learning process will be restarted.
Actually we have the following result regarding the cardinality of $\mathcal{D}_{h,i}$:

 \begin{lemma}[\cite{yin2022efficient}]\label{lem-core-set} For each $i$ and $h$, the size of the core set $\mathcal{D}_{h,i}$ will not exceed { \begin{align*}
    C_{\max}: = \frac{e}{e-1}\frac{1+\tau}{\tau}d\big(\log(1+\frac{1}{\tau})+\log(1+\frac{1}{\lambda}) \big)
\end{align*}}
\end{lemma}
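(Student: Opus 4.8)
The plan is to bound the size of each core set $\mathcal{D}_{h,i}$ via a potential-function argument on the matrices $\Lambda_{h,i}$, exactly as in the elliptical potential / determinant-growth lemmas standard in the linear bandit and linear MDP literature. Since the lemma is attributed to \citet{yin2022efficient}, the proof should essentially recall that argument. First I would fix $i$ and $h$ and track how $\Lambda_{h,i}$ evolves: it starts at $\lambda I$, and every time a pair $(s,\hat a_i)$ is appended to $\mathcal{D}_{h,i}$ inside the \textbf{while} loop of Algorithm~\ref{alg:explore}, we have the rank-one update $\Lambda_{h,i} \leftarrow \Lambda_{h,i} + \phi_i(s,\hat a_i)\phi_i(s,\hat a_i)^\top$, and crucially this update is only performed when $\phi_i(s,\hat a_i)^\top \Lambda_{h,i}^{-1}\phi_i(s,\hat a_i) > \tau$, i.e. the newly added feature has large "uncertainty" under the current precision matrix.

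The key step is the determinant-growth estimate: by the matrix determinant lemma, each such update multiplies $\det(\Lambda_{h,i})$ by a factor $1 + \phi_i(s,\hat a_i)^\top \Lambda_{h,i}^{-1}\phi_i(s,\hat a_i) > 1+\tau$. Hence if $\mathcal{D}_{h,i}$ ever reaches size $n$, then $\det(\Lambda_{h,i}) \ge \lambda^d (1+\tau)^n$. On the other hand, since $\lVert \phi_i(s,a)\rVert_2 \le 1$, after $n$ updates the trace of $\Lambda_{h,i}$ is at most $\lambda d + n$, so by AM–GM on the eigenvalues $\det(\Lambda_{h,i}) \le \big(\lambda + n/d\big)^d$. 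Combining the two bounds gives $(1+\tau)^n \le (1 + n/(\lambda d))^d$, and taking logarithms yields $n \le \frac{d}{\log(1+\tau)}\log\!\big(1 + \tfrac{n}{\lambda d}\big)$. A standard elementary inequality of the form $x \le a\log(1+bx) \implies x \le 2a\log(2ab)$ (or the tighter self-bounding argument that produces the stated constant $\frac{e}{e-1}$) then converts this implicit bound into the explicit closed form $C_{\max} = \frac{e}{e-1}\cdot\frac{1+\tau}{\tau}\, d\big(\log(1+\tfrac1\tau) + \log(1+\tfrac1\lambda)\big)$; the factor $\frac{1+\tau}{\tau}$ and the two logarithmic terms come out naturally from using $\log(1+\tau)\ge \frac{\tau}{1+\tau}$ and splitting $\log(1+\tfrac{n}{\lambda d})$ appropriately once a crude bound $n = O(d\log(\cdot))$ is in hand.

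The main obstacle — really the only nontrivial point — is getting the constants to match exactly rather than up to an absolute factor: the clean determinant/trace sandwich immediately gives $n = \tilde O(d)$, but squeezing out precisely $\frac{e}{e-1}$ requires the standard trick of using the bound on $n$ inside itself (a self-bounding / fixed-point argument), using $\log(1+x)\le x$ in one place and $\frac{x}{1+x}\le \log(1+x)$ in another, and noting $\sup_{x>0}\frac{\log(1+x/c)}{x}$ type maximizations that produce the $\frac{e}{e-1}$ constant. Since this is verbatim the core-set size bound from \citet{yin2022efficient}, I would present the determinant-growth and trace steps in full and then either cite their Lemma for the final constant-chasing step or include it as a short self-contained computation. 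I do not expect any genuinely new difficulty here; it is a direct adaptation, and the per-agent, per-step decomposition means the bound holds uniformly over all $m$ players and $H$ steps with no extra loss.
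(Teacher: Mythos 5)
Your proposal is correct and takes essentially the same route as the source: the paper itself gives no proof of this lemma, citing \cite{yin2022efficient}, whose argument is exactly the determinant-growth/trace (elliptical potential) sandwich followed by the self-bounding constant-chasing step you describe. One small point worth noting is that $\Lambda_{h,i}$ is never reset across restarts of Algorithm~2, so your potential argument correctly applies to the full sequence of rank-one updates accumulated over all calls to the Explore subroutine.
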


As a corollary, both the number of calls to the Explore subroutine and the number of restarts are upper-bounded by $mHC_{\max}$.

\begin{figure}[!t]
 \removelatexerror
\begin{algorithm}[H]
  \label{alg:UC-linear}
  \caption{Multi-Agent-Learning}
  \SetAlgoLined
  \textbf{Input:}{ time-step $h$}\\
  \For{$k =1$ \KwTo $K$}{
    \For{$i = 1$ \KwTo $m$}{
      \For{$(\bar{s},\bar{a})\in \mathcal{D}_{h,i}$}{
      $(r, {s}') \leftarrow \text{local sampling}(h,i, \bar{s}, \bar{a}, \pi^k_{h,-i} )$ \textcolor{blue}{//See Algorithm~\ref{alg:LS}}\\
          \If{$ s' \notin \mathcal{C}_{h+1}$}{
          Explore$(s',h+1)$ \\
          \Return{False}    
          }     
     Compute $q_{h,i}^k(\bar{s},\bar{a}) = r + \hat{V}_{h+1,i}(s')$.  
    } 
    Update $Q^k_{h,i}(s,a)$ as in \eqref{eq-Q-update}.\\
    $\bar{{Q}}_{h,i}^k(s,a) \leftarrow \frac{k-1}{k}\bar{Q}_{h,i}^{k-1}(s,a)+\frac{1}{k}{Q}^k_{h,i}(s,a).$\\
     $\pi_{h,i}^{k+1}(a \lvert {s}) \leftarrow \frac{\exp(\eta_k \bar{Q}_{h,i}^k({s},a))}{\sum_{a'}\exp(\eta_k \bar{Q}_{h,i}^{k}({s},a'))}$.
     }}
     \textcolor{blue}{//Value estimation of $V_{h,i}^{\hat\pi}$ with $\hat{\pi}_h = \frac{1}{K}\sum_{k=1}^K \pi^k_{h,1}\times \dots \times \pi^k_{h,m}$}\\
 \For{$i = 1$ \KwTo $m$}{
Update $\hat{V}_{h,i}(s)$ as in \eqref{eq-V-est}
    }
\Return{True}
\end{algorithm}
\end{figure}

\begin{figure}[!t]
 \removelatexerror
\begin{algorithm}[H]
  \label{alg:LS}
  \caption{Local Sampling$(h, i, s, a, \pi_{-i})$}
  Draw an independent sample from the simulator:
  $$s' \sim \mathbb{P}_h(\cdot \lvert s, a, \bm{a}_{-i}),$$
  where $\bm{a}_{-i} \sim \pi_{h, -i}$\\
  \Return $(r_{h,i}(s, a,\bm{a}_{-i}), s')$ \textcolor{blue}{// the reward $\&$ transition pair given the sampled actions.}
\end{algorithm}
\end{figure}

\subsection{Policy Learning Phase}\label{policy learning}

After all agents have constructed the initial core sets based on the sampled trajectory,  they proceed to the policy learning phase by executing a multi-agent learning subroutine(Algorithm~\ref{alg:UC-linear}) recursively from $h = H$ to $h=1.$
To address the issue of multi-agency, we have incorporated the adaptive sampling strategy proposed in \cite{li2022minimax}, which operates under random access, into this subroutine. We have modified this approach by restricting the sampling to the core set of each agent $i$ instead of all $\mathcal{S}\times \mathcal{A}_i$ pairs. This is because our algorithm operates under local access and the core set provides enough information for efficient learning without revisiting unnecessary states and actions. 
 
\textbf{Multi-Agent Learning Subroutine. }
At the $k$-th iteration of Algorithm~\ref{alg:UC-linear}, each agent $i$ employs the local sampling subroutine (Algorithm~\ref{alg:LS}) over his core set, which returns a reward-state pair $(r, s')$. This design ensures that $q^k_{h,i}:= r+\hat{V}_{h+1,i}(s')$ provides a one-step estimation of $Q^{{\pi}_{-i}^k,\hat{V}_{h+1,i}}_{h,i}(s,a)$. 
If the estimators $q_{h,i}^k(\tilde{s},\tilde{a})$ are collected for all $(\bar{s},\bar{a})\in \mathcal{D}_{h,i}$ without restart, we proceed to update $Q_{h,i}^k$ via least square regression over the collected data:{ \small \begin{equation}\label{eq-Q-update}
    Q^k_{h,i}(s,a)=  \phi_i(s,a)^\top \Lambda_{h,i}^{-1} \sum_{(\tilde{s},\tilde{a})\in \mathcal{D}_{h,i}} \phi_i(\tilde{s},\tilde{a})q_{h,i}^k(\tilde{s},\tilde{a})
\end{equation}}
\vspace{-0.3cm}

and take policy iteration using the FTRL update \citep{Lattimore2020BanditA}, which has been widely adopted in the multi-agent game to break the curse of multi-agency \citep{li2022minimax,jin2021v,song2022learn}. After $K$ epochs, we obtain final policy $\{\pi^k_{h,i}\}_{k=1}^K$ and the estimated value {\small \begin{equation}\label{eq-V-est}
    \hat{V}_{h,i}(s) = \min\left\{ \frac{1}{K}\sum_{k=1}^{K}  \langle \pi_{h,i}^{k}, {Q}_{h,i}^{k}(s,\cdot) \rangle, H-h+1 \right\}
\end{equation} } under $\pi$ correspondingly.



\subsection{Rollout Checking Phase}\label{rollout checking}
If the policy $\hat\pi$ is learned without any restarts, then the Algorithm~\ref{alg:Main} will execute the final rollout checking procedure (Algorithm~\ref{alg:rollout}) to determine whether to output the learned policy $\hat\pi$ or not. Given any joint policy $\pi$, the rollout subroutine draws $N$ trajectories by employing $\pi$ for $N$ epochs. 
Whenever an uncertain state is met during the rollout routine, the algorithm will restart the policy learning phase with the updated confident set.
\begin{figure}[!t]
 \removelatexerror
\begin{algorithm}[H]
 \label{alg:rollout}
 \caption{Policy Rollout}

 \textbf{Input:}{ rollout policy $\pi$, initial state $s_1$, rollout times $N$ }  \\
 \For{$n \in [N]$ } 
 {
 Set $s'= s_1$\\
 \For{$h =1,\dots,H$ }
   {
       Sample $\bm a\sim {\pi}_{h}(s'),\quad s' \sim \mathbb{P}_h(\cdot \lvert s',\bm a)$.\\
       \If{$s'\notin \mathcal{C}_{h+1}$}
       {
         Explore$(s',h+1)$ \\  
      \Return {$\text{False}$}}
   }
}
 \Return {$\text{True}$}
\end{algorithm}
\end{figure}

\textbf{Necessity of rollout checking. } 
The rollout checking is necessary because the policy learning phase only considers information within $\mathcal{C}_h$, while the performance of a policy is determined by all the states encountered in its trajectory. Intuitively, the rollout subroutine ensures that the trajectory generated by the learned policy only contains states that are well covered by core sets, with a high probability.

Although the aforementioned rollout operation ensures that the trajectory of the joint policy $\hat{\pi}$ lies within well-covered states, this may not hold for best response policies. Specifically, for every player $i$ and their best response policy $\pi_i^\dagger$ given $\hat{\pi}_{-i}$, the trajectory of $\pi_i^\dagger \times \hat{\pi}_{-i}$ may lie outside of $\{\mathcal{C}_h\}$ with non-negligible probability. This motivates us to perform additional rollout for ${\pi}^{\dagger}_i \times \hat{\pi}_{-i}$. Since $\pi_i^\dagger$ is unknown without knowledge of the underlying transition kernels, we perform a single-agent learning subroutine to obtain an \textit{approximate best response} policy $\hat{\pi}_i^\dagger=\{\hat{\pi}_{h,i}^\dagger \}_{h \in [H]}$ and then take rollout for $\hat{\pi}_i^\dagger \times \hat{\pi}_{-i}$. As shown in the proof, dealing with these learned approximated best response policies is sufficient to provide the CCE guarantee for Algorithm~\ref{alg:Main}.

\begin{figure}[!t]
 \removelatexerror
\begin{algorithm}[H]
  \label{alg:linear-single}
  \SetKwInOut{Input}{Input}
  \SetKwInOut{Output}{output}
  \caption{Single-Agent-Learning}
  \SetAlgoLined
  \Input{time-step $h$,agent $i$, policy $\pi_{-i}$} 
      \For{$(\bar{s},\bar{a})\in \mathcal{D}_{h,i}$}{
      \For{$ k = 1$ \KwTo $K$ }{
      $(r, {s}') \leftarrow \text{local sampling}(h, i, \bar{s}, \bar{a}, \pi_{-i})$ \textcolor{blue}{//See Algorithm~\ref{alg:LS}}\\
          \If{$ s' \notin \mathcal{C}_{h+1}$}{
          Explore$(s',h+1)$ \\
          \Return{False}    
          }
    Compute $q_{h,i}^k(\bar{s},\bar{a}) = r + \hat{V}_{h+1,i}^\dagger(s')$.  
          }
    }
    $\hat{Q}_{h,i}^\dagger(s,a) \leftarrow \frac{1}{K}\phi_i(s,a)^\top \Lambda_{h,i}^{-1} \sum_{k=1}^K\sum_{(\tilde{s},\tilde{a})\in \mathcal{D}_{h,i}} \phi_i(\tilde{s},\tilde{a})q_{h,i}^k(\tilde{s},\tilde{a}) $\\
    $\hat{\pi}^{\dagger}_{h,i}(a\lvert \tilde s)\leftarrow \bm{1}\{ a = \text{argmax}Q_{h,i}^\dagger(s,\cdot)\}$ \\
$\hat{V}^{\dagger}_{h,i}(s)\leftarrow \max_{a} \hat{Q}^{\dagger}_{h,i}(s,a)$\\
\Return{True}
\end{algorithm}
\end{figure}

\vspace{-0.3cm}
\paragraph{Single-Agent Learning Subroutine.} 
To learn the best response for each agent $i$, we fix the other agents' policies and reduce the problem to a single-agent learning task. Specifically, we use Algorithm~\ref{alg:linear-single} to perform least squared value iteration backward in $h$. This subroutine can be seen as a finite-horizon version of the \textit{Confident-LSVI} algorithm proposed in \cite{hao2022confident} for single-agent learning under the local access model. Similar to other routines, the learning process restarts when encountering a new uncertain state. 


\subsection{Theoretical Results}

Now we state the theoretical result of Algorithm~\ref{alg:Main}, whose proof is deferred to Appendix~\ref{appendix-proof}.
\begin{theorem}\label{thm-local-access} Under Assumption~\ref{assumption-MDP}, Algorithm~\ref{alg:Main} with $N,K,\tau,\lambda = \text{Poly}(\log(S),d,H,\varepsilon^{-1},A)$, $\eta_k = \tilde{O}(k\min\{\sqrt{\log(S)/d} + \nu,1\}) ]^{-1} \text{Poly}(K,d,H))$  returns an $(\varepsilon +3\nu\sqrt{d}H)$-Markov CCE policy with probability at least $1-\delta$ with \\
i)
\vspace{-0.5cm}
\begin{align*}
   \tilde{O}\bigg(\frac{m^2d^3H^6}{\varepsilon^2} \min\big\{ d^{-1}{\log S}, A \big\}  \bigg) \end{align*}
query of samples under the local access model when $\min\big\{ d^{-1}{\log S}, A \big\} \leq \varepsilon^{-2}$,\\
ii)
\vspace{-0.5cm}
\begin{align*}
 \tilde{O}\bigg( m^2d^5H^{14}\varepsilon^{-6} \bigg)
\end{align*}
query of samples under the local access model when $\min\big\{ d^{-1}{\log S}, A \big\} > \varepsilon^{-2}$.

The detail of all parameter settings are leaved in Appendix~\ref{appendix-proof}.
\end{theorem}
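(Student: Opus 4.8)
The plan is to follow the by-now-standard template for analyzing ``confident'' local-access algorithms, but carried out simultaneously for $m$ agents with independent linear function approximation. First, I would establish a deterministic termination/complexity bound: by Lemma~\ref{lem-core-set}, each core set $\mathcal{D}_{h,i}$ has size at most $C_{\max} = \tilde{O}(d)$, and each time a restart is triggered the core set of some $(h,i)$ strictly grows, so the total number of restarts is at most $mHC_{\max} = \tilde{O}(mHd)$. Between two consecutive restarts the algorithm runs the Multi-Agent-Learning subroutine for each $h$ (costing $K$ iterations $\times$ $m$ agents $\times$ $|\mathcal{D}_{h,i}| \le C_{\max}$ samples), the Policy Rollout ($N$ trajectories of length $H$), the $m$ Single-Agent-Learning passes, and $m$ more rollouts. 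Multiplying these out gives the raw sample count $\tilde{O}\big((mHd)\cdot(KHmd + NH + mKHd + mNH)\big)$; the final bound then comes from plugging in the values of $K$, $N$, $\tau$, $\lambda$ chosen in the appendix (which are tuned to the two regimes $\min\{d^{-1}\log S, A\}\lessgtr \varepsilon^{-2}$). The key scaling input here is that $K = \tilde{O}(d^2 H^4 \varepsilon^{-2}\cdot\min\{d^{-1}\log S, A\})$-type choice controls the FTRL regret and the least-squares error, while $N$ must be large enough that a ``surviving'' rollout certifies coverage with the right confidence.

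Second, I would prove the statistical guarantee conditioned on the event that the algorithm terminates without restarting (i.e.\ the final learned policy and its rollouts all stay inside the confident sets). The heart of this is the \emph{virtual algorithm} coupling, generalized from \cite{hao2022confident,yin2022efficient}: for each agent $i$ and step $h$, define a virtual MDP in which, outside the confident set $\mathcal{C}_{h}$, transitions are redirected to an absorbing state, and argue that (a) on the good event the real and virtual trajectories coincide, so the learned $\bar{Q}^k_{h,i}$, $\hat{V}_{h,i}$ agree with their virtual counterparts, and (b) in the virtual MDP the least-squares estimate \eqref{eq-Q-update} over the core set has $\ell_\infty$ error $\tilde{O}(\sqrt{\tau}\,(\sqrt{\log S/d}+\nu)\cdot\text{poly}(d,H) + \text{stat.\ error from }K\text{ samples})$, uniformly over states, by the standard self-normalized / elliptical-potential argument combined with the misspecification bound in Assumption~\ref{assumption-MDP}. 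Here the factor $\min\{d^{-1}\log S, A\}$ enters precisely because the per-state concentration either pays a $\log S$ union bound over $\mathcal{S}$ or a covering/union argument of size $\text{poly}(A)$, whichever is smaller — this is the source of the $\min$ in the theorem.

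Third, with the per-step $Q$-estimation error controlled, I would run the standard FTRL-regret-to-CCE reduction. The FTRL update in Algorithm~\ref{alg:UC-linear} guarantees that for each agent $i$ and each well-covered state $s$, $\max_{a}\frac1K\sum_k \langle \mathbf{1}_a - \pi^k_{h,i}(\cdot|s), Q^k_{h,i}(s,\cdot)\rangle \le \tilde{O}(\sqrt{\log A_i/K})$ after choosing $\eta_k$ as specified; summing these local regret bounds backward over $h$ with the usual performance-difference / telescoping argument (using $\hat\pi_h = \frac1K\sum_k \pi^k_{h,1}\times\cdots\times\pi^k_{h,m}$) yields $V^{\dagger,\hat\pi_{-i}}_{1,i}(s_1) - V^{\hat\pi}_{1,i}(s_1) \le \varepsilon + O(\nu\sqrt d H)$, where the $\nu\sqrt d H$ term aggregates the $H$-fold misspecification contributions (the factor $\sqrt d$ coming from the $\|\theta\|_2\le H\sqrt d$ norm bound on the feature weights) and the best-response value $V^{\dagger,\hat\pi_{-i}}_{1,i}$ is what the Single-Agent-Learning subroutine's $\hat V^\dagger_{1,i}$ approximates — with its own rollout check ensuring its trajectory stays confident. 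The residual $2\nu\sqrt d H$ from the best-response estimate plus the $\nu\sqrt d H$ from the main policy gives the stated $3\nu\sqrt d H$ slack. Finally, one takes a union bound over all $\tilde{O}(mHd)$ restart epochs and all concentration events, absorbing the $\log(1/\delta)$ and $\text{polylog}$ factors, to get the $1-\delta$ success probability.

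The main obstacle I anticipate is the virtual-algorithm coupling across agents: unlike the single-agent case, the ``environment'' each agent $i$ learns against depends on the other agents' iterates $\pi^k_{h,-i}$, which are themselves random and changing, so one must carefully set up the virtual MDPs so that the coupling holds jointly and the misspecification bound in Assumption~\ref{assumption-MDP} (which is stated for a fixed policy class $\Pi$) applies to all the intermediate policies encountered — this is where the restriction to the policy class $\Pi$ and the precise definition of $\hat V^\dagger$ versus $\hat V$ must be handled with care, and it is also where the $\min\{d^{-1}\log S, A\}$ covering argument has to be threaded through without blowing up the dimension dependence.
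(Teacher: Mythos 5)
Your overall template --- restart counting via Lemma~\ref{lem-core-set}, a coupled virtual algorithm, least-squares error plus FTRL regret summed backward over $h$, the $\min\{d^{-1}\log S, A\}$ factor arising from ``union bound over $\mathcal{S}$ versus covering argument,'' rollout checks, and a union bound over the $\tilde{O}(mHd)$ epochs --- matches the paper's proof. However, your specific virtual construction (redirect transitions outside $\mathcal{C}_h$ to an absorbing state and argue the real and virtual runs coincide on the good event) has a genuine gap at exactly the point you flag as the ``main obstacle,'' namely the best-response benchmark. The CCE guarantee requires bounding the \emph{true-MDP} deviation value $V^{\dagger,\hat\pi_{-i}}_{1,i}(s_1)$, and the deviating player may deliberately steer the trajectory outside the confident sets, where neither your absorbing-MDP values nor the algorithm's (non-optimistic, restart-based) estimates say anything: the escape probabilities at arbitrary $(s,a)$ with $s\in\mathcal{C}_h$ are not controlled by the collected data, and the rollout of $\hat\pi^\dagger_i\times\hat\pi_{-i}$ only certifies coverage for the \emph{approximate} best response, which is circular unless you already know $\hat\pi^\dagger_i$ is near-optimal against $\hat\pi_{-i}$ in the true MDP. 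The paper resolves this differently: its virtual algorithm (Algorithm~\ref{alg:Virtual}, Section~\ref{sec-virtual-alg}) never truncates the MDP but instead \emph{keeps learning outside the confident sets}, maintaining complementary core sets $\tilde{\mathcal{D}}_{h,i}\setminus\mathcal{D}_{h,i}$ whose confident region is all of $\mathcal{S}$ (this is why it needs random access), so that the virtual FTRL policy and the virtual single-agent output are provably near-CCE / near-best-response in the true MDP over all states (Lemma~\ref{lem-single-agent}); the main algorithm is then bridged to it through the coupled rollouts and two further auxiliary runs (the ``quasi'' and ``virtual-II'' algorithms), giving \eqref{eq-Q-error}, \eqref{eq-dagger-hat-error} and \eqref{eq-dagger-hat-concentration}.

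A second, smaller issue: you propose to ``prove the statistical guarantee conditioned on the event that the algorithm terminates without restarting,'' but conditioning on no-restart biases the sampled transitions and breaks the martingale structure your concentration steps rely on. The paper avoids this by proving the per-epoch guarantee \eqref{eq-epoch-prob} unconditionally (via the filtration $\mathcal{F}^{\ell-1}$) for every one of the $mHC_{\max}$ virtual epochs and only afterwards identifying the final output with one of these epochs on the confident sets; your closing union-bound remark gestures at this but does not fix the conditioning in the main argument. The bookkeeping in your first paragraph is essentially right (and with the paper's choice $K=\tilde{O}(dH^4\varepsilon^{-2}\min\{d^{-1}\log S,A\})$, not $d^2H^4\varepsilon^{-2}(\cdot)$, it reproduces the stated $\tilde{O}(m^2d^3H^6\varepsilon^{-2}\min\{d^{-1}\log S,A\})$ bound, with the second regime following from $\tau=\tilde{O}(H^{-4}\varepsilon^2d^{-1})$ and $C_{\max}\propto d/\tau$), but without a virtual construction that controls values outside the confident region the core of the argument does not go through as proposed.
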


\vspace{-0.2cm}


When compared to previous works operating under online access, \cite{cui2023breaking} attains $(\varepsilon+ \nu H)$-CCE with $\tilde{O}(d^4H^{10}m^4\varepsilon^{-4})$ samples. Meanwhile, \cite{wang2023breaking} achieves $\varepsilon$-CCE with $\tilde{O}(d^4H^6m^2\max_i A_i^5\varepsilon^{-2})$ samples under the realizability assumption ($\nu = 0$). In the scenario where $\min\{d^{-1}\log S,A \} \leq \varepsilon^{-4}H^6$, our result improves the dependency on parameters $d, H, m, \max_i A_i$, and $\varepsilon$. Conversely, in scenarios with extremely large $S, A$ values, our bound $\tilde{O}(m^2d^5H^{14}\varepsilon^{-6})$ fall short of those in \cite{cui2023breaking}. 

\textbf{Tighter Complexity Bound with Random Access. } Since the policy output by Lin-Confident-FTRL is only updated based on the information of the shared confident state set, in the analysis of the algorithm, we need to construct virtual algorithms that connect to Lin-Confident-FTRL on the confident state and have strong guarantees outside the set. Note that the virtual algorithms are solely intended for analytical purposes and will not be implemented under the local access model. Unlike prior works on single-agent RL \citep{yin2022efficient,hao2022confident}, where an ideal virtual algorithm is constructed using population values of $Q$ functions, we develop our virtual algorithms in an implementable manner under the random access model. As a bonus of our virtual algorithm analysis, we derive an algorithm that can operate directly under the random access model with a tighter sample complexity. We would state the result formally as follows:
\begin{theorem}\label{thm-generative}
   Under Assumption~\ref{assumption-MDP}, there exists a decentralized algorithm under random access model that returns a joint policy achieving $(\varepsilon+ 3\nu\sqrt{d}H)$-CCE  with probability at least $1-\delta$ and  $\tilde{O}(\min\{\varepsilon^{-2}dH^2,\frac{\log(S)}{d}, A\}d^2H^5m\varepsilon^{-2})$ sample complexity bound. The details of the algorithm design are leaved in Appendix~\ref{appendix-generative}.
\end{theorem}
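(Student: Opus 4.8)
\textbf{Proof proposal for Theorem~\ref{thm-generative}.}

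The plan is to design a ``virtual'' version of Lin-Confident-FTRL that operates directly under the random access model, where the core-set bookkeeping and rollout-checking machinery can be dispensed with because every state-action pair is queryable. First I would observe that, under random access, one need not wait for states to be visited: for each $h$ and each agent $i$ one can build a fixed core set $\mathcal{D}_{h,i}$ with the same $\tau$-coverage property as in the Explore subroutine (Algorithm~\ref{alg:explore}), whose size is bounded by $C_{\max}$ from Lemma~\ref{lem-core-set}. Because the confident set $\mathcal{C}_h$ can be made to equal all of $\mathcal{S}$ up to the misspecification induced by $\tau$ (every state is $\tau$-covered in the relevant sense after a one-shot barycentric-spanner-type construction), there are no restarts and no rollout-checking phase. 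The algorithm is then: run the decentralized FTRL subroutine (the analogue of Algorithm~\ref{alg:UC-linear}) with the adaptive sampling strategy of \citet{li2022minimax} over each $\mathcal{D}_{h,i}$ for $K$ epochs, backward in $h$, forming $\widehat V_{h,i}$ via the least-squares estimate \eqref{eq-Q-update} and \eqref{eq-V-est}; this produces the CCE candidate $\hat\pi_h = \frac1K\sum_k \pi_{h,1}^k\times\cdots\times\pi_{h,m}^k$. A single-agent LSVI pass (the analogue of Algorithm~\ref{alg:linear-single}) gives the approximate best responses used for the CCE gap bound.

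The core of the analysis proceeds in three steps. Step one is the per-step regression error: conditioned on the collected samples, $Q_{h,i}^k$ is a ridge-regression estimate of $Q^{\pi_{-i}^k,\widehat V_{h+1,i}}_{h,i}$ from $|\mathcal{D}_{h,i}|\le C_{\max}$ one-step rollouts, so standard self-normalized concentration (Azuma--Hoeffding for the noise plus the $\tau$-coverage bound on $\phi_i^\top\Lambda_{h,i}^{-1}\phi_i$) gives $\|Q_{h,i}^k - Q^{\pi_{-i}^k,\widehat V_{h+1,i}}_{h,i}\|_\infty \lesssim \sqrt{d\tau}\,(\mathrm{noise\ scale}) + \nu\sqrt{d}$, where the $\nu\sqrt d$ term is the linear-misspecification contribution that eventually produces the $3\nu\sqrt d H$ slack in the theorem. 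Step two is the FTRL regret bound: since the FTRL iterates $\pi_{h,i}^{k+1}$ are exponential-weights updates against the running-average estimated $Q$'s, the standard analysis (as in \citet{li2022minimax,jin2021v}) yields that $\hat\pi_h$ is an approximate CCE of the ``one-step game'' with payoffs $\widehat V_{h+1,i}$, with per-step gap $O(\sqrt{\log A_i/K} + \mathrm{regression\ error})$; crucially, the adaptive sampling lets the epoch count $K$ rather than $K\cdot\max_i A_i$ control the sample budget, which is where the curse of multi-agency is avoided. Step three is the backward recursion / performance-difference argument: summing the per-step CCE gaps over $h\in[H]$ and propagating them through the value functions (using $\|\widehat V_{h,i}\|_\infty\le H$ and the best-response LSVI guarantee of the Confident-LSVI-type subroutine) gives $\max_i\{V_{1,i}^{\dagger,\hat\pi_{-i}}(s_1) - V_{1,i}^{\hat\pi}(s_1)\} \lesssim H\cdot(\text{per-step gap}) + 3\nu\sqrt d H$. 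Setting the per-step gap to $\varepsilon/H$ fixes $K = \tilde O(H^2\cdot\text{stuff})$ and $\tau = \tilde O(\varepsilon^2/(dH^{\cdot}))$, and multiplying the total query count $m\cdot H\cdot K\cdot C_{\max} = \tilde O(m H K d)$ through yields the claimed $\tilde O(\min\{\varepsilon^{-2}dH^2,\ d^{-1}\log S,\ A\}\,d^2H^5 m\,\varepsilon^{-2})$ bound; the three terms in the minimum come respectively from the option of taking $\tau$ large (so $C_{\max}\sim d$, with the $\varepsilon^{-2}dH^2$ extra factor absorbed into the per-step error budget), from bounding $C_{\max}$ via the $\log S$ covering argument, and from the trivial tabular-over-$\mathcal{A}_i$ alternative.

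The main obstacle I anticipate is coupling the FTRL regret analysis with the \emph{estimated} (rather than exact) value functions $\widehat V_{h+1,i}$ while keeping the dependence on $\varepsilon$, $d$, and $H$ tight. The subtlety is that the value estimate feeding step $h$ is itself the output of a biased, random least-squares procedure at step $h+1$, so the per-step errors compound across the horizon, and naive bounding loses factors of $H$ or $\sqrt d$; this is precisely what the ``virtual policy iteration'' device is meant to handle, by comparing the real iterates to a virtual run that uses clean $Q$-values on the confident set and controlling the discrepancy via the $\tau$-coverage inequality. A secondary technical point is making the single-agent best-response learning (Algorithm~\ref{alg:linear-single}) accurate enough that the rollout of $\hat\pi_i^\dagger\times\hat\pi_{-i}$ certifies the CCE gap without an extra $\sqrt{A}$ or $\varepsilon^{-1}$ factor — here one exploits that under random access the core set already covers all states, so the LSVI error is purely statistical and scales as $\sqrt{d\tau}$ per step. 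Assembling these pieces and optimizing $\tau$, $K$, $\lambda$, $\eta_k$ against the target accuracy $\varepsilon$ gives the stated sample complexity; the full parameter choices are deferred to Appendix~\ref{appendix-generative}.
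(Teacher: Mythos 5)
Your overall plan coincides with the paper's: under random access you pre-build, for each $(h,i)$, a fixed $\tau$-covering core set of size $\tilde O(d(1+\tau)/\tau)$, drop all restarts and rollout checking, run the decentralized FTRL/least-squares routine backward in $h$, and re-use the virtual-algorithm analysis (the $I^{(1)}/I^{(2)}$ decomposition behind \eqref{eq-epoch-prob}); the paper's Appendix~\ref{appendix-generative} does exactly this, only verifying the two core-set properties needed by Lemmas~\ref{lem-I1-bound} and~\ref{lem-I2-bound} and then choosing $(K,\tau)$. One deviation: the extra single-agent LSVI pass you add to produce ``approximate best responses used for the CCE gap bound'' is not part of the random-access algorithm and is not needed --- the best-response side is handled entirely inside the analysis, where the $I^{(1)}$ backward induction combines the Bellman-optimality backup with the FTRL regret; a learned $\hat\pi_i^\dagger$ only mattered under local access, where its rollout certified coverage of deviation trajectories. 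Keeping it is harmless for the order of the bound, but it does no work here.

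The genuine gap is in how you claim the three branches of the minimum arise; as stated, two of the three would not follow. Your ``step one'' bounds the \emph{per-iterate} error $\lVert Q^k_{h,i}-Q^{\pi^k_{-i},\hat V_{h+1,i}}_{h,i}\rVert_\infty\lesssim\sqrt{d\tau}\,(\text{noise})+\nu\sqrt d$, which does not decay in $K$; with $\tau\asymp 1$ and $C_{\max}\asymp d$ the $S$- and $A$-free error floor is $H^2\sqrt{\tau d}\asymp H^2\sqrt d$ no matter how large $K$ is, so the $\varepsilon^{-2}dH^2$ branch cannot be obtained by ``taking $\tau$ large (so $C_{\max}\sim d$)'': in the paper that branch comes from the opposite regime, $\tau=\tilde O(\varepsilon^2d^{-1}H^{-4})$, i.e.\ an enlarged core set of size $d/\tau\asymp d^2H^4\varepsilon^{-2}$ and sample count $KmHd/\tau$. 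Likewise $\log S$ has nothing to do with bounding $C_{\max}$ (which is always $\tilde O(d/\tau)$, independent of $S$); it enters through the concentration of the $K$-epoch \emph{average} of the one-sample TD targets: Freedman/Azuma plus a union bound over states gives $H\sqrt{\tau\log(S/\delta)/K}$, a covering argument over the policy-induced feature expectations (covering number bounded by $\lvert\mathcal N_\epsilon\rvert^{A}$, hence log-covering $\asymp dA$) gives $H\sqrt{\tau dA/K}$, and a plain self-normalized bound gives the $K$-free $H\sqrt{\tau d}$ --- this trichotomy (Lemmas~\ref{lem-J1-bound} and~\ref{lem-G2-bound}) is precisely what produces $\min\{d^{-1}\log S,\,A,\,\varepsilon^{-2}dH^{2}\}$ after choosing $\tau=1$, $K=\tilde O(H^4d\varepsilon^{-2}\min\{d^{-1}\log S,A\})$ in the first two cases and $K=\tilde O(H^4d\varepsilon^{-2})$, $\tau=\tilde O(\varepsilon^2d^{-1}H^{-4})$ in the third. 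Without this averaged-concentration step your sketch supports only the small-$\tau$ branch, and your final accounting $m\cdot H\cdot K\cdot C_{\max}=\tilde O(mHKd)$ with a single choice of $K$ (and the internally inconsistent statements about $\tau$) does not yield the claimed three-way minimum.
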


\vspace{-0.2cm}

Under the more restrictive random access protocol, Theorem~\ref{thm-generative} suggests that there is an algorithm with a more precise dependency on all parameters compared to previous results. This proposed algorithm can be viewed as an analogue to Algorithm~\ref{alg:Main}. However, due to the relaxed sampling protocol, there's no need for restarts, which results in a savings of a factor $mdH$ when $\{\log(S)/d,A\}<\varepsilon^{-2}$ and a savings of $\varepsilon^{-2}mdH$ when $\{\log(S)/d,A\}\geq \varepsilon^{-2}$. We conjecture that even under the local access protocol, a more refined algorithmic design can avoid this additional restart cost. One possible approach might be to incorporate the \textit{Confident Approximate Policy Iteration} method from \cite{weisz2022confident} into our setting. This remains an avenue for future exploration. 
Lastly, note that in the tabular case, with modifications to the FTRL step-size and value estimation formula, the algorithm proposed in Appendix~\ref{appendix-generative} coincides with the algorithm proposed in~\cite{li2022minimax}, which achieves the minimax optimal sample complexity. 

\textbf{Sample Complexity without the knowledge of $\nu$.} While our selection of the FTRL stepsize $\eta_k$ in Theorem~\ref{thm-local-access} requires prior knowledge of the misspecification error $\nu$. When $\nu$ is unknown, we can select $\eta_k = \tilde{O}(kK^{-1/2}H\sqrt{d})$ in Algorithm~\ref{alg:Main}. Then the algorithm is still guaranteed  to output a $(\varepsilon+3\nu\sqrt{d}H)$-Markov CCE with $\tilde{O}(m^2d^3H^{6}\varepsilon^{-2})$ samples.


\textbf{Decentralized Implementation and Communication Cost.} \quad 
While we describe Algorithm~\ref{alg:Main} and its subroutines in a centralized manner, we remark that  
it can be implemented in a decentralized manner with limited communication. More precisely, during the running of the algorithm, each agent only need to observe its own rewards and actions. And communication between agents only occurs during the initialization procedure and every time a restart occurs.
We will discuss the decentralized implementation in Appendix~\ref{appendix-computation} and show that the total communication complexity is bounded by $\tilde{O}(mdH)$, which is identical to that of the \textit{PReFI} algorithm proposed in \cite{cui2023breaking}  and the \textit{AVPLR} algorithm proposed in \cite{wang2023breaking}.\footnote{We remark here both \cite{cui2023breaking} and \cite{wang2023breaking} also propose other fully decentralized algorithms, but with worse sample complexity bound.}


\section{Conclusion}

In this work, we have considered multi-agent Markov games with independent linear function approximation within both the random and local access models. Our proposed algorithm, \textit{Linear-Confident-FTRL}, effectively mitigates the challenges associated with multi-agency and circumvents the dependency on the action space for regimes where $S \lesssim e^{d\max_i A_i}$. Additionally, our theoretical analysis has lead to the development of a novel algorithm that offers enhanced sample complexity bounds for independent linear Markov games in the random access model.
Several compelling questions remain open for exploration: 

The first is to investigate the independent function approximation setting under weaker realizability assumptions. Second, designing an algorithm to attain $O(\varepsilon^{-2})$ sample complexity without polynomial dependency on the action space $A$ and logarithmic dependency on the state space $S$ remains an unresolved challenge and an interesting direction for future research.

\section*{Acknowledgements}
Jian-Feng Cai is partially supported by Hong Kong Research Grant Council(RGC) GRFs 16310620, 16306821, and 16307023, Hong Kong Innovation and Technology Fund MHP/009/20 and the Project of Hetao Shenzhen-HKUST Innovation Cooperation Zone HZQB-KCZYB-2020083. Jiheng Zhang is  supported by RGC GRF 16214121. Jian-Feng Cai and Yang Xiang are supported by the Project of Hetao Shenzhen-HKUST Innovation Cooperation Zone HZQB-KCZYB-2020083. Yang Wang is supported by RGC CRF 8730063 and Hong Kong Center of AI, Robotics and Electronics (HK CARE) for Prefabricated Construction.

\bibliographystyle{apalike}
\bibliography{main}

\begin{thebibliography}{}

\bibitem[Bai et~al., 2020]{bai2020nearoptimal}
Bai, Y., Jin, C., and Yu, T. (2020).
\newblock Near-optimal reinforcement learning with self-play.
\newblock {\em arXiv preprint arXiv:2006.12007}.

\bibitem[Brown and Sandholm, 2019]{brown2019superhuman}
Brown, N. and Sandholm, T. (2019).
\newblock Superhuman ai for multiplayer poker.
\newblock {\em Science}, 365(6456):885--890.

\bibitem[Chen et~al., 2022]{chen2022unified}
Chen, F., Mei, S., and Bai, Y. (2022).
\newblock Unified algorithms for rl with decision-estimation coefficients: No-regret, pac, and reward-free learning.
\newblock {\em arXiv preprint arXiv:2209.11745}.

\bibitem[Chen et~al., 2021]{chen2021almost}
Chen, Z., Zhou, D., and Gu, Q. (2021).
\newblock Almost optimal algorithms for two-player markov games with linear function approximation.
\newblock {\em arXiv e-prints}, pages arXiv--2102.

\bibitem[Cisneros-Velarde and Koyejo, 2023]{CisnerosVelarde2023FinitesampleGF}
Cisneros-Velarde, P. and Koyejo, O. (2023).
\newblock Finite-sample guarantees for nash q-learning with linear function approximation.
\newblock {\em ArXiv}, abs/2303.00177.

\bibitem[Cui et~al., 2023]{cui2023breaking}
Cui, Q., Zhang, K., and Du, S.~S. (2023).
\newblock Breaking the curse of multiagents in a large state space: Rl in markov games with independent linear function approximation.
\newblock {\em arXiv preprint arXiv:2302.03673}.

\bibitem[Dai et~al., 2024]{dai2024refined}
Dai, Y., Cui, Q., and Du, S.~S. (2024).
\newblock Refined sample complexity for markov games with independent linear function approximation.
\newblock {\em arXiv preprint arXiv:2402.07082}.

\bibitem[Dai et~al., 2023]{dai2023refined}
Dai, Y., Luo, H., Wei, C.-Y., and Zimmert, J. (2023).
\newblock Refined regret for adversarial mdps with linear function approximation.
\newblock {\em arXiv preprint arXiv:2301.12942}.

\bibitem[Daskalakis et~al., 2022]{daskalakis2022complexity}
Daskalakis, C., Golowich, N., and Zhang, K. (2022).
\newblock The complexity of markov equilibrium in stochastic games.
\newblock {\em arXiv preprint arXiv:2204.03991}.

\bibitem[Du et~al., 2021]{du2021bilinear}
Du, S., Kakade, S., Lee, J., Lovett, S., Mahajan, G., Sun, W., and Wang, R. (2021).
\newblock Bilinear classes: A structural framework for provable generalization in rl.
\newblock In {\em International Conference on Machine Learning}, pages 2826--2836. PMLR.

\bibitem[Du et~al., 2019]{du2019good}
Du, S.~S., Kakade, S.~M., Wang, R., and Yang, L.~F. (2019).
\newblock Is a good representation sufficient for sample efficient reinforcement learning?
\newblock {\em arXiv preprint arXiv:1910.03016}.

\bibitem[Ecoffet et~al., 2019]{Ecoffet2019GoExploreAN}
Ecoffet, A., Huizinga, J., Lehman, J., Stanley, K.~O., and Clune, J. (2019).
\newblock Go-explore: a new approach for hard-exploration problems.
\newblock {\em ArXiv}, abs/1901.10995.

\bibitem[Foster et~al., 2021]{foster2021statistical}
Foster, D.~J., Kakade, S.~M., Qian, J., and Rakhlin, A. (2021).
\newblock The statistical complexity of interactive decision making.
\newblock {\em arXiv preprint arXiv:2112.13487}.

\bibitem[Gao and Pavel, 2017]{gao2017properties}
Gao, B. and Pavel, L. (2017).
\newblock On the properties of the softmax function with application in game theory and reinforcement learning.
\newblock {\em arXiv preprint arXiv:1704.00805}.

\bibitem[Hao et~al., 2022]{hao2022confident}
Hao, B., Lazic, N., Yin, D., Abbasi-Yadkori, Y., and Szepesvari, C. (2022).
\newblock Confident least square value iteration with local access to a simulator.
\newblock In {\em International Conference on Artificial Intelligence and Statistics}, pages 2420--2435. PMLR.

\bibitem[Huang et~al., 2021]{huang2021towards}
Huang, B., Lee, J.~D., Wang, Z., and Yang, Z. (2021).
\newblock Towards general function approximation in zero-sum markov games.
\newblock {\em arXiv preprint arXiv:2107.14702}.

\bibitem[Jiang et~al., 2017]{jiang2017contextual}
Jiang, N., Krishnamurthy, A., Agarwal, A., Langford, J., and Schapire, R.~E. (2017).
\newblock Contextual decision processes with low bellman rank are pac-learnable.
\newblock In {\em International Conference on Machine Learning}, pages 1704--1713. PMLR.

\bibitem[Jin et~al., 2021a]{jin2021bellman}
Jin, C., Liu, Q., and Miryoosefi, S. (2021a).
\newblock Bellman eluder dimension: New rich classes of rl problems, and sample-efficient algorithms.
\newblock {\em Advances in neural information processing systems}, 34:13406--13418.

\bibitem[Jin et~al., 2021b]{jin2021v}
Jin, C., Liu, Q., Wang, Y., and Yu, T. (2021b).
\newblock V-learning--a simple, efficient, decentralized algorithm for multiagent rl.
\newblock {\em arXiv preprint arXiv:2110.14555}.

\bibitem[Jin et~al., 2022]{jin2022power}
Jin, C., Liu, Q., and Yu, T. (2022).
\newblock The power of exploiter: Provable multi-agent rl in large state spaces.
\newblock In {\em International Conference on Machine Learning}, pages 10251--10279. PMLR.

\bibitem[Jin et~al., 2020]{jin2020provably}
Jin, C., Yang, Z., Wang, Z., and Jordan, M.~I. (2020).
\newblock Provably efficient reinforcement learning with linear function approximation.
\newblock In {\em Conference on Learning Theory}, pages 2137--2143. PMLR.

\bibitem[Lan et~al., 2023]{lan2023can}
Lan, L.-C., Zhang, H., and Hsieh, C.-J. (2023).
\newblock Can agents run relay race with strangers? generalization of {RL} to out-of-distribution trajectories.
\newblock In {\em The Eleventh International Conference on Learning Representations}.

\bibitem[Lattimore and Szepesvári, 2020]{Lattimore2020BanditA}
Lattimore, T. and Szepesvári, C. (2020).
\newblock {\em Bandit Algorithms}.
\newblock Cambridge University Press.

\bibitem[Li et~al., 2021]{li2021sample}
Li, G., Chen, Y., Chi, Y., Gu, Y., and Wei, Y. (2021).
\newblock Sample-efficient reinforcement learning is feasible for linearly realizable mdps with limited revisiting.
\newblock {\em Advances in Neural Information Processing Systems}, 34:16671--16685.

\bibitem[Li et~al., 2022]{li2022minimax}
Li, G., Chi, Y., Wei, Y., and Chen, Y. (2022).
\newblock Minimax-optimal multi-agent rl in markov games with a generative model.
\newblock In {\em Advances in Neural Information Processing Systems}.

\bibitem[Li et~al., 2020]{li2020generative}
Li, G., Wei, Y., Chi, Y., Gu, Y., and Chen, Y. (2020).
\newblock Breaking the sample size barrier in model-based reinforcement learning with a generative model.
\newblock In Larochelle, H., Ranzato, M., Hadsell, R., Balcan, M., and Lin, H., editors, {\em Advances in Neural Information Processing Systems}, volume~33, pages 12861--12872. Curran Associates, Inc.

\bibitem[Liu et~al., 2021]{liu2021sharp}
Liu, Q., Yu, T., Bai, Y., and Jin, C. (2021).
\newblock A sharp analysis of model-based reinforcement learning with self-play.
\newblock In {\em International Conference on Machine Learning}, pages 7001--7010. PMLR.

\bibitem[Mao et~al., 2022]{mao2022improving}
Mao, W., Yang, L.~F., Zhang, K., and Başar, T. (2022).
\newblock On improving model-free algorithms for decentralized multi-agent reinforcement learning.
\newblock {\em arXiv preprint arXiv:2110.05707}.

\bibitem[Matignon et~al., 2012]{matignon2012coordinated}
Matignon, L., Jeanpierre, L., and Mouaddib, A.-I. (2012).
\newblock Coordinated multi-robot exploration under communication constraints using decentralized markov decision processes.
\newblock In {\em Proceedings of the AAAI Conference on Artificial Intelligence}, volume~26, pages 2017--2023.

\bibitem[Ni et~al., 2022]{ni2022representation}
Ni, C., Song, Y., Zhang, X., Jin, C., and Wang, M. (2022).
\newblock Representation learning for general-sum low-rank markov games.
\newblock {\em arXiv preprint arXiv:2210.16976}.

\bibitem[Papadimitriou and Roughgarden, 2008]{Papadimitriou2008}
Papadimitriou, C.~H. and Roughgarden, T. (2008).
\newblock Computing correlated equilibria in multi-player games.
\newblock {\em J. ACM}, 55(3).

\bibitem[Shalev-Shwartz et~al., 2016]{shalev2016safe}
Shalev-Shwartz, S., Shammah, S., and Shashua, A. (2016).
\newblock Safe, multi-agent, reinforcement learning for autonomous driving (2016).
\newblock {\em arXiv preprint arXiv:1610.03295}.

\bibitem[Silver et~al., 2017]{silver2017mastering}
Silver, D., Schrittwieser, J., Simonyan, K., Antonoglou, I., Huang, A., Guez, A., Hubert, T., Baker, L., Lai, M., Bolton, A., et~al. (2017).
\newblock Mastering the game of go without human knowledge.
\newblock {\em nature}, 550(7676):354--359.

\bibitem[Song et~al., 2021]{song2022learn}
Song, Z., Mei, S., and Bai, Y. (2021).
\newblock When can we learn general-sum markov games with a large number of players sample-efficiently?
\newblock {\em arXiv preprint arXiv:2110.04184}.

\bibitem[Tavakoli et~al., 2020]{Tavakoli2020ExploringRD}
Tavakoli, A., Levdik, V., Islam, R., Smith, C.~M., and Kormushev, P. (2020).
\newblock Exploring restart distributions.
\newblock {\em arXiv: Learning}.

\bibitem[Tkachuk et~al., 2023]{tkachuk2023efficient}
Tkachuk, V., Bakhtiari, S.~A., Kirschner, J., Jusup, M., Bogunovic, I., and Szepesv{\'a}ri, C. (2023).
\newblock Efficient planning in combinatorial action spaces with applications to cooperative multi-agent reinforcement learning.
\newblock {\em arXiv preprint arXiv:2302.04376}.

\bibitem[Wang et~al., 2020]{wang2020reinforcement}
Wang, R., Salakhutdinov, R.~R., and Yang, L. (2020).
\newblock Reinforcement learning with general value function approximation: Provably efficient approach via bounded eluder dimension.
\newblock {\em Advances in Neural Information Processing Systems}, 33:6123--6135.

\bibitem[Wang et~al., 2023]{wang2023breaking}
Wang, Y., Liu, Q., Bai, Y., and Jin, C. (2023).
\newblock Breaking the curse of multiagency: Provably efficient decentralized multi-agent rl with function approximation.
\newblock {\em arXiv preprint arXiv:2302.06606}.

\bibitem[Wang et~al., 2021]{wang2021exponential}
Wang, Y., Wang, R., and Kakade, S. (2021).
\newblock An exponential lower bound for linearly realizable mdp with constant suboptimality gap.
\newblock {\em Advances in Neural Information Processing Systems}, 34:9521--9533.

\bibitem[Weisz et~al., 2021]{weisz2021query}
Weisz, G., Amortila, P., Janzer, B., Abbasi-Yadkori, Y., Jiang, N., and Szepesv{\'a}ri, C. (2021).
\newblock On query-efficient planning in mdps under linear realizability of the optimal state-value function.
\newblock In {\em Conference on Learning Theory}, pages 4355--4385.

\bibitem[Weisz et~al., 2022]{weisz2022confident}
Weisz, G., Gy{\"o}rgy, A., Kozuno, T., and Szepesv{\'a}ri, C. (2022).
\newblock Confident approximate policy iteration for efficient local planning in q-realizable mdps.
\newblock {\em arXiv preprint arXiv:2210.15755}.

\bibitem[Wen and Van~Roy, 2017]{wen2017efficient}
Wen, Z. and Van~Roy, B. (2017).
\newblock Efficient reinforcement learning in deterministic systems with value function generalization.
\newblock {\em Mathematics of Operations Research}, 42(3):762--782.

\bibitem[Xie et~al., 2020]{xie2020learning}
Xie, Q., Chen, Y., Wang, Z., and Yang, Z. (2020).
\newblock Learning zero-sum simultaneous-move markov games using function approximation and correlated equilibrium.
\newblock In {\em Conference on learning theory}, pages 3674--3682. PMLR.

\bibitem[Yang and Wang, 2020]{yang2020reinforcement}
Yang, L. and Wang, M. (2020).
\newblock Reinforcement learning in feature space: Matrix bandit, kernels, and regret bound.
\newblock In {\em International Conference on Machine Learning}, pages 10746--10756. PMLR.

\bibitem[Yin et~al., 2022]{yin2022efficient}
Yin, D., Hao, B., Abbasi-Yadkori, Y., Lazi{\'c}, N., and Szepesv{\'a}ri, C. (2022).
\newblock Efficient local planning with linear function approximation.
\newblock In {\em International Conference on Algorithmic Learning Theory}, pages 1165--1192. PMLR.

\bibitem[Yin et~al., 2023]{Yin2023SampleED}
Yin, D., Thiagarajan, S., Lazic, N., Rajaraman, N., Hao, B., and Szepesvari, C. (2023).
\newblock Sample efficient deep reinforcement learning via local planning.
\newblock {\em ArXiv}, abs/2301.12579.

\bibitem[Zanette et~al., 2020]{zanette2020learning}
Zanette, A., Lazaric, A., Kochenderfer, M., and Brunskill, E. (2020).
\newblock Learning near optimal policies with low inherent bellman error.
\newblock In {\em International Conference on Machine Learning}, pages 10978--10989. PMLR.

\bibitem[Zhang et~al., 2020a]{zhang2020generative}
Zhang, K., Kakade, S., Basar, T., and Yang, L. (2020a).
\newblock Model-based multi-agent rl in zero-sum markov games with near-optimal sample complexity.
\newblock In Larochelle, H., Ranzato, M., Hadsell, R., Balcan, M., and Lin, H., editors, {\em Advances in Neural Information Processing Systems}, volume~33, pages 1166--1178. Curran Associates, Inc.

\bibitem[Zhang et~al., 2020b]{zhang2020model}
Zhang, K., Kakade, S., Basar, T., and Yang, L. (2020b).
\newblock Model-based multi-agent rl in zero-sum markov games with near-optimal sample complexity.
\newblock {\em Advances in Neural Information Processing Systems}, 33:1166--1178.

\end{thebibliography}

\onecolumn

\newpage

\appendix

\section{Discussion on Policy Class}\label{appendix-policy-class}
As pointed in Appendix~D of \cite{wang2023breaking}, if the Assumption~\ref{assumption-MDP} holds for all possible policy $\pi$, then the underlying game must be essentially tabular game. Thus it would be necessary to discuss the range of policies where our assumption holds. Actually, to ensure Theorem~\ref{thm-generative} holds, we need only the Assumption~\ref{assumption-MDP} holds for the policy class defined as following: 
\begin{align}\label{eq-soft-max-class}
    \Pi &= \bigg\{\prod_{i=1}^m \pi_i: \pi_i(s,a) \propto \exp(- \eta \phi_i(s,a)^\top \theta_i ): \eta \geq \eta_0,\theta_i \in \mathbb{R}^d \bigg\}
    \end{align} 
The above soft-max policy class is similar to those considered in \cite{cui2023breaking}, and it also contains the argmax policy considered in \cite{wang2023breaking} as $\eta \to +\infty$.
However, we would remark that while the considered policy class are similar, the independent linear MDP assumption made in our Assumption~\ref{assumption-MDP} is strictly stronger than the assumptions made in \cite{cui2023breaking} and \cite{wang2023breaking}.

On the other hand, to ensure the result in Theorem~\ref{thm-local-access} holds, we need Assumption~\ref{assumption-MDP} holds for the following class, which is a bit complex than \eqref{eq-soft-max-class}:
\begin{align}\label{eq-soft-max-class-local}
    \Pi &= \left\{\prod_{i=1}^m \pi_i: \pi_i(s,a) \propto \left\{\begin{matrix}\exp(- \eta \phi_i(s,a)^\top \theta_i ),\quad s \in \mathcal{C}_i\\
    \exp(- \eta \phi_i(s,a)^\top \theta_i' ),\quad s \notin \mathcal{C}_i
    \end{matrix}\right.: \eta \geq \eta_0,\theta_i,\theta_i'\in \mathbb{R}^d, \mathcal{C}_i \subset \mathcal{S} \right\}
    \end{align} 
The class defined in \eqref{eq-soft-max-class-local} can be seen as an extension of \eqref{eq-soft-max-class} in the sense that when we divide the state space into two non-overlapping subsets, the policy is a soft-max policy over each subset. Although the policies generated by our main algorithm always lie in \eqref{eq-soft-max-class}, our results in the local access setting require Assumption~\ref{assumption-MDP} to hold over \eqref{eq-soft-max-class-local} due to technical reasons in our analysis. We believe that this assumption can be weakened, which we leave as a future direction.

\section{Discussion on the Communication Cost}\label{appendix-computation}
To discuss the communication cost, we would present the decentralized implementation of Algorithm~\ref{alg:Main}. During the implementation, only the knowledge of $\{\lvert \mathcal{D}_{h,i} \rvert\}_{h\in [H],i\in [m]}$ are need to known to each learner.
\begin{itemize}
    \item \textbf{At the beginning of the algorithm,} each agent independently keep the coreset $\mathcal{D}_{h,i} = \emptyset$ and share the same random seed. 
    \item \textbf{When the algorithm is restarted,} each agent will share the coreset size $\{\mathcal{D}_{h,i}\}_{h\in [H], i \in [m]}$.  Then for each agent $j$, until some agent $j$ meets a new state $s' \notin \mathcal{C}_{h,j}$ at some $h$, each agent can play action only with the knowledge of $\{\lvert \mathcal{D}_{h,i}\rvert \}_{i\in [m], h \in [H]}$ as the following:
     \begin{itemize} 
        \item To independently implement the line~7 to line~12 of the Algorithm~\ref{alg:Main}, it is sufficient to implement Algorithm~3 independently. In the inner loop of Algorithm~3 with the loop-index $k,i,\bar{s},\bar{a}$, the $j$-th agent play $\pi^k_{h,j}(\cdot \lvert \bar{s})$ when $i\neq j$ and play $\bar{a}$ when $i = j$. When $i = j$, the $j$-th agent will also update his policy and $Q,V$ functions as line~12 to line~19 in Algorithm~3.
        \item Line~13 can be implemented independently since they have communicated the shared random seed.
        \item To implement Line~15 of the algorithm, each agent $j$ just needs to play $\frac{1}{K}\sum_{k=1}^K\pi_{h,i}^k(\cdot \lvert s')$ with the shared random seed for $N$ epoches.
        \item To implement the loop in  Line~19 to Line~26 with loop index $i$ and inner loop index $\bar{s},\bar{a},k$ in Algorithm~6, the agent $j$ play action $\frac{1}{K}\sum_{k'=1}^K \pi(\cdot \lvert \bar{s})$ if $i\neq j$ and play $\bar{a}$ if $i = j$. When $i = j$, the $j$-th agent will also update his policy and $Q,V$ functions as line~11 to line~13 in Algorithm~5.
        \item The policy rollout loop in Line~28 of Algorithm~2 can be implemented in a similar way as in Line~15.
     \end{itemize}
    \item During the algorithm, \textbf{if some agent $j$ firstly meet some $s'\notin \mathcal{C}_{h+1,j}$, he will send the restarting signal to each agents}, after receiving such signal, each agent take the explore procedure in Algorithm~1 independently, take restart the learning procedure.    
\end{itemize}

In the above procedure, the communication only occurs in the initialization and restarting, thus is at most $\tilde{O}(mdH)$ times.

\section{Proof of Theorem~\ref{thm-local-access}}\label{appendix-proof}

\subsection{The Virtual Algorithm}\label{sec-virtual-alg}
 As demonstrated by \cite{hao2022confident} in the single-agent case, the core-set-based update only guarantees performance within the core sets. However, it is essential to consider the information outside the sets for comprehensive analysis. To address this, we introduce a virtual algorithm, Lin-Confident-FTRL-Virtual (Algorithm~\ref{alg:Virtual}), which employs the same update as Algorithm~\ref{alg:Main} within the core sets and also provides good performance guarantees outside them. It is important to note that this virtual algorithm is solely for analytical purposes and will not be implemented in practice. We explain how the virtual algorithm is coupled with the main algorithm and provide additional comments below.

  \textbf{No restart but update of the coreset}\quad  {We run Algorithm~\ref{alg:Virtual} for $mHC_{\text{max}}$ epochs. During each epoch, the virtual algorithm employs similar subroutines to the Algorithm~\ref{alg:Main} except that 
it does not halt and restart when encountering a new state $s'$ not in $\mathcal{C}_h$ during the iteration. Instead, it continues the $K$-step iteration and returns a policy. Upon initially encountering such a state $s'$, the algorithm explores the state and add a subset of $\{s'\}\times A$ to the core sets for use in the next epoch.

\begin{figure}[!t]
 \removelatexerror
\resizebox{0.95\textwidth}{!}{
\begin{algorithm}[H]
  \label{alg:Virtual}
  \caption{Lin-Confident-FTRL-Virtual}
  \SetKwInOut{Input}{Input}
  \SetKwInOut{Output}{output}
  \SetAlgoLined
  \textbf{Initialize Global variables:}  FirstMeet $=$ True, $ \mathcal{C}_{H+1} = \mathcal{S},\mathcal{C}_h = \emptyset, \forall h \in [H]$  and \begin{align*}&\tilde{V}_{h,i} = H +1 - h,\tilde{V}^{\dagger}_{h,i} = H +1 - h,  \mathcal{D}_{h,i} = \emptyset,\\
  &\Lambda_{h,i} = \lambda I,\quad \forall s,i,h.\end{align*}
  
  Sample the same trajectory $\{s_1, \cdots, s_H\}$ of length $H$ and obtain the same initialized core sets as Line $2$--$4$ from Algorithm~\ref{alg:Main}

\textcolor{blue}{//Policy Learning Phase}\\
\For {$l=1$ \KwTo $ mH\mathcal{C}_{max}$}{
FirstMeet $=$ True \textcolor{blue}{//The $l$-th epoch correspondes the $l$-th restart of Algorithm~\ref{alg:Main} }

  \For{$h = H$ \KwTo $1$}{
    Multi-Agent-Learning-Virtual($h$) .\textcolor{blue}{//See Algorithm~\ref{alg:UC-linear-Virtual}}\\
      }
   { $\tilde{\pi}_{h}\leftarrow \frac{1}{K}\sum_{k=1}^K  \tilde{\pi}_{h,1}^k\times \dots \times \tilde{\pi}_{h,m}^k, \forall h\in [H]$.}
    
  \textcolor{blue}{//Rollout Checking Phase}

     Policy-Rollout-Virtual($\tilde\pi,s_1,N$)\textcolor{blue}{//See Algorithm~\ref{alg:rollout-Virtual}}

    \For{$i\in [m]$}
    {
      \For{$h = H$ \KwTo $1$}
      {
        Single-Agent-Learning-Virtual($h$, $i$, $\tilde\pi_{h, -i}$).\textcolor{blue}{//See Algorithm~\ref{alg:linear-single-Virtual}}\\
      }
    }

    \For{$i\in [m]$}{
    Policy-Rollout-Virtual($\tilde\pi_i^\dagger \times \tilde{\pi}_{-i},s_1,N$)  \\
    
    }
    \Return{ $\{\tilde{\pi}_{h}\}_{h \in [H]}$ }}
\end{algorithm}}
\end{figure}

\vspace{-0.2cm}

   

    

   \textbf{Coupled Simulator}\quad The virtual algorithm is coupled with the main algorithm in the following way: before the discovery of a new state  in each epoch, the simulator in the virtual algorithm generates the same action from random policies and the same trajectory of transition as those of Algorithm~\ref{alg:Main}. This coupled dynamic, combined with the condition that core sets are updated only upon the initial encounter with a new state, ensures that at the start of the $l$-th restart, the core sets of Algorithm~\ref{alg:Main}
} are identical to those of the $l$-th epoch of the virtual algorithm. Additionally, since the virtual $Q$ function is updated in the same manner as the main algorithm for 
states in core sets, the virtual policy in the 
$l$-th epoch is equivalent to the main policy in core sets at the $l$-th restart before encountering the first uncertain state in that epoch. In particular, there exists some $1\leq \tau \leq C_{\max}$ such that the main policy is identical to the virtual policy for every $h$ and $s\in \mathcal{C}_{h}$.

\paragraph{Virtual policy iteration outside the core sets} Besides the core sets, the virtual algorithm maintains in addition a collection of complementary sets $\tilde{\mathcal{D}}_{h,i}\setminus \mathcal{D}_{h,i}$ satisfying the confident state set of $(\tilde{\mathcal{D}}_{h,i}\setminus \mathcal{D}_{h,i}) \cup {\mathcal{D}}_{h,i} $ is $S$.  Obviously $ \tilde{\mathcal{D}}_{h,i}$ is also measurable with respect to the information collected up to finishing $l$-th epoch. The virtual algorithm can 
 query the unvisited states on $\tilde{\mathcal{D}}_{h,i}\setminus \mathcal{D}_{h,i}$, which implies that the virtual algorithm has random access to the simulator. The virtual algorithm samples over $\tilde{\mathcal{D}}_{h,i}\setminus \mathcal{D}_{h,i}$ and perform least square regression to update $Q$ functions and virtual policies for states outside $\mathcal{C}_h$. 
 Note again that although the virtual algorithm is assumed random access to the simulator, it serves only as a means for analysis and is never implemented. 
\begin{algorithm}[H]
 \label{alg:rollout-Virtual}
 \caption{Policy-Rollout-Virtual}

 \textbf{Input:}{rollout policy $\pi$, initial state $s_1$, rollout times $N$ }  \\
 \For{$n \in [N]$}
 {
 Set $s'= s_1$\\
 \For{$h =1,\dots,H$ }
   { \If{FirstMeet = True}{
      Obtain the same $(\bm a, {s}')$  as the sampled pair from Line $5$ of Algorithm~\ref{alg:rollout} within the same restarting epoch of Algorithm~\ref{alg:Main}  \\
          \If{$ s' \notin \mathcal{C}_{h+1}$}{
          Explore$(s',h+1)$ \\
          \textit{FirstMeet} = \textit{False} 
          }}
          \Else{  Sample $\bm a\sim {\pi}_{h}(s'),\quad s' \sim \mathbb{P}_h(\cdot \lvert s',\bm a)$.}
     
}
}

\end{algorithm}

 \begin{algorithm}
  \label{alg:UC-linear-Virtual}
  \caption{Multi-Agent-Learning-Virtual}
  \SetAlgoLined
  \textbf{Input:}{time-step $h$}\\
  \textbf{Initialize:} $\tilde{\mathcal{D}}_{h,i},  \tilde{\Lambda}_{h,i} = \Lambda_{h,i}, i\in [m]$\\
  
  \For{$i=1$ \KwTo $m$ }{  \While{$\max_{(\tilde{s},a) \in \mathcal{S}\times\mathcal{A}_i}\phi_i(\tilde{s},a)^\top \tilde{\Lambda}_{h,i}^{-1} \phi_i(\tilde{s},a) > \tau $}  
  { $(\hat{s},\hat{a}_i) = \text{argmax}_{(\tilde{s},a) \in \mathcal{S}\times {A}_i} \phi_i(\tilde{s},a)^\top \tilde{\Lambda}_{h,i}^{-1} \phi_i(\tilde{s},a) $ \\
    $\tilde{\mathcal{D}}_{h,i} \leftarrow \tilde{\mathcal{D}}_{h,i}  \cup \{(\hat{s},\hat{a}_i )\} $  \\
    $\tilde{\Lambda}_{h,i} \leftarrow \tilde{\Lambda}_{h,i}+ \phi_i(\tilde{s},\hat{a}_i )\phi_i(\tilde{s},\hat{a}_i )^\top$\\    
    }}
  \For{$k =1$ \KwTo $K$}{
    \For{$i = 1$ \KwTo $m$}{
      \For{$(\bar{s},\bar{a})\in \mathcal{D}_{h,i}$}{ \If{FirstMeet = True}{
      Obtain the same $(r, {s}')$ as the sampled pair from Line $5$ of Algorithm~\ref{alg:UC-linear} within the same restarting epoch of Algorithm~\ref{alg:Main}  \\
          \If{$ s' \notin \mathcal{C}_{h+1}$}{
          Explore$(s', h+1)$ \\
          \textit{FirstMeet} = \textit{False} 
          }}
          \Else{ $(r, {s}') \leftarrow \text{local sampling}(h,i, \bar{s}, \bar{a}, \tilde{\pi}^k_{h,-i} )$}
     Compute $q_{h,i}^k(\bar{s},\bar{a}) = r + \tilde{V}_{h+1,i}(s')$.  
    } 
    $\tilde{Q}^k_{h,i}(s,a)\leftarrow \phi_i(s,a)^\top \Lambda_{h,i}^{-1} \sum_{(\tilde{s},\tilde{a})\in \mathcal{D}_{h,i}} \phi_i(\tilde{s},\tilde{a})q_{h,i}^k(\tilde{s},\tilde{a}) $.\\
    $\bar{{Q}}_{h,i}^k(s,a) \leftarrow \frac{k-1}{k}\bar{Q}_{h,i}^{k-1}(s,a)+\frac{1}{k}{\tilde{Q}}^k_{h,i}(s,a).$\\
     $\tilde{\pi}_{h,i}^{k+1}(a \lvert {s}) \leftarrow \frac{\exp(\eta_k \bar{Q}_{h,i}^k({s},a))}{\sum_{a'}\exp(\eta_k \bar{Q}_{h,i}^{k}({s},a'))}$.
     }
    \For{$i = 1$ \KwTo $m$}{
         \For{$(\bar{s},\bar{a})\in \tilde{\mathcal{D}}_{h,i}\setminus \mathcal{D}_{h,i}$}{
         $(r, {s}') \leftarrow \text{local sampling}(i, \bar{s}, \bar{a}, \tilde{\pi}^k_h )$ \textcolor{blue}{//See Algorithm~\ref{alg:LS}}\\
        Compute $q_{h,i}^k(\bar{s},\bar{a}) = r + \tilde{V}_{h+1,i}(s')$.  
       }
       $\tilde{Q}^k_{h,i}(s,a)\leftarrow \phi_i(s,a)^\top \Lambda_{h,i}^{-1} \sum_{(\tilde{s},\tilde{a})\in \mathcal{\tilde{D}}_{h,i}} \phi_i(\tilde{s},\tilde{a})q_{h,i}^k(\tilde{s},\tilde{a})$ \textcolor{blue}{for $s\in S\setminus \mathcal{C}_h$}.\\
       $\bar{{Q}}_{h,i}^k(s,a) \leftarrow \frac{k-1}{k}\bar{Q}_{h,i}^{k-1}(s,a)+\frac{1}{k}{\tilde{Q}}^k_{h,i}(s,a).$ \textcolor{blue}{for $s\in S\setminus \mathcal{C}_h$.}\\
       $\tilde{\pi}_{h,i}^{k+1}(a \lvert {s}) \leftarrow \frac{\exp(\eta_k \bar{Q}_{h,i}^k({s},a))}{\sum_{a'}\exp(\eta_k \bar{Q}_{h,i}^{k}({s},a'))}$  \textcolor{blue}{for $s\in S\setminus \mathcal{C}_h$}.
    }}
     \textcolor{blue}{//Value estimation of $\tilde{V}_{h,i}^{\tilde\pi}$ with $\tilde{\pi}_h = \frac{1}{K}\sum_{k=1}^K \tilde{\pi}^k_{h,1}\times \dots \times \tilde{\pi}^k_{h,m}$}\\
 \For{$i = 1$ \KwTo $m$}{
$\tilde{V}_{h,i}(s) \leftarrow \min\left\{ \frac{1}{K}\sum_{k=1}^{K}  \langle \tilde{\pi}_{h,i}^{k}, \tilde{Q}_{h,i}^{k}(s,\cdot) \rangle, H-h+1 \right\}$  
    }

\end{algorithm}

\begin{algorithm}
\label{alg:linear-single-Virtual}
  \SetKwInOut{Input}{Input}
  \SetKwInOut{Output}{output}
  \caption{Single-Agent-Learning-Virtual}
  \SetAlgoLined
  \Input{time-step $h$,agent $i$, policy $\pi_{-i}$} \textcolor{blue}{// Also inherit the coresets $\mathcal{D}_{h,i},\tilde{\mathcal{D}}_{h,i}$ from Algorithm~8}\\
      \For{$(\bar{s},\bar{a})\in \mathcal{D}_{h,i}$}{
      \For{$ k = 1$ \KwTo $K$ }{
       \If{FirstMeet = True}{
      Obtain the same $(r, {s}')$  as the sampled pair from Line $3$ of Algorithm~\ref{alg:linear-single} within the same restarting epoch of Algorithm~\ref{alg:Main}  \\
          \If{$ s' \notin \mathcal{C}_{h+1}$}{
          Explore$(s',h+1)$ \\
          \textit{FirstMeet} = \textit{False} 
          }}
          \Else{   $(r, {s}') \leftarrow \text{local sampling}(h, i, \bar{s}, \bar{a}, \pi_{-i})$ \textcolor{blue}{//See Algorithm~\ref{alg:LS}}\\}

    Compute $q_{h,i}^k(\bar{s},\bar{a}) = r + \tilde{V}_{h+1,i}^\dagger(s')$.  
          }
    }
    $\tilde{Q}_{h,i}^\dagger(s,a) \leftarrow \frac{1}{K}\phi_i(s,a)^\top \Lambda_{h,i}^{-1} \sum_{k=1}^K\sum_{(\tilde{s},\tilde{a})\in \mathcal{D}_{h,i}} \phi_i(\tilde{s},\tilde{a})q_{h,i}^k(\tilde{s},\tilde{a}) $\\
    $\tilde{\pi}^{\dagger}_{h,i}(a\lvert \tilde s)\leftarrow \bm{1}\{ a = 
    \text{argmax}\tilde{Q}_{h,i}^\dagger(s,\cdot)\}$ \\
$\tilde{V}^{\dagger}_{h,i}(s)\leftarrow \max_{a} \tilde{Q}^{\dagger}_{h,i}(s,a)$\\
\textbf{repeat} the loop in line~2 with $\tilde{\mathcal{D}}_{h,i}\setminus \mathcal{D}_{h,i}$ and update $\tilde{Q}_{h,i}^\dagger,\tilde{\pi}_{h,i}^\dagger,\tilde{V}_{h,i}^\dagger$ using the collected data as in line~17 to line~19 \textcolor{blue}{ over $s\in S\setminus \mathcal{C}_h$}.
\end{algorithm}

\subsection{Analysis of the Virtual Algorithm}

For any $1\leq \ell\leq mHC_{\max}$, denote $\mathcal{F}^{\ell - 1}$ the $\sigma$-algebra generated by all actions and transitions before the $\ell$-th epoch. If it holds that conditioned on $\mathcal{F}^{\ell-1}$, the policy $\{\tilde{\pi}_{h}^{\ell}\}_{h\in [H]}$ outputted by the $\ell$-th epoch satisfies \begin{align}\label{eq-epoch-prob}
  \mathbb{P}\big( V_{1,i}^{\dagger, \tilde{\pi}_{-i}^\ell} - V_{1,i}^{\tilde{\pi}^\ell} \gtrsim H\nu \sqrt{d}+ H^2 \sqrt{\tau}\big(\sqrt{\frac{ \log(S) }{K }} \wedge \sqrt{ d(\frac{A}{K} \wedge 1 )}\big) +\gamma H^2 \sqrt{\frac{2\log A_i}{K}}\big) \leq \frac{\delta}{mHC_{\max}}.
\end{align}
Then it holds that \begin{align*}
  &\mathbb{P}\big( V_{1,i}^{\dagger, \tilde{\pi}_{-i}^\ell} - V_{1,i}^{\tilde{\pi}^\ell} \gtrsim H\nu \sqrt{d}+ H^2 \sqrt{\tau}\big(\sqrt{\frac{ \log(S) }{K }} \wedge \sqrt{ d(\frac{A}{K} \wedge 1 )} + \big) +\gamma H^2 \sqrt{\frac{2\log A_i}{K}}, \exists 1\leq \ell \leq mHC_{\max} \big)\\
  \leq& \E[ \sum_{\ell =1}^{mHC_{\max}} \bm{1}\{V_{1,i}^{\dagger, \tilde{\pi}_{-i}^\ell} - V_{1,i}^{\tilde{\pi}^\ell} \gtrsim H\nu \sqrt{d}+ H^2 \sqrt{\tau}\big(\sqrt{\frac{ \log(S) }{K }} \wedge \sqrt{ d(\frac{A}{K} \wedge 1 )}\big) + \gamma H^2 \sqrt{\frac{2\log A_i}{K}}\}  ]\\
  =& \E[ \sum_{\ell =1}^{mHC_{\max}}\E[\bm{1}\{V_{1,i}^{\dagger, \tilde{\pi}_{-i}^\ell} - V_{1,i}^{\tilde{\pi}^\ell} \gtrsim H\nu \sqrt{d}+ H^2 \sqrt{\tau}\big(\sqrt{\frac{ \log(S) }{K }} \wedge \sqrt{ d(\frac{A}{K} \wedge 1 )}\big) + \gamma H^2 \sqrt{\frac{2\log A_i}{K}}\} \lvert \mathcal{F}^{\ell -1}] ]\leq \delta.
\end{align*}

Then Let $H^2 \sqrt{\tau}\big(\sqrt{\frac{\log(S)}{K }} \wedge \sqrt{ d(\frac{A}{K} \wedge 1 )} + \gamma H^2 \sqrt{\frac{2\log A_i}{K}} \lesssim \varepsilon$, we have with probability at least $1 - \delta$,
\begin{align*}
    V_{1,i}^{\dagger, \tilde{\pi}_{-i}^\ell} - V_{1,i}^{\tilde{\pi}^\ell} \le \varepsilon +3 H\nu\sqrt{d}, \quad 1 \leq l \leq mHC_{\max}.
\end{align*}

where the coefficient $3$ for $H\nu \sqrt{d}$ is from combining \eqref{eq-I1-bound} and \eqref{eq-I2-bound}. And we select corresponding parameters $K, \tau$ here for $H^2 \sqrt{\tau}\big(\sqrt{\frac{ \log(S) }{K }} \wedge \sqrt{ d(\frac{A}{K} \wedge 1 )} + \gamma H^2 \sqrt{\frac{2\log A_i}{K}} \lesssim \varepsilon$.

When $\min\{{d^{-1}\log{S}, A}\} \le \varepsilon^{-2}$, select $K = \tilde{O}\big(H^4d\varepsilon^{-2} \min\{{d^{-1}\log{S}, A}\}\big)$, $\tau = 1$, then it holds
\begin{align*}
   H^2 \sqrt{\tau}\big(\sqrt{\frac{ \log(S) }{K }} \wedge \sqrt{ d(\frac{A}{K} \wedge 1 )} + \gamma H^2 \sqrt{\frac{2\log A_i}{K}} \lesssim \varepsilon + c H\nu \sqrt{d}.
\end{align*}
The corresponding query of samples is 
$m^2H^2KC_{\max}^2 = \tilde{O}\big(m^2H^6d^3\varepsilon^{-2}\min\{{d^{-1}\log{S}, A}\}\big).$

When $\min\{{d^{-1}\log{S}, A}\} > \varepsilon^{-2}$, select $K = \tilde{O}\big(H^4d\varepsilon^{-2}\big)$, $\tau = \tilde{O}\big(H^{-4}\varepsilon^{2}d^{-1}\big)$, then it holds
\begin{align*}
   H^2 \sqrt{\tau}\big(\sqrt{\frac{ \log(S) }{K }} \wedge \sqrt{ d(\frac{A}{K} \wedge 1 )} + \gamma H^2 \sqrt{\frac{2\log A_i}{K}} 
   & \le H^2\sqrt{\tau d} + \gamma H^2 \sqrt{\frac{2\log A_i}{K}}\\
   & \lesssim H^2\sqrt{\tau d} + H^2 \sqrt{\frac{d}{K}} \\
   & \lesssim \varepsilon.
\end{align*}
And the corresponding query of samples is $m^2H^2KC_{\max}^2 = \tilde{O}\big(m^2H^{14}d^5\varepsilon^{-6}\big).$

Thus it is sufficient to prove \eqref{eq-epoch-prob} for every fixed $\ell$. For simplicity of the notation, we omit the index $\ell$ in the followed analysis.

\subsubsection{Proof of~\eqref{eq-epoch-prob}}
We recall the following notations:
\begin{align*}
{V}^{\tilde\pi}_{h,i} &= \mathbb{E}_{\bm a \sim \tilde\pi}[Q^{\tilde\pi}_{h,i}(s,\bm a)],\\
V_{h,i}^{\dagger, \tilde{\pi}_{-i}}(s) &= \max_{a}\big\{ r_{h,i}^{\tilde{\pi}_{h,-i}}(s,a) + \mathbb{P}_{h}^{\tilde{\pi}_{-i}}V_{h+1,i}^{\dagger, \tilde{\pi}_{-i}}(s,a)\big\}, \quad \text{with} ~V_{H+1,i}^{\dagger, \tilde{\pi}_{-i}} = 0\\
P_{h}^{\tilde{\pi}_{-i}}V(s,a) &=\E_{\bm a_{-i}\sim \tilde{\pi}_{-i}}[ \E_{s' \sim \mathbb{P}_h(\cdot \lvert s,a,\bm{a}_{-i})}[V(s')]]\\
r_{h,i}^{\tilde{\pi}_{h,-i}}(s,a) &= \mathbb{E}_{\bm{a}_{-i}\sim \tilde\pi_{h,-i}(\cdot \lvert s)}\big[r_{h,i}(s,a,\bm{a}_{-i}) \big],\\
\tilde{V}_{h,i}(s) &= \min\left\{\frac{1}{K}\sum_{k=1}^K \E_{\bm a\sim \tilde{\pi}^k_{h,i}}[\tilde{Q}_{h,i}^k(s,\bm a)], H-h+1 \right\}, \\
Q^{\tilde\pi_{h,-i}^k,\tilde{V}_{h+1,i}}_{h,i}(s,a) &=\mathbb{E}_{\bm{a}_{-i}\sim \tilde\pi^k_{h,-i}(\cdot \lvert s)}\big[r_{h,i}(s,a,\bm{a}_{-i})+\mathbb{E}_{s^{\prime} \sim \mathbb{P}_h(\cdot \lvert s,a,\bm{a}_{-i})}[\tilde{V}_{h+1,i}(s^{\prime})]\big] \\
\end{align*}

We begin with the following decomposition for each $h\in [H]$:
  \begin{align*}
    {V}^{\dagger, \tilde\pi_{-i}}_{h,i} - {V}^{\tilde\pi}_{h,i} 
    &= {V}^{\dagger,\tilde\pi_{-i}}_{h,i} - {V}^{\tilde\pi}_{h,i} \pm \tilde{V}_{h,i} \\
    &= \underbrace{{V}^{\dagger, \tilde\pi_{-i}}_{h,i} - \tilde{V}_{h,i}}_{I_{h,i}^{(1)}} + \underbrace{\tilde{V}_{h,i} - {V}^{\tilde\pi}_{h,i}}_{I_{h,i}^{(2)}}.
 \end{align*}

We deal with $I_{h,i}^{(1)}$ and $I_{h,i}^{(2)}$ separately in the following two subsections:

\begin{lemma}\label{lem-I1-bound}
  For every $i\in [m], h \in [H]$, we have, with high probability, \begin{equation}\label{eq-I1-bound}
    I_{1,i}^{(1)}\lesssim  H^2\sqrt{\frac{\tau \log(1/\delta)}{K}\min\{d,\log S\} } + 2H\nu \sqrt{d} + \gamma H^2 \sqrt{\frac{2\log A_i}{K}}.
  \end{equation}
  \end{lemma}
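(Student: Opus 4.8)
\textbf{Proof proposal for Lemma~\ref{lem-I1-bound}.}

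The plan is to bound $I_{1,i}^{(1)} = V_{1,i}^{\dagger,\tilde\pi_{-i}} - \tilde V_{1,i}$ by unrolling the Bellman recursion for the best-response value $V_{h,i}^{\dagger,\tilde\pi_{-i}}$ against the virtual value iterates $\tilde V_{h,i}$, and controlling the per-step error. First I would recall that $\tilde V_{h,i}(s) = \min\{\frac1K\sum_k \langle \tilde\pi_{h,i}^k, \tilde Q_{h,i}^k(s,\cdot)\rangle, H-h+1\}$ and that, because the virtual algorithm augments the core set so that $\tilde{\mathcal D}_{h,i}$ has confident-state set equal to all of $\mathcal S$, the least-squares estimate $\tilde Q_{h,i}^k(s,a) = \phi_i(s,a)^\top \Lambda_{h,i}^{-1}\sum_{(\tilde s,\tilde a)} \phi_i(\tilde s,\tilde a)q_{h,i}^k(\tilde s,\tilde a)$ is valid with leverage at most $\tau$ uniformly in $s$. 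The key one-step lemma is that for every $s$ and $a$,
\begin{align*}
\Big| \tilde Q_{h,i}^k(s,a) - Q_{h,i}^{\tilde\pi_{h,-i}^k,\tilde V_{h+1,i}}(s,a)\Big| \lesssim \nu\sqrt{d} + H\sqrt{\tau}\cdot\big(\text{martingale deviation term}\big),
\end{align*}
obtained by splitting into (i) a misspecification bias — using Assumption~\ref{assumption-MDP} with $V = \tilde V_{h+1,i}$, which gives an $O(\nu)$ per-coordinate error amplified by $\|\phi_i(s,a)^\top \Lambda_{h,i}^{-1}\sum \phi_i\phi_i^\top\|$-type bounds to $O(\nu\sqrt d)$ — and (ii) a zero-mean noise term $q_{h,i}^k(\tilde s,\tilde a) - Q_{h,i}^{\tilde\pi_{h,-i}^k,\tilde V_{h+1,i}}(\tilde s,\tilde a)$ which is conditionally centered and bounded by $H$.

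Next I would handle the noise term (ii) by a self-normalized/Azuma concentration argument. Averaging over $k=1,\dots,K$ inside $\tilde V$ is what produces the $1/\sqrt K$ saving. The two alternatives inside the minimum, $\sqrt{\log S/K}$ versus $\sqrt{d(A/K\wedge 1)}$, come from two ways of taking a union bound over states: either directly over the (at most $S$, or effectively $\log S$ after a covering/Freedman argument on the $d$-dimensional estimator) states, giving the $\sqrt{\log S}$ factor, or over the core set of size $O(d)$ with each core-set sample being one of at most $A$ actions, giving $\sqrt{dA}$ (and capped at $\sqrt d$ since leverage is always $\le\tau$). I would then combine: writing $\delta_{h,i}(s) := V_{h,i}^{\dagger,\tilde\pi_{-i}}(s) - \tilde V_{h,i}(s)$, the Bellman equations give
\begin{align*}
\delta_{h,i}(s) \le \max_a\Big\{ r_{h,i}^{\tilde\pi_{h,-i}}(s,a) + \mathbb P_h^{\tilde\pi_{-i}}V_{h+1,i}^{\dagger,\tilde\pi_{-i}}(s,a)\Big\} - \tfrac1K\textstyle\sum_k\langle\tilde\pi_{h,i}^k(s,\cdot),\tilde Q_{h,i}^k(s,\cdot)\rangle,
\end{align*}
and I insert $\pm \frac1K\sum_k\langle\tilde\pi_{h,i}^k, Q_{h,i}^{\tilde\pi_{h,-i}^k,\tilde V_{h+1,i}}(s,\cdot)\rangle$. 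The difference between the $\max_a$ against the best response and the $\frac1K\sum_k\langle\tilde\pi_{h,i}^k,\cdot\rangle$ of the \emph{same-structure} $Q$ associated with $\tilde V_{h+1,i}$ is exactly a single-state online-learning regret of FTRL, bounded by $\gamma H\sqrt{2\log A_i/K}$ per step (with $\gamma$ the clipping level on $\tilde Q$); summed over $H$ steps this yields the $\gamma H^2\sqrt{2\log A_i/K}$ term. The remaining $\mathbb P_h^{\tilde\pi_{-i}}\delta_{h+1,i}$ piece propagates the error recursively, and the estimation error from replacing $Q_{h,i}^{\tilde\pi_{h,-i}^k,\tilde V_{h+1,i}}$ by $\tilde Q_{h,i}^k$ contributes the $\nu\sqrt d$ and $\sqrt{\tau/K}\cdot\sqrt{\min\{d,\log S\}}$ terms at each of the $H$ levels, giving the stated $H^2\sqrt{\tau\log(1/\delta)\min\{d,\log S\}/K}$ and $2H\nu\sqrt d$ (the factor $2$ absorbing both the direct bias and the bias that enters through $\tilde V_{h+1,i}$ being itself misspecified).

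The main obstacle I anticipate is the concentration step with the $\log S$ versus $dA$ dichotomy done \emph{uniformly over all $s\in\mathcal S$}, including states outside $\mathcal C_h$: the virtual algorithm's estimator $\tilde Q_{h,i}^k(s,\cdot)$ at an arbitrary $s$ is a fixed linear functional of the core-set noise, so one needs a covering argument over the unit ball of features (or the realization that only the $d$-dimensional random vector $\sum_{(\tilde s,\tilde a)}\phi_i(\tilde s,\tilde a)\,\varepsilon_{h,i}^k(\tilde s,\tilde a)$ matters, of effective dimension $d$ giving $\log S$ only through an ambient-cardinality union bound, whichever is smaller). Carefully tracking that the FTRL regret bound applies per-state with the adaptively-chosen, clipped $\tilde Q_{h,i}^k$ sequence — and that the clip level $\gamma$ is consistent with the magnitude bounds used elsewhere — is the delicate bookkeeping; I would isolate the one-step concentration as a standalone lemma and then the recursion is a routine induction on $h$ from $H$ down to $1$.
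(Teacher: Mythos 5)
Your proposal follows essentially the same route as the paper's proof: a backward induction on $h$ in which the per-step error is split into the per-state FTRL regret $\gamma H\sqrt{2\log A_i/K}$ (the paper's Lemma~\ref{lem-ftrl-our-bound}), a misspecification/regularization bias of order $\nu\sqrt{d}$, and a martingale-concentration term of order $H\sqrt{\tau\min\{d,\log S\}\log(1/\delta)/K}$ obtained exactly as you describe — union bound over states versus an $\epsilon$-net/covering argument for the $d$-dimensional estimator (the paper's Lemma~\ref{lem-J1-bound}) — with states outside $\mathcal{C}_h$ handled through the virtual algorithm's augmented core sets $\tilde{\mathcal{D}}_{h,i}$. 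The only cosmetic difference is that you invoke the FTRL regret against the population sequence $Q^{\tilde\pi^k_{h,-i},\tilde V_{h+1,i}}_{h,i}$ (which needs one extra transfer through the same concentration term, since FTRL is run on the estimated $\tilde Q^k_{h,i}$), whereas the paper applies the regret bound directly to $\tilde Q^k_{h,i}$ and collects the estimation error once in $\zeta_{h,i}$; this does not change the argument.
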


\begin{lemma}\label{lem-I2-bound}
  For every $i\in [m], h \in [H]$, we have, with high probability, \begin{equation}\label{eq-I2-bound_conclusion}
    I_{1,i}^{(2)}\lesssim H\nu \sqrt{d}+ H^2 \sqrt{\tau}\big(\sqrt{\frac{ \log(S) }{K }} \wedge \sqrt{ d(\frac{A}{K} \wedge 1 )}\big).
  \end{equation}

\end{lemma}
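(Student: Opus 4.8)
\textbf{Proof proposal for Lemma~\ref{lem-I2-bound} (bounding $I_{1,i}^{(2)} = \tilde V_{1,i} - V_{1,i}^{\tilde\pi}$).}
The plan is to unroll the difference $\tilde V_{h,i} - V_{h,i}^{\tilde\pi}$ along the trajectory of the joint policy $\tilde\pi$, exploiting the fact that the virtual algorithm (unlike the main algorithm) maintains a global regression over the extended core sets $\tilde{\mathcal D}_{h,i}$ and hence has accuracy guarantees \emph{everywhere}, not only on the confident set $\mathcal C_h$. First I would record the one-step identity
\begin{align*}
  \tilde V_{h,i}(s) - V_{h,i}^{\tilde\pi}(s)
  = \frac{1}{K}\sum_{k=1}^K \Big\langle \tilde\pi_{h,i}^k(\cdot\mid s),\, \tilde Q_{h,i}^k(s,\cdot) - Q_{h,i}^{\tilde\pi_{h,-i}^k,\tilde V_{h+1,i}}(s,\cdot)\Big\rangle
  + \Big(\tfrac{1}{K}\sum_k \mathbb{E}_{\bm a\sim \tilde\pi_h^k}[\mathbb{P}_h(\tilde V_{h+1,i}-V_{h+1,i}^{\tilde\pi})](s) \pm \dots\Big),
\end{align*}
being careful with the truncation $\min\{\cdot, H-h+1\}$ in the definition of $\tilde V_{h,i}$ (which can only help, since $V_{h,i}^{\tilde\pi}\le H-h+1$). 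The recursion produces two kinds of terms: (a) a propagated term $\mathbb{E}_{\tilde\pi}[\tilde V_{h+1,i} - V_{h+1,i}^{\tilde\pi}]$ that feeds the induction on $h$, and (b) a per-step \emph{estimation error} $\tilde Q_{h,i}^k - Q_{h,i}^{\tilde\pi_{h,-i}^k,\tilde V_{h+1,i}}$ evaluated along the $\tilde\pi$-trajectory. Summing over $h$ gives $I_{1,i}^{(2)} \lesssim \sum_{h=1}^H \mathbb{E}_{s\sim d_h^{\tilde\pi}}\big[\text{(b) at }(h,s)\big]$, plus a mismatch term from replacing $Q^{\tilde\pi_{h,-i}^k,\tilde V_{h+1,i}}$ by its best linear fit, which is where the $\nu$ enters.

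The core of the argument is controlling the estimation error (b). For a \emph{fixed} state $s$, $\tilde Q_{h,i}^k(s,\cdot) = \phi_i(s,\cdot)^\top \Lambda_{h,i}^{-1}\sum_{(\tilde s,\tilde a)}\phi_i(\tilde s,\tilde a) q_{h,i}^k(\tilde s,\tilde a)$ is a ridge-regression estimate built from the core-set samples $q_{h,i}^k(\tilde s,\tilde a) = r + \tilde V_{h+1,i}(s')$, each of which is an unbiased one-step estimate of $Q_{h,i}^{\tilde\pi_{h,-i}^k,\tilde V_{h+1,i}}(\tilde s,\tilde a)$ up to the linear-misspecification $\nu$. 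So I would (i) decompose $\tilde Q_{h,i}^k - Q^{\dots}$ into a bias part (misspecification, handled by Assumption~\ref{assumption-MDP} together with $\|\phi_i^\top \Lambda_{h,i}^{-1}\sum \phi_i\phi_i^\top\| $ bounds, contributing $O(\sqrt d\,\nu)$ per step, hence $O(H\nu\sqrt d)$ total — matching the first term in \eqref{eq-I2-bound_conclusion}), and (ii) a zero-mean noise part $\phi_i(s,\cdot)^\top\Lambda_{h,i}^{-1}\sum_{(\tilde s,\tilde a)}\phi_i(\tilde s,\tilde a)\,\xi_{h,i}^k(\tilde s,\tilde a)$ where $\xi_{h,i}^k$ are independent, bounded by $O(H)$, and averaged over $K$ rounds. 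For the noise part I would use a self-normalized / Hoeffding bound: conditionally on the core set, the variance of the averaged estimate at a query point $\phi$ is $\lesssim \frac{H^2}{K}\phi^\top\Lambda_{h,i}^{-1}(\sum_{(\tilde s,\tilde a)}\phi_i\phi_i^\top)\Lambda_{h,i}^{-1}\phi \le \frac{H^2}{K}\phi^\top\Lambda_{h,i}^{-1}\phi \le \frac{H^2\tau}{K}$, by the confidence condition $\max_a \phi_i(s,a)^\top\Lambda_{h,i}^{-1}\phi_i(s,a)\le\tau$ that holds for \emph{every} $s$ under $\tilde{\mathcal D}_{h,i}$. This yields the $H^2\sqrt{\tau}\sqrt{1/K}$ scaling. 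The two alternatives inside the $\wedge$ in \eqref{eq-I2-bound_conclusion} come from two different ways to union-bound over the relevant states: a $\sqrt{\log S}$ factor if we union-bound over all $S$ states directly, versus a $\sqrt{dA}$-type factor if we instead control a covering/eluder-style argument over the $d$-dimensional feature image with $A$ actions — whichever is smaller; here the $(A/K)\wedge 1$ reflects that once $K\ge A$ we can afford per-action concentration. I would also fold in the adaptive-sampling variance reduction from \cite{li2022minimax} that controls the curse-of-multi-agency dependence.

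Finally I would close the induction: defining $e_h := \sup_{s}(\tilde V_{h,i}(s) - V_{h,i}^{\tilde\pi}(s))_+$ (or its expectation under $d_h^{\tilde\pi}$), the recursion reads $e_h \le e_{h+1} + (\text{per-step error at }h)$ with $e_{H+1}=0$, so $I_{1,i}^{(2)} = e_1 \le \sum_{h=1}^H(\text{per-step error})$, and substituting the bounds from the previous paragraph gives exactly \eqref{eq-I2-bound_conclusion}. Two technical points need care, and these are where I expect the real work to be. The \textbf{first obstacle} is measurability/independence: because the core sets $\tilde{\mathcal D}_{h,i}$ and the policies $\tilde\pi_h^k$ are data-dependent, the noise terms $\xi_{h,i}^k$ are not unconditionally independent, so the concentration must be applied conditionally on the right $\sigma$-algebra (the filtration generated up to the start of the epoch plus the core-set construction), and one must verify that $\tilde V_{h+1,i}$ used inside $q_{h,i}^k$ is measurable with respect to that conditioning — this is precisely why the virtual algorithm is designed to be "implementable under random access" and is handled epoch-by-epoch as set up before \eqref{eq-epoch-prob}. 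The \textbf{second obstacle} is propagating the bound through the $\tilde\pi$-induced occupancy measure while the best-response value $V^{\dagger}$ does \emph{not} appear here (that is what Lemma~\ref{lem-I1-bound} handles), so one must make sure the truncation and the FTRL-generated policies $\tilde\pi_{h,i}^k$ do not introduce an extra $\sqrt{\log A_i}$ term — indeed they should not, since $I^{(2)}$ compares $\tilde V$ to the value of the \emph{same} policy $\tilde\pi$, and the regret-type $\gamma H^2\sqrt{\log A_i/K}$ term is confined to $I^{(1)}$. Keeping the two error sources cleanly separated across the two lemmas is the main bookkeeping challenge.
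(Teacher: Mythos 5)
Your proposal follows essentially the same route as the paper's proof: backward induction on $h$ with a sup-norm error bound, a decomposition of $\frac{1}{K}\sum_k\langle\tilde\pi^k_{h,i}(\cdot\mid s),\tilde Q^k_{h,i}(s,\cdot)\rangle$ into a population term (absorbing the $\nu\sqrt d$ misspecification per step plus the propagated error at $h+1$) and a martingale-noise term, with the $\tau$-coverage of the extended virtual core sets $\tilde{\mathcal D}_{h,i}$ making the bound valid at every state, and the two branches of the $\wedge$ obtained exactly as you describe (Freedman plus a union bound over the $S$ states, versus a covering argument whose log-covering number scales like $dA$, capped by a per-round self-normalized $H\sqrt{\tau d}$ bound). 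The one detail you leave implicit is that the state-covering branch requires the softmax policies $\tilde\pi^k_{h,i}(\cdot\mid s)$ to be Lipschitz in the features (via the Jacobian of the softmax and the boundedness of the regression coefficients) so that concentration at net points transfers to all states; the paper handles this inside its Lemma~5, and it does not change the structure of your argument.
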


\subsubsection{Proof of Lemma~\ref{lem-I1-bound}}\label{sec-proof-lem2}
\begin{proof}
We would bound $I_{h,i}^{(1)} $ by taking backward induction on $h$:

Firstly we have $I_{H+1,i}^{(1)} = 0$ by definition, now suppose it holds for $h+1$ that with probability at least $1-\delta_{h+1}$,\begin{align*}
  I_{h+1,i}^{(1)} \leq z_{h+1, i}  
\end{align*}
for some $z_{h+1,i} \geq 0,$

then we have \begin{align*}
  V_{h,i}^{\dagger, \tilde{\pi}_{-i}}(s) &= \max_{a}\big\{ r_{h,i}^{\tilde{\pi}_{h,-i}}(s,a) + \mathbb{P}_{h}^{\pi_{-i}}V_{h+1,i}^{\dagger, \tilde{\pi}_{-i}}(s,a)\big\}\\
  &\leq \max_{a}\big\{ r_{h,i}^{\tilde{\pi}_{h,-i}}(s,a) +  \mathbb{P}_{h}^{\pi_{-i}}\tilde{V}_{h+1,i}(s,a)\big\} + z_{h+1,i}
\end{align*}
Now if we denote \begin{align*}
  \zeta_{h,i}(s,a):=\big\lvert r_{h,i}^{\tilde{\pi}_{h,-i}}(s,a) + \mathbb{P}_{h}\tilde{V}_{h+1,i}(s,a) - \frac{1}{K}\sum_{k=1}^K \tilde{Q}_{h,i}^k(s,a)\big\rvert,
\end{align*}
then it holds by induction assumption that with probability at least $1-\delta_{h+1}$ \begin{align*}
V_{h,i}^{\dagger, \tilde{\pi}_{-i}}(s) &\leq  \max_{a}\frac{1}{K}\sum_{k=1}^{K} \tilde{Q}_{h,i}^k(s,a) + \max_a \zeta_{h,i}(s,a) + z_{h+1,i}\\
  &\leq \tilde{V}_{h,i}(s) + \textit{Reg}(\textit{FTRL}) + \max_a \zeta_{h,i}(s,a) + z_{h+1,i},
\end{align*}
with \begin{align*}
  \textit{Reg}(\textit{FTRL}) : =  \frac{1}{K} \max_{a'}\big(\sum_{k=1}^K \tilde{Q}_{h,i}^k(s,a') - \sum_{k=1}^K \E_{a\sim \tilde{\pi}^k_{h,i}}[\tilde{Q}_{h,i}^k(s,a)]\big)
\end{align*}

For $\textit{Reg}(\textit{FTRL})$ we have the following lemma:
\begin{lemma}\label{lem-ftrl-our-bound}
  For $\gamma: = \min\{1+\sqrt{\tau \log(SA/\delta)}+ \nu \sqrt{d}, \sqrt{d}\}$, we have with probability at least $1-\delta$, it holds for any $s\in \mathcal{S}$ that \begin{align*}
    \textit{Reg}(\textit{FTRL})\lesssim \gamma H \sqrt{\frac{2\log A_i}{K}}.
  \end{align*}
\end{lemma}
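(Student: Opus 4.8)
\textbf{Proof proposal for Lemma~\ref{lem-ftrl-our-bound}.}

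The plan is to invoke the standard Follow-The-Regularized-Leader regret bound for the experts / online linear optimization problem and then control the range of the ``losses'' $\tilde Q_{h,i}^k(s,\cdot)$ fed into the algorithm. Fix a state $s$ and a time step $h$. For each player $i$, the update $\tilde{\pi}_{h,i}^{k+1}(\cdot\lvert s) \propto \exp(\eta_k \bar Q_{h,i}^k(s,\cdot))$ is exactly FTRL with the negative-entropy regularizer run on the sequence of reward vectors $\tilde Q_{h,i}^1(s,\cdot),\dots,\tilde Q_{h,i}^K(s,\cdot)$ over the simplex $\Delta(\mathcal{A}_i)$. The textbook guarantee (see \citet{Lattimore2020BanditA}) gives, for an appropriately tuned constant step size $\eta = \eta_k$ of order $\sqrt{2\log A_i}/(B\sqrt{K})$,
\begin{align*}
\frac{1}{K}\max_{a'}\Big(\sum_{k=1}^K \tilde Q_{h,i}^k(s,a') - \sum_{k=1}^K \E_{a\sim\tilde\pi_{h,i}^k}[\tilde Q_{h,i}^k(s,a)]\Big) \lesssim B\sqrt{\frac{2\log A_i}{K}},
\end{align*}
where $B := \max_{k,a}\lVert \tilde Q_{h,i}^k(s,\cdot)\rVert_\infty$ (or, more precisely, a bound on the per-round range of the reward vectors). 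So the lemma reduces to showing that one may take $B \lesssim \gamma H$ with $\gamma = \min\{1+\sqrt{\tau\log(SA/\delta)}+\nu\sqrt d,\ \sqrt d\}$, uniformly over $s$ and with probability at least $1-\delta$.

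The heart of the argument is therefore a high-probability $\ell_\infty$ bound on $\tilde Q_{h,i}^k(s,a) = \phi_i(s,a)^\top \Lambda_{h,i}^{-1}\sum_{(\tilde s,\tilde a)\in\mathcal{D}_{h,i}}\phi_i(\tilde s,\tilde a)\,q_{h,i}^k(\tilde s,\tilde a)$. I would split it into two pieces. First, the ``population'' part: replace $q_{h,i}^k(\tilde s,\tilde a)$ by its conditional mean $Q_{h,i}^{\tilde\pi_{h,-i}^k,\tilde V_{h+1,i}}(\tilde s,\tilde a)$, which by Assumption~\ref{assumption-MDP} (applied with $V = \tilde V_{h+1,i}$, noting $\tilde V_{h+1,i}$ takes values in $[0,H-h]$ and $\tilde\pi^k \in \Pi$ up to the policy-class caveat of Appendix~\ref{appendix-policy-class}) is within $\nu$ in sup-norm of a linear function $\phi_i(\cdot,\cdot)^\top\theta$ with $\lVert\theta\rVert_2\le H\sqrt d$. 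For the exactly linear part, $\phi_i(s,a)^\top\Lambda_{h,i}^{-1}\sum\phi_i(\tilde s,\tilde a)\phi_i(\tilde s,\tilde a)^\top\theta = \phi_i(s,a)^\top\theta - \lambda\,\phi_i(s,a)^\top\Lambda_{h,i}^{-1}\theta$, which is bounded by $\lVert\theta\rVert_2 + \lambda\lVert\Lambda_{h,i}^{-1}\rVert_{op}\lVert\theta\rVert_2 \lesssim H\sqrt d$ using $\lVert\phi_i\rVert_2\le 1$ and $\Lambda_{h,i}\succeq\lambda I$; the misspecification contributes an extra $\nu\cdot\lVert\phi_i(s,a)^\top\Lambda_{h,i}^{-1}\rVert_1 \cdot \lVert(\phi_i(\tilde s,\tilde a))\rVert$ type term which, via Cauchy–Schwarz against the core-set covariance and $\lvert\mathcal{D}_{h,i}\rvert\le C_{\max}\lesssim d$, is $\lesssim \nu\sqrt{d}\cdot\sqrt{\tau}\cdot\sqrt{|\mathcal D_{h,i}|}$-ish — here I'd lean on the core-set property $\phi_i(s,a)^\top\Lambda_{h,i}^{-1}\phi_i(s,a)\le\tau$ to keep it at $O(\nu\sqrt d)$ order (this bookkeeping is exactly the kind of calculation that appears already in \citet{hao2022confident,yin2022efficient}). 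Second, the ``noise'' part: $q_{h,i}^k(\tilde s,\tilde a) - \E[q_{h,i}^k(\tilde s,\tilde a)\mid\mathcal F]$ is bounded in $[-H,H]$ and is conditionally independent across $(\tilde s,\tilde a)\in\mathcal D_{h,i}$ given the past; a Hoeffding/sub-Gaussian bound together with $\phi_i(s,a)^\top\Lambda_{h,i}^{-1}(\sum_{\tilde s,\tilde a}\phi_i\phi_i^\top)\Lambda_{h,i}^{-1}\phi_i(s,a) \le \phi_i(s,a)^\top\Lambda_{h,i}^{-1}\phi_i(s,a) \le \tau$ gives a weighted-sum concentration of order $H\sqrt{\tau\log(SA/\delta)}$ after a union bound over the (at most $SA$, or over a suitable net) state-action pairs and over $k\in[K]$. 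Combining, $\lvert\tilde Q_{h,i}^k(s,a)\rvert \lesssim H(1 + \sqrt{\tau\log(SA/\delta)} + \nu\sqrt d)$, and the trivial bound $\lvert\tilde Q_{h,i}^k\rvert \le H\sqrt d$ (from $\lVert\phi\rVert_2\le 1$, $\lvert q\rvert\le H$, $\lvert\mathcal D_{h,i}\rvert\lesssim d$, $\Lambda_{h,i}^{-1}\preceq\lambda^{-1}I$, and Cauchy–Schwarz) always holds, so we may take $B\lesssim\gamma H$. Plugging into the FTRL bound finishes the proof.

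The main obstacle I anticipate is the uniform-over-$s$ concentration: since the relevant $Q$-estimates $\tilde Q_{h,i}^k(s,\cdot)$ must be controlled for \emph{every} $s\in\mathcal S$ (the regret bound in Lemma~\ref{lem-ftrl-our-bound} is claimed for any $s\in\mathcal S$, including states outside $\mathcal C_h$ that the virtual algorithm handles via $\tilde{\mathcal D}_{h,i}$), a naive union bound over $S$ states costs $\log S$ inside the square root — this is precisely where the $\log S$ enters the final sample complexity. One must be careful that the $\sqrt{\log S}$ appears only under the square root multiplying $\sqrt{\tau/K}$ (giving the $\sqrt{d^{-1}\log S}$ factor after choosing $\tau,K$) and is correctly dominated by the $\sqrt d$ alternative in $\gamma$ when $S$ is exponentially large; the $\min\{\cdot,\sqrt d\}$ in the definition of $\gamma$ is exactly the device that caps this. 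A secondary subtlety is that $\tilde V_{h+1,i}$ is itself random (depends on past epochs' data), so Assumption~\ref{assumption-MDP} must be applied conditionally on $\mathcal F$ with $V$ ranging over its realized value — fine, since the assumption is stated as a $\sup$ over all $V:\mathcal S\to[0,H+1-h]$ and all $\pi\in\Pi$ — and one must verify $\tilde\pi^k$ indeed lies in the policy class of Appendix~\ref{appendix-policy-class}, which is immediate from the softmax form of the FTRL update.
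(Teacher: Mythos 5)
Your proposal is correct and follows essentially the same route as the paper: the paper also reduces the claim to a high-probability bound $\max_{s,a}\lvert \tilde{Q}_{h,i}^k(s,a)\rvert \lesssim \gamma H$ (its Lemma~\ref{lem-Q-bound}), proved via exactly your decomposition into the linear/misspecification/martingale/regularization pieces with the trivial $H\sqrt{d}$ Cauchy--Schwarz alternative and a union bound over $(s,a)$, and then feeds this into the standard FTRL regret bound. The only cosmetic difference is that the paper rescales the $Q$-values into $[0,1]$ via $\tilde{y}_k=(cH\gamma - \tilde{Q}_{h,i}^k)/(2cH\gamma)$ to invoke the unit-range FTRL lemma, whereas you state the range-$B$ version directly, which is the same bound after rescaling.
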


Now it remains to bound $\zeta_{h,i}$: 

When $s\in \mathcal{C}_h,$ we have \begin{align*}
  &\quad \frac{1}{K}\sum_{k=1}^K \tilde{Q}_{h,i}^k(s,a)\\
   &= \frac{1}{K}\sum_{k=1}^K  \sum_{(\tilde{s},\tilde{a}) \in \mathcal{D}_{h,i} }\phi_i(s,a)^\top \Lambda_{h,i}^{-1} \phi_i(\tilde{s},\tilde{a})q_{h,i}^k(\tilde{s},\tilde{a})\\
  &= \frac{1}{K} \sum_{(\tilde{s},\tilde{a}) \in \mathcal{D}_{h,i} }\phi_i(s,a)^\top \Lambda_{h,i}^{-1} \phi_i(\tilde{s},\tilde{a})\big[\sum_{k=1}^K  Q_{h,i}^{\tilde\pi_{h,-i}^k,\tilde{V}_{h+1,i}}(\tilde{s},\tilde{a}) )\big] + \underbrace{J_1}_{\text{Martingale Concentration}}\\
  &= \frac{1}{K} \sum_{(\tilde{s},\tilde{a}) \in \mathcal{D}_{h,i} }\phi_i(s,a)^\top \Lambda_{h,i}^{-1} \phi_i(\tilde{s},\tilde{a})\phi_i(\tilde{s},\tilde{a})^\top \sum_{k=1}^K \theta_{h,i}^{\tilde\pi_{h,-i}^k,\tilde{V}_{h+1,i}} + \underbrace{J_1}_{\text{Martingale Concentration}} +\underbrace{J_2}_{\text{Misspecfic Error}}\\
  &= \frac{1}{K}\sum_{k=1}^K \phi_i(s,a)^\top \theta_{h,i}^{\tilde\pi_{h,-i}^k,\tilde{V}_{h+1,i}} + \underbrace{J_1}_{\text{Martingale Concentration}} +\underbrace{J_2}_{\text{Misspecfic Error}} +\underbrace{J_3}_{\text{Incurred by $\lambda$}}\\
  &= \frac{1}{K} \sum_{k=1}^K Q_{h,i}^{\tilde\pi_{h,-i}^k,\tilde{V}_{h+1,i}}(s,a)  +O(\nu) + \underbrace{J_1}_{\text{Martingale Concentration}} +\underbrace{J_2}_{\text{Misspecfic Error}} +\underbrace{J_3}_{\text{Incurred by $\lambda$}}
\end{align*}
where\begin{align*}
  J_1 &=\frac{1}{K}\sum_{(\tilde{s},\tilde{a}) \in \mathcal{D}_{h,i} }\phi_i(s,a)^\top \Lambda_{h,i}^{-1} \phi_i(\tilde{s},\tilde{a})\big[\sum_{k=1}^K ({q}_{h,i}^k(\tilde{s},\tilde{a}) - Q_{h,i}^{\tilde\pi_{h,-i}^k,\tilde{V}_{h+1,i}}(\tilde{s},\tilde{a}) ))\big]\\
  J_2 &= \frac{1}{K} \sum_{(\tilde{s},\tilde{a}) \in \mathcal{D}_{h,i} }\phi_i(s,a)^\top \Lambda_{h,i}^{-1} \phi_i(\tilde{s},\tilde{a})\big[\sum_{k=1}^K ( Q_{h,i}^{\tilde\pi_{h,-i}^k,\tilde{V}_{h+1,i}}(\tilde{s},\tilde{a}) ) -\phi_i(\tilde s,\tilde a)^\top \theta_{h,i}^{\tilde\pi_{h,-i}^k,\tilde{V}_{h+1,i}} )\big]  \\
  J_3 &= -\frac{1}{K}\sum_{k=1}^K \lambda\phi_i(s,a)^\top \Lambda_{h,i}^{-1}  \theta_{h,i}^{\tilde\pi_{h,-i}^k,\tilde{V}_{h+1,i}},
\end{align*}
with \begin{align*}
  \theta_{h,i}^{\tilde\pi_{h,-i}^k,\tilde{V}_{h+1,i}} = \text{argmin}_{\lVert \theta\rVert_2\leq H\sqrt{d}} \lVert  Q_{h,i}^{\tilde\pi_{h,-i}^k,\tilde{V}_{h+1,i}} - \phi_i(\cdot,\cdot)^\top\theta \rVert_\infty.
\end{align*}

Now we aim to control $J_1,J_2,J_3$ separately:
   \paragraph{Bounding $J_1$} For $J_1$, we have the following Lemma:\begin{lemma}\label{lem-J1-bound} With probability at least $1-\delta,$ it holds that \begin{align*}
    J_1 \lesssim H\sqrt{\frac{\tau \log(1/\delta)}{K}\min\{d,\log S\} }
.
  \end{align*}
  \end{lemma}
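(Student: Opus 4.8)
\textbf{Proof plan for Lemma~\ref{lem-J1-bound}.}
The plan is to recognize $J_1$ as a weighted sum over the core set $\mathcal{D}_{h,i}$ of a martingale-difference term, and to bound it by combining a Cauchy--Schwarz step in the $\Lambda_{h,i}^{-1}$-geometry with a Freedman/Azuma-type concentration inequality applied epoch-wise in $k$. Write
$J_1 = \phi_i(s,a)^\top \Lambda_{h,i}^{-1} \sum_{(\tilde s,\tilde a)\in\mathcal{D}_{h,i}} \phi_i(\tilde s,\tilde a)\, \xi_{h,i}(\tilde s,\tilde a)$,
where $\xi_{h,i}(\tilde s,\tilde a) := \tfrac{1}{K}\sum_{k=1}^K\big(q_{h,i}^k(\tilde s,\tilde a) - Q_{h,i}^{\tilde\pi_{h,-i}^k,\tilde V_{h+1,i}}(\tilde s,\tilde a)\big)$. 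The key observation is that for each fixed $(\tilde s,\tilde a)$, the summand $q_{h,i}^k(\tilde s,\tilde a) - Q_{h,i}^{\tilde\pi^k_{h,-i},\tilde V_{h+1,i}}(\tilde s,\tilde a) = r + \tilde V_{h+1,i}(s') - \E[r + \tilde V_{h+1,i}(s')]$ is, conditionally on the information available when the $k$-th sample is drawn, a mean-zero random variable bounded in absolute value by $O(H)$ (since $\tilde V_{h+1,i}\in[0,H-h]$); hence $\xi_{h,i}(\tilde s,\tilde a)$ is an average of $K$ such terms and concentrates at scale $O(H\sqrt{\log(1/\delta)/K})$ by Azuma--Hoeffding, uniformly over the (at most $C_{\max}$ per step, $SA$ in the worst case) core-set elements after a union bound.

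Next I would control the linear-algebraic prefactor. Using $\phi_i(s,a)^\top \Lambda_{h,i}^{-1}\phi_i(\tilde s,\tilde a) = \langle \phi_i(s,a),\phi_i(\tilde s,\tilde a)\rangle_{\Lambda_{h,i}^{-1}}$ and Cauchy--Schwarz in the $\Lambda_{h,i}^{-1}$ inner product, together with $\sum_{(\tilde s,\tilde a)\in\mathcal{D}_{h,i}}\phi_i(\tilde s,\tilde a)\phi_i(\tilde s,\tilde a)^\top \preceq \Lambda_{h,i}$, one gets
$\sum_{(\tilde s,\tilde a)\in\mathcal{D}_{h,i}} \big|\phi_i(s,a)^\top\Lambda_{h,i}^{-1}\phi_i(\tilde s,\tilde a)\big| \le \sqrt{|\mathcal{D}_{h,i}|}\,\big(\phi_i(s,a)^\top\Lambda_{h,i}^{-1}\phi_i(s,a)\big)^{1/2} \le \sqrt{C_{\max}\,\tau}$
when $s\in\mathcal{C}_h$ (so that $\phi_i(s,a)^\top\Lambda_{h,i}^{-1}\phi_i(s,a)\le\tau$), invoking Lemma~\ref{lem-core-set} for $|\mathcal{D}_{h,i}|\le C_{\max}=\tilde O(d)$. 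Combining this with the per-element bound on $\xi_{h,i}$ yields $J_1 \lesssim \sqrt{C_{\max}\tau}\cdot H\sqrt{\log(1/\delta)/K} \lesssim H\sqrt{\tau d\log(1/\delta)/K}$. To obtain the stated $\min\{d,\log S\}$ rather than $d$, I would instead bound $\sum |\phi_i(s,a)^\top\Lambda_{h,i}^{-1}\phi_i(\tilde s,\tilde a)|$ more carefully: the number of core-set entries that actually carry non-negligible weight can be taken to scale with $\log S$ when $S$ is finite (since the total variance budget / the relevant covering argument only needs $\log S$ effective directions), replacing the $\sqrt{C_{\max}}$ factor by $\sqrt{\min\{C_{\max},\log S\}} = \sqrt{\min\{d,\log S\}}$ up to constants.

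The main obstacle I anticipate is making the $\min\{d,\log S\}$ improvement rigorous: the naive Cauchy--Schwarz bound only gives the $\sqrt d$ (i.e. $\sqrt{C_{\max}}$) factor, and squeezing it down to $\sqrt{\log S}$ requires either a sharper, state-space-aware concentration argument (e.g. a union bound over the $S$ reachable states combined with a variance-based Bernstein/Freedman inequality so that the effective dimension entering the bound is the log-cardinality rather than the ambient feature dimension) or exploiting that the core set's contributions to $\phi_i(s,a)^\top\Lambda_{h,i}^{-1}(\cdot)$ are concentrated on few coordinates. A secondary technical point is ensuring the martingale structure is valid despite the adaptivity of $\tilde V_{h+1,i}$ and the policies $\tilde\pi^k_{h,-i}$: since $\tilde V_{h+1,i}$ is fixed before step $h$ is processed (the algorithm runs backward in $h$) and $\tilde\pi^k_{h,-i}$ is $\mathcal{F}_{k-1}$-measurable, the filtration is clean, but one must state this carefully so that the conditional-mean-zero property of each increment genuinely holds. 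Once these two points are handled, assembling the final bound is routine.
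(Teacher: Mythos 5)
Your argument for the $\sqrt{d}$ branch is sound, and in fact slightly more elementary than the paper's: you apply Azuma--Hoeffding per core-set element, union bound over the at most $C_{\max}=\tilde O(d)$ elements, and aggregate through the $\ell_1$ bound $\sum_{(\tilde s,\tilde a)\in\mathcal D_{h,i}}\lvert\phi_i(s,a)^\top\Lambda_{h,i}^{-1}\phi_i(\tilde s,\tilde a)\rvert\le\sqrt{\lvert\mathcal D_{h,i}\rvert\,\tau}$, which gives $J_1\lesssim H\sqrt{\tau d\log(1/\delta)/K}$ uniformly over $s\in\mathcal C_h$ without any covering argument (the paper instead controls $\bigl\lVert\sum_{(\tilde s,\tilde a),k}\phi_i(\tilde s,\tilde a)(q^k_{h,i}-Q^{\tilde\pi^k_{h,-i},\tilde V_{h+1,i}}_{h,i})\bigr\rVert_{\Lambda_{h,i}^{-1}}$ via an $\epsilon$-net of the unit sphere, $\log\lvert\mathcal N_\epsilon\rvert=\tilde O(d)$, reaching the same scaling). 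Your handling of the filtration and of the measurability of $\tilde V_{h+1,i}$ and $\tilde\pi^k_{h,-i}$ is also the right thing to say.

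The genuine gap is the $\min\{d,\log S\}$ claim, i.e.\ the $\log S$ branch. Your proposed mechanism --- that only $O(\log S)$ core-set entries carry non-negligible weight, or that the ``effective dimension'' of the covering argument can be taken as $\log S$ --- is not correct and would not survive scrutiny: there is no reason the weights $\phi_i(s,a)^\top\Lambda_{h,i}^{-1}\phi_i(\tilde s,\tilde a)$ concentrate on few entries, and any route that first takes a per-element supremum of $\xi_{h,i}$ and then an $\ell_1$ bound on the weights is structurally forced to pay the $\sqrt{\lvert\mathcal D_{h,i}\rvert}\approx\sqrt d$ factor. The correct mechanism is to \emph{not} decouple element-wise at all: for each \emph{fixed} $(s,a)$ with $s\in\mathcal C_h$, view the entire double sum $\frac1K\sum_{(\tilde s,\tilde a)\in\mathcal D_{h,i}}\sum_{k=1}^K\phi_i(s,a)^\top\Lambda_{h,i}^{-1}\phi_i(\tilde s,\tilde a)\bigl(q^k_{h,i}(\tilde s,\tilde a)-Q^{\tilde\pi^k_{h,-i},\tilde V_{h+1,i}}_{h,i}(\tilde s,\tilde a)\bigr)$ as a single martingale in the chronological order of the samples, whose predictable variance is controlled by the key identity
\begin{align*}
\sum_{(\tilde s,\tilde a)\in\mathcal D_{h,i}}\bigl(\phi_i(s,a)^\top\Lambda_{h,i}^{-1}\phi_i(\tilde s,\tilde a)\bigr)^2
=\phi_i(s,a)^\top\Lambda_{h,i}^{-1}\Bigl(\sum_{(\tilde s,\tilde a)}\phi_i(\tilde s,\tilde a)\phi_i(\tilde s,\tilde a)^\top\Bigr)\Lambda_{h,i}^{-1}\phi_i(s,a)\le\phi_i(s,a)^\top\Lambda_{h,i}^{-1}\phi_i(s,a)\le\tau ,
\end{align*}
so that Azuma--Hoeffding gives $\frac HK\sqrt{K\tau\log(1/\delta)}=H\sqrt{\tau\log(1/\delta)/K}$ with no dimension factor; the $\log S$ then enters only through the union bound over the at most $S\cdot A_i$ pairs $(s,a)\in\mathcal C_h\times\mathcal A_i$ (since the weights depend on $(s,a)$), yielding $H\sqrt{\tau\log(SA/\delta)/K}$. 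You gesture at ``a union bound over the $S$ reachable states combined with a variance-based inequality'' as a possible alternative, but you neither commit to it nor identify the quadratic-form bound above, which is exactly what makes the per-$(s,a)$ bound dimension-free; without it the claimed $\min\{d,\log S\}$ is not established.
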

  \paragraph{Bounding $J_2$} For $J_2$, we have by the Assumption~\ref{assumption-MDP},
  \begin{align*}
  \lvert J_2 \rvert &\leq \lvert \phi_i(s,a)^\top \sum_{\tilde{s},\tilde{a}\in \mathcal{D}_{h,i}}\Lambda_{h,i}^{-1}\phi_i(\tilde{s},\tilde{a}) \cdot \nu \rvert  \\
  &\leq \sqrt{ \lvert \mathcal{D}_{h,i} \rvert  \sum_{(\tilde{s},\tilde{a}) \in \mathcal{D}_{h,i} }  \lvert \phi_i(s,a)^\top \Lambda_{h,i}^{-1}\phi_i(\tilde{s},\tilde{a}) \rvert^2}  \cdot \nu \\
  &\lesssim \nu \sqrt{d}.
  \end{align*}
  \paragraph{Bounding $J_3$}   For $J_3$, we have it holds straightforwardly that 
  \begin{align*}
  \lvert J_3 \rvert &\leq  H \sqrt{\tau \lambda d}
  \end{align*}

In addition, noticing that for $s\notin \mathcal{C}_h,$ by our design of virtual algorithm , it holds that \begin{align*}
  \sqrt{\lvert \tilde{D}_{h,i}\rvert} \lesssim \sqrt{\frac{d}{\tau}} ,\quad \lVert \phi_i(s,a)\rVert_{\tilde{\Lambda}_{h,i}^{-1}}^2 \leq \tau, \forall s\notin \mathcal{C}_{h}
\end{align*}  
thus our arguement when $s\in \mathcal{C}_h$(including the proof of Lemma~\ref{lem-J1-bound}) still holds by replacing $\mathcal{D}_{h,i},{\Lambda}_h^{-1}$ by $\tilde{\mathcal{D}}_{h,i},\tilde{\Lambda}_h^{-1}$. 

Finally, selecting $\lambda = \lambda_0:=1/{KdH^2}$, and by the induction assumption on $(h+1)$-th step, we have with probability at least $1-(\delta_{h+1}+\delta)$,
\begin{align}\label{eq-I1-recursive}
  z_h = z_{h+1} + O(H\sqrt{\frac{\tau \log(1/\delta)}{K}\min\{d,\log S\} } + \gamma H \sqrt{\frac{2\log A_i}{K}}) + 2\nu \sqrt{d}
\end{align}

Thus \eqref{eq-I1-recursive} recursively with $z_{H+1} = \delta_{H+1} = 0$ leads to with probability at least $1-H\delta$,
\begin{align*}
  I_{1,i}^{(1)}\lesssim  H^2\sqrt{\frac{\tau \log(1/\delta)}{K}\min\{d,\log S\} } + \gamma H^2 \sqrt{\frac{2\log A_i}{K}} + 2H\nu \sqrt{d}
\end{align*}

\end{proof}

\subsubsection{Proof of Lemma~\ref{lem-I2-bound}}
We would also bound $I^{(2)}_{h,i}$ via backward induction on $h$:

Firstly, we have $I^{(2)}_{H+1,i} = 0$ by definition, now suppose it holds for $h+1$ that with probability at least $1-\delta_{h+1},$\begin{align*}
  \lVert I^{(2)}_{h+1,i}  \rVert_\infty \leq \xi_{h+1},
\end{align*}
then we have for $h$-th time-step, for every $s\in \mathcal{C}_{h}$, 
\begin{align*}
  &\quad \frac{1}{K}\sum_{k=1}^K \E_{a_i\sim\tilde{\pi}^{k}_{h,i}}\big[\tilde{Q}_{h,i}^k(s,a_i) \big]\\
  &= \frac{1}{K}\sum_{k=1}^{K} \E_{a_i \sim \tilde{\pi}_{h,i}^k}\bigg[  \big\langle \phi_i(s,a_i), \Lambda_{h,i}^{-1}\sum_{(\tilde s,\tilde a)\in \mathcal{D}_{h,i}}\phi_i(\tilde{s},\tilde{a})\big( {r}^k_{i,h}(\tilde{s},\tilde{a}) + \tilde{V}_{h+1,i}(s^k_{\tilde{s},\tilde{a}})  \big)    \big\rangle  \bigg]\\
  &= \underbrace{\frac{1}{K}\sum_{k=1}^{K}  \E_{a_i \sim \tilde{\pi}_{h,i}^k}\bigg[  \phi_i(s,a_i)^\top\Lambda_{h,i}^{-1}\sum_{(\tilde s,\tilde a)\in \mathcal{D}_{h,i}}\phi_i(\tilde{s},\tilde{a}) Q_{h,i}^{\tilde{\pi}_{h,-i}^k,\tilde{V}_{h+1,i}}(\tilde{s},\tilde{a})   \big)      \bigg]}_{G_{1}} \\
  &\quad+ \underbrace{\frac{1}{K}\sum_{k=1}^{K}  \E_{a_i \sim \tilde\pi_{h,i}^k}\bigg[  \phi_i(s,a_i)^\top \Lambda_{h,i}^{-1}\sum_{(\tilde s,\tilde a)\in \mathcal{D}_{h,i}}\phi_i(\tilde{s},\tilde{a}) \varepsilon_k(\tilde{s},\tilde{a})   \bigg]}_{\text{$G_{2}$}}.
\end{align*}
with 
\begin{align*}
  \varepsilon_k(\tilde{s},\tilde{a}):= \big( {r}^k_{i,h}(\tilde{s},\tilde{a}) + \tilde{V}_{h+1,i}(s^k_{\tilde{s},\tilde{a}})  \big)  - Q_{h,i}^{\tilde{\pi}_{h,-i}^k,\tilde{V}_{h+1,i}}(\tilde{s},\tilde{a})  \big).
\end{align*}
where ${r}^k_{i,h}(\tilde{s},\tilde{a}), s^k_{\tilde{s},\tilde{a}}$ denote the $r, s^\prime$ obtained by local sampling$(h,i, \tilde{s}, \tilde{a}, \tilde\pi^k_{h,-i})$.

Now we would discuss $G_1,G_2$ separately:

\textbf{Bounding $G_1$:}\\
By Assumption~\ref{assumption-MDP} and the induction assumption on $h+1$, we have  
\begin{align*}
  G_1  &= \frac{1}{K}\sum_{k=1}^{K}  \E_{a_i \sim \tilde{\pi}_{h,i}^k}\bigg[  \phi_i(s,a_i)^\top\Lambda_{h,i}^{-1}\sum_{(\tilde s,\tilde a)\in \mathcal{D}_{h,i}}\phi_i(\tilde{s},\tilde{a}) Q_{h,i}^{\tilde{\pi}_{h,-i}^k,\tilde{V}_{h+1,i}}(\tilde{s},\tilde{a})   \big)      \bigg] \\
  &=  \frac{1}{K}\sum_{k=1}^K \E_{a_i \sim\tilde{\pi}_{h,i}^k}[ Q_{h,i}^{\tilde\pi_{h,-i}^k,\tilde{V}_{h+1,i}}(s,a_i) ]   +\tilde{O}( \nu \sqrt{d}) .
\end{align*}
Where in the last equation we used the similar argument as in bounding $J_2,J_3$ in previous section.

Now noticing that with probability at least $1-\delta_{h+1},$
\begin{align*}
  \lvert Q_{h,i}^{\tilde\pi_{h,-i}^k,\tilde{V}_{h+1,i}}(s,a) - 
 Q_{h,i}^{\tilde\pi_{h,-i}^k,{V}^{\tilde{\pi}}_{h+1,i}}(s,a)\rvert   &\leq  \int \lvert  \tilde{V}_{h+1,i}(s') - {V}^{\tilde{\pi}}_{h+1,i}(s') \rvert d\mathbb{P}_{h}^{\tilde\pi_{h,-i}^k}(s' \lvert s,a)\\
 &\leq \xi_{h+1},
\end{align*}
we have with probability at least $1-\delta_{h+1},$ \begin{align*}
 \lvert G_{1} - V_{h,i}^{\tilde{\pi}}(s) \rvert &\leq   \frac{1}{K}\sum_{k=1}^K \E_{a_i \sim\tilde{\pi}_{h,i}^k}[ \lvert  Q_{h,i}^{\tilde\pi_{h,-i}^k,\tilde{V}_{h+1,i}}(s,a_i) -   Q_{h,i}^{\tilde\pi_{h,-i}^k,{V}^{\tilde{\pi}}_{h+1,i}}(s,a_i)\rvert  ] +\tilde{O}( \nu \sqrt{d}) \\
 &\leq \xi_{h+1} +\tilde{O}( \nu \sqrt{d}).
\end{align*}


\textbf{Bounding $G_{2}$:}

We have following lemma regarding the upper bound of $\lvert G_2 \rvert:$
\begin{lemma}\label{lem-G2-bound}
With probability at least $1-\delta$, it holds that \begin{align*}
  \lvert G_2 \rvert \lesssim H\sqrt{\tau\min\bigg\{\frac{\log(S/\delta)}{K}, {d}\big(1 \wedge {A}/{K}\big)\cdot \bigg\}}.
\end{align*}
\end{lemma}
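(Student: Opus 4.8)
\textbf{Proof plan for Lemma~\ref{lem-G2-bound}.}
The quantity $G_2$ is a linear functional of the zero-mean errors $\varepsilon_k(\tilde s,\tilde a) = (r^k_{i,h}(\tilde s,\tilde a) + \tilde V_{h+1,i}(s^k_{\tilde s,\tilde a})) - Q^{\tilde\pi^k_{h,-i},\tilde V_{h+1,i}}_{h,i}(\tilde s,\tilde a)$; conditionally on the policy $\tilde\pi^k_h$ (which is $\mathcal{F}$-measurable before the $k$-th sample at step $h$ is drawn) each $\varepsilon_k(\tilde s,\tilde a)$ has mean zero and is bounded by $H$ in absolute value, since $r\in[0,1]$ and $\tilde V_{h+1,i}\in[0,H-h]$. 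I would first rewrite
\[
G_2 = \frac1K\sum_{k=1}^K \E_{a_i\sim\tilde\pi^k_{h,i}}\Big[\phi_i(s,a_i)^\top\Lambda_{h,i}^{-1}\!\!\sum_{(\tilde s,\tilde a)\in\mathcal{D}_{h,i}}\!\!\phi_i(\tilde s,\tilde a)\,\varepsilon_k(\tilde s,\tilde a)\Big],
\]
and handle the two bounds in the minimum separately. For the $\sqrt{d(1\wedge A/K)/K}$ branch: introduce $w_{s,a_i}:=\Lambda_{h,i}^{-1}\sum_{(\tilde s,\tilde a)\in\mathcal{D}_{h,i}}\phi_i(\tilde s,\tilde a)\varepsilon_k(\tilde s,\tilde a)$-type vectors and bound the Euclidean norm of the coefficient vector $\Lambda_{h,i}^{-1}\sum_{(\tilde s,\tilde a)}\phi_i(\tilde s,\tilde a)$ by $\sqrt{|\mathcal{D}_{h,i}|\sum_{(\tilde s,\tilde a)}\|\Lambda_{h,i}^{-1/2}\phi_i(\tilde s,\tilde a)\|_2^2}\lesssim\sqrt{d/\tau}\cdot\sqrt d$ using $|\mathcal{D}_{h,i}|\lesssim d/\tau$ (Lemma~\ref{lem-core-set}) and $\sum_{(\tilde s,\tilde a)}\phi_i^\top\Lambda_{h,i}^{-1}\phi_i\le d$; then a Hoeffding/Azuma bound over the $K$ independent (conditionally on the policy) draws gives an extra $1/\sqrt K$ and, when $K\le A$-type regimes matter, one refines $\|\phi_i(s,a_i)^\top\Lambda_{h,i}^{-1}\phi_i(\tilde s,\tilde a)\|$ via $\|\phi_i(s,a_i)\|_{\Lambda_{h,i}^{-1}}\le\sqrt\tau$ (definition of $\mathcal{C}_h$, or $\tilde\Lambda_{h,i}$ for $s\notin\mathcal{C}_h$). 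For $s\notin\mathcal{C}_h$ the identical argument runs with $\tilde{\mathcal D}_{h,i},\tilde\Lambda_{h,i}$ in place of $\mathcal D_{h,i},\Lambda_{h,i}$, using $|\tilde{\mathcal D}_{h,i}|\lesssim d/\tau$ and $\|\phi_i(s,a)\|^2_{\tilde\Lambda_{h,i}^{-1}}\le\tau$.

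For the $\sqrt{\log(S/\delta)/K}$ branch, which is the sharper one when $S$ is small, the key point is that $G_2$ depends on $s$ only through the $d$-dimensional vector $\phi_i(s,a_i)$, so rather than union-bounding over all realizations I would fix $s$ and $a_i$, apply a Freedman/Azuma inequality to the martingale $\sum_{k}\phi_i(s,a_i)^\top\Lambda_{h,i}^{-1}\sum_{(\tilde s,\tilde a)}\phi_i(\tilde s,\tilde a)\varepsilon_k(\tilde s,\tilde a)$ with per-term bound $\lesssim H\sqrt\tau$ (again using $\|\phi_i(s,a_i)\|_{\Lambda_{h,i}^{-1}}\le\sqrt\tau$ and a Cauchy–Schwarz/trace bound on the inner sum), obtaining a deviation $\lesssim H\sqrt{\tau\log(1/\delta')/K}$, and then take a union bound over $s\in\mathcal{S}$ and over $a_i\in\mathcal{A}_i$, absorbing $\log(SA)$ into $\log(S/\delta)$ up to the $\tilde O(\cdot)$ convention. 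Taking the minimum of the two branches yields the stated bound.

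The main obstacle I anticipate is the conditioning/adaptivity bookkeeping: the policies $\tilde\pi^k_{h,i}$ are random and depend on samples drawn at the same step $h$ in earlier rounds $k'<k$, so $\varepsilon_k(\tilde s,\tilde a)$ is only a martingale difference with respect to a carefully chosen filtration (conditioning on everything up to the point where the $k$-th sample for pair $(\tilde s,\tilde a)$ is about to be drawn), and one must verify that the weights $\phi_i(s,a_i)^\top\Lambda_{h,i}^{-1}\phi_i(\tilde s,\tilde a)$ are measurable with respect to that filtration — which holds because $\mathcal D_{h,i}$ and $\Lambda_{h,i}$ are frozen within an epoch. The second delicate point is making the $\min\{\log S,\;d\cdot(A/K\wedge1)\}$ dichotomy precise: the $d A/K$ term should come from the fact that within a single round $k$ there are only $|\mathcal{D}_{h,i}|\lesssim d/\tau$ fresh samples and each contributes variance $\lesssim H^2\tau$, while the trivial bound $d$ (when $K\le A$, or more precisely when the per-round concentration does not help) comes from $\|\phi_i(s,a_i)^\top\Lambda_{h,i}^{-1}\sum\phi_i(\tilde s,\tilde a)\|\le\sqrt d$ deterministically times the $H$ bound on $\varepsilon_k$. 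Everything else is routine matrix algebra of the kind already used to bound $J_1,J_2,J_3$ in the proof of Lemma~\ref{lem-I1-bound}.
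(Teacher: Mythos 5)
Your overall scaffolding (martingale concentration with the coreset-weighted features, Cauchy--Schwarz through $\lVert\phi_i(s,\cdot)\rVert_{\Lambda_{h,i}^{-1}}\le\sqrt{\tau}$, and a union bound over states for the $\log S$ branch) matches the spirit of the paper, but two of the three branches have genuine gaps. First, for the $\sqrt{\log(S/\delta)/K}$ branch you propose to fix $a_i$, concentrate $\sum_k\phi_i(s,a_i)^\top\Lambda_{h,i}^{-1}\sum_{(\tilde s,\tilde a)}\phi_i(\tilde s,\tilde a)\varepsilon_k(\tilde s,\tilde a)$, and then union bound over $(s,a_i)$. But $G_2$ weights the features by $\tilde\pi^k_{h,i}(\cdot\lvert s)$, which changes with $k$; per-action bounds with a fixed $a_i$ across rounds do not control $\frac1K\sum_k\E_{a_i\sim\tilde\pi^k_{h,i}}[\cdot]$, since the $k$-varying weights can correlate with the signs of the per-round errors. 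The correct (and the paper's) move is to put the predictable policy-averaged feature $\E_{a\sim\tilde\pi^k_{h,i}(\cdot\lvert s)}[\phi_i(s,a)]$ inside the martingale difference, apply Freedman with variance $\le K\tau$, and union bound over $s$ only; this is an easy repair, but your plan as stated does not go through.

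The more serious gap is the $H\sqrt{\tau dA/K}$ branch, which is the crux of the lemma and is essentially absent from your plan. Your explanation of where $A$ enters (``only $\lvert\mathcal D_{h,i}\rvert\lesssim d/\tau$ fresh samples per round, each of variance $H^2\tau$'') produces no $A$ at all, and a plain $\epsilon$-net over weight vectors in $\mathbb{R}^d$ cannot be used either, because the weight $\E_{a\sim\tilde\pi^k_{h,i}}[\phi_i(s,a)]$ varies with $k$: uniformity over all norm-bounded predictable weight sequences destroys the cancellation across $k$ and only yields the no-$1/\sqrt K$ bound. The paper instead covers the \emph{state space} in the metric $D(s,s')=\max_a\lVert\phi_i(s,a)-\phi_i(s',a)\rVert$, whose log-covering number is $\tilde O(dA)$ (the map $s\mapsto(\phi_i(s,a))_{a\in\mathcal A_i}$ lies in $\mathcal F^{A}$), applies Freedman at each cover point, and transfers to arbitrary $s$ by showing the softmax policies are Lipschitz in $\bar Q^k_{h,i}(s,\cdot)$ via the softmax Jacobian, so that $\lVert\E_{a\sim\tilde\pi^k}[\phi_i(\bar s,a)]-\E_{a\sim\tilde\pi^k}[\phi_i(s,a)]\rVert_{\Lambda^{-1}}\lesssim\epsilon\sqrt{K/\lambda}$. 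Without this covering-plus-policy-Lipschitz argument you cannot obtain the $dA/K$ term. Relatedly, your ``deterministic'' bound for the remaining branch gives $H\sqrt{d}$, not the claimed $H\sqrt{\tau d}$: the extra $\sqrt{\tau}$ in the paper comes from pulling out $\lVert\E_{a\sim\tilde\pi^k}[\phi_i(s,a)]\rVert_{\Lambda^{-1}}\le\sqrt\tau$ and then concentrating $\lVert\sum_{(\tilde s,\tilde a)}\phi_i(\tilde s,\tilde a)\varepsilon_k(\tilde s,\tilde a)\rVert_{\Lambda^{-1}}\lesssim H\sqrt d$ within each round via an $\epsilon$-net over directions; this $\sqrt\tau$ is not cosmetic, since the regime $\min\{d^{-1}\log S,A\}>\varepsilon^{-2}$ in Theorem~\ref{thm-local-access} chooses $\tau\ll1$ and relies on it.
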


\begin{proof}[Proof of Lemma~\ref{lem-G2-bound}]
For any fixed $\tilde{s} \in \mathcal{C}_h$ we have denoted $\mathcal{F}_k(\tilde{s},\tilde{a})$ the filtration generated by the information before taking the $k$-th sampling on $\tilde{s},\tilde{a}$, then for \begin{align*}
  \tilde{Z}_{k}(\tilde{s},\tilde{a}) := \E_{a\sim \tilde\pi^{k}_{h,i}}[\phi_i(s,a)]^\top \Lambda_{h,i}^{-1}\phi_i(\tilde{s},\tilde{a}) q_{h,i}^k(\tilde{s},\tilde{a}),
\end{align*}
it holds that 
$$ \E[\tilde{Z}_{k}(\tilde{s},\tilde{a})\lvert \mathcal{F}_{k}(\tilde{s},\tilde{a})] =  \E_{a\sim \tilde\pi_{h,i}^k}[\phi_i(s,a)]^\top \Lambda_{h,i}^{-1}\phi_i(\tilde{s},\tilde{a})\big( r_{h,i}^{\tilde\pi^k_{h,-i}}(\tilde{s},\tilde{a}) + \mathbb{P}_{h}^{\tilde\pi_{h,-i}^{k}}\tilde{V}_{h+1,i}(\tilde{s},\tilde{a}) \big),$$
and $\lvert \tilde{Z}_k(\tilde{s},\tilde{a}) \rvert \leq H \max_{a}\lvert \phi_i(s,a)^\top \Lambda_{h,i}^{-1}\phi_i(\tilde{s},\tilde{a}) \rvert\leq H\tau $ almost surely. Moreover, notice that \begin{align*}
  \text{Var}[\tilde{Z}_{k}(\tilde{s},\tilde{a})\lvert \mathcal{F}_{k}(\tilde{s},\tilde{a})] &\leq \mathbb{E}[\tilde{Z}_{k}(\tilde{s},\tilde{a})^2\lvert \mathcal{F}_{k}(\tilde{s},\tilde{a})]\leq H^2 \lvert \E_{a\sim \tilde\pi_{h,i}^k}[\phi_i(s,a)]^\top \Lambda_{h,i}^{-1}\phi_i(\tilde{s},\tilde{a}) \big\rvert^2,
\end{align*}

thus applying Freedman's inequality as in \cite{li2020generative} leads to with probability at least $1-\delta$,
\begin{align*}
  &\frac{1}{K}\sum_{(\tilde{s},\tilde{a}) \in \mathcal{D}_{h,i} }\sum_{k=1}^K  \E_{a\sim \tilde\pi_{h,i}^k}[\phi_i(s,a)]^\top \Lambda_{h,i}^{-1} \phi_i(\tilde{s},\tilde{a})\big[(q_{h,i}^k(\tilde{s},\tilde{a}) - r_{h,i}^{\tilde\pi^k_{h,-i}}(\tilde{s},\tilde{a}) - \mathbb{P}_{h}^{\tilde\pi_{h,-i}^{k}}\tilde{V}_{h+1,i}(\tilde{s},\tilde{a}))\big]\\
=&\frac{1}{K}\sum_{(\tilde{s},\tilde{a}) \in \mathcal{D}_{h,i} }\sum_{k=1}^K \big( \tilde{Z}_{k}(\tilde{s},\tilde{a}) - \E[\tilde{Z}_k(\tilde{s},\tilde{a})\lvert \mathcal{F}_{k}(\tilde{s},\tilde{a})] \big)\\
\lesssim & \frac{H}{K}\bigg(\sqrt{\sum_{k=1}^K \sum_{(\tilde{s},\tilde{a}) \in \mathcal{D}_{h,i}} \lvert \E_{a\sim \tilde\pi_{h,i}^k}[\phi_i(s,a)]^\top \Lambda_{h,i}^{-1}\phi_i(\tilde{s},\tilde{a}) \big\rvert^2 \log(Kd/\delta) } + \tau \log(Kd/\delta) \bigg)\\
\lesssim & H\sqrt{\frac{\tau}{K}\log(Kd/\delta)} + \frac{H\tau}{K} \log(Kd/\delta)
\end{align*} 
the last line is by \begin{align*}
  &\sum_{k=1}^K \sum_{(\tilde{s},\tilde{a}) \in \mathcal{D}_{h,i}} \lvert \E_{a\sim \tilde\pi_{h,i}^k}[\phi_i(s,a)]^\top \Lambda_{h,i}^{-1}\phi_i(\tilde{s},\tilde{a}) \big\rvert^2\\
   =&  \sum_{k=1}^K \sum_{(\tilde{s},\tilde{a}) \in \mathcal{D}_{h,i}}  \E_{a\sim \tilde\pi_{h,i}^k}[\phi_i(s,a)]^\top \Lambda_{h,i}^{-1}\phi_i(\tilde{s},\tilde{a}) \phi_i(\tilde{s},\tilde{a})^\top \Lambda_{h,i}^{-1}\E_{a\sim \tilde\pi_{h,i}^k}[\phi_i(s,a)] \\
   \leq &  \sum_{k=1}^K  \E_{a\sim \tilde\pi_{h,i}^k}[\phi_i(s,a)]^\top \Lambda_{h,i}^{-1}\E_{a\sim \tilde\pi_{h,i}^k}[\phi_i(s,a)] \\
   \leq &K\tau
\end{align*}
Now taking union bound over all $s\in \mathcal{C}_h $, we get
with probability at least $1-\delta$, \begin{align}\label{eq-G2-bound-SA}
 \lvert G_2 \rvert  & \lesssim H\sqrt{\frac{\tau \log(S/\delta) }{K}}.
\end{align}

On the other hand,  we have \begin{align*}
  &\big\lvert \frac{1}{K}\sum_{(\tilde{s},\tilde{a}) \in \mathcal{D}_{h,i} }\sum_{k=1}^K  \E_{a\sim \tilde\pi_{h,i}^k}[\phi_i(s,a)]^\top \Lambda_{h,i}^{-1} \phi_i(\tilde{s},\tilde{a})\big[({q}_{h,i}^k(\tilde{s},\tilde{a}) - r_{h,i}^{\tilde\pi^k_{h,-i}}(\tilde{s},\tilde{a}) - \mathbb{P}_{h}^{\tilde\pi_{h,-i}^{k}}\tilde{V}_{h+1,i}(\tilde{s},\tilde{a}))\big]\big\rvert\\
  \leq &\frac{\sqrt{\tau}}{K}\sum_{k=1}^K {\big\lVert \sum_{(\tilde{s},\tilde{a})\in \mathcal{D}_{h,i}} \phi_i(\tilde{s},\tilde{a})\big[({q}_{h,i}^k(\tilde{s},\tilde{a}) - r_{h,i}^{\tilde\pi^k_{h,-i}}(\tilde{s},\tilde{a}) - \mathbb{P}_{h}^{\tilde\pi_{h,-i}^{k}}\tilde{V}_{h+1,i}(\tilde{s},\tilde{a}))\big] \big\rVert_{\Lambda^{-1}}}.
\end{align*}
Now for each $k$, consider the $\epsilon_0$-net $\mathcal{N}_\epsilon$ of $\mathbb{B}_{d},$ then it holds that with probability at least $1-\delta,$
\begin{align*}
  &{\big\lVert \sum_{(\tilde{s},\tilde{a})\in \mathcal{D}_{h,i}} \phi_i(\tilde{s},\tilde{a})\big[({q}_{h,i}^k(\tilde{s},\tilde{a}) - r_{h,i}^{\pi^k_{h,-i}}(\tilde{s},\tilde{a}) - \mathbb{P}_{h}^{\pi_{h,-i}^{k}}\tilde{V}_{h+1,i}(\tilde{s},\tilde{a}))\big] \big\rVert_{\Lambda^{-1}}}\\
  \leq &\sup_{g\in \mathcal{N}_\epsilon}  g^\top \Lambda^{-1/2}\sum_{(\tilde{s},\tilde{a})\in \mathcal{D}_{h,i}} \phi_i(\tilde{s},\tilde{a})\big[({q}_{h,i}^k(\tilde{s},\tilde{a}) - r_{h,i}^{\pi^k_{h,-i}}(\tilde{s},\tilde{a}) - \mathbb{P}_{h}^{\pi_{h,-i}^{k}}\tilde{V}_{h+1,i}(\tilde{s},\tilde{a}))\big] + \epsilon_0 H \sqrt{\tau \lvert \mathcal{D}_{h,i}\rvert}\\
  \leq &H\sqrt{\log(\lvert \mathcal{N}_{\epsilon_0}\rvert/\delta)} + \epsilon_0 H \sqrt{\tau \lvert \mathcal{D}_{h,i}\rvert}.
\end{align*}\
Now letting $\epsilon_0 = O(\sqrt{\frac{1}{K \tau\lvert \mathcal{D}_{h,i}\rvert}})$ and noticing that $\log\lvert \mathcal{N}_{\epsilon_0}\rvert = \tilde{O}(d)$, we have it holds with probability at least $1-\delta$ that\begin{equation}\label{eq-G2-tau}
  \lvert G_2 \rvert \lesssim H\sqrt{\tau\big[\log(\lvert \mathcal{N}_{\epsilon_0}\rvert/\delta) + \frac{1}{K}\big] } = \tilde{O}(H\sqrt{\tau d})
\end{equation}

Finally, if we consider the metric over $\mathcal{S}$: 
\begin{align*}
  D(s,s') := \max_{a}\lVert \phi(s,a) - \phi(s',a)\rVert
\end{align*}
then if we consider the minimal $\epsilon$ cover $\mathcal{N}_\epsilon$ of $\mathcal{F}$, it holds trivially that \begin{equation}
  \lvert \mathcal{N}_D(\mathcal{S};\epsilon) \rvert \leq \lvert \mathcal{N}_\epsilon \rvert^A 
\end{equation}
by $\{\phi(s,a)_{a\in \mathcal{A}}: s\in \mathcal{S}\} \subset \mathcal{F}^{A}$ and the fact $\big\lvert \mathcal{N}_{\lVert \cdot \rVert_{2,\infty}}(\mathcal{F}^A;\epsilon)\big\rvert \leq  \lvert \mathcal{N}_\epsilon\rvert^A.$ 

Now for any $s \in \mathcal{C}_h,$ consider its best approximation $\bar{s} \in \mathcal{N}_D(\mathcal{S};\epsilon),$ then it holds that {\footnotesize \begin{equation}\label{eq-A-cover}
\begin{aligned}
  &\big\lvert \frac{1}{K}\sum_{(\tilde{s},\tilde{a}) \in \mathcal{D}_{h,i} }\sum_{k=1}^K  \E_{a\sim \tilde\pi_{h,i}^k}[\phi_i(s,a)]^\top \Lambda_{h,i}^{-1} \phi_i(\tilde{s},\tilde{a})\big[({q}_{h,i}^k(\tilde{s},\tilde{a}) - r_{h,i}^{\tilde\pi^k_{h,-i}}(\tilde{s},\tilde{a}) - \mathbb{P}_{h}^{\tilde\pi_{h,-i}^{k}}\tilde{V}_{h+1,i}(\tilde{s},\tilde{a}))\big]\big\rvert\\
  \leq &\big\lvert \frac{1}{K}\sum_{(\tilde{s},\tilde{a}) \in \mathcal{D}_{h,i} }\sum_{k=1}^K  \E_{a\sim \tilde\pi_{h,i}^k}[\phi_i(\bar{s},a)]^\top \Lambda_{h,i}^{-1} \phi_i(\tilde{s},\tilde{a})\big[({q}_{h,i}^k(\tilde{s},\tilde{a}) - r_{h,i}^{\tilde\pi^k_{h,-i}}(\tilde{s},\tilde{a}) - \mathbb{P}_{h}^{\tilde\pi_{h,-i}^{k}}\tilde{V}_{h+1,i}(\tilde{s},\tilde{a}))\big]\big\rvert \\
   +& \big\lvert \frac{1}{K}\sum_{k=1}^K (\E_{a\sim \tilde\pi_{h,i}^k}[\phi_i(\bar{s},a)] - \E_{a\sim \tilde\pi_{h,i}^k}[\phi_i({s},a)])\Lambda^{-1} \sum_{(\tilde{s},\tilde{a})\in \mathcal{D}_{h,i}} \phi_i(\tilde{s},\tilde{a})\big[({q}_{h,i}^k(\tilde{s},\tilde{a}) - r_{h,i}^{\tilde\pi^k_{h,-i}}(\tilde{s},\tilde{a}) - \mathbb{P}_{h}^{\tilde\pi_{h,-i}^{k}}\tilde{V}_{h+1,i}(\tilde{s},\tilde{a}))\big]\big\rvert.
\end{aligned}\end{equation}}

In particular, noticing that by $D(s,\bar{s}) \leq \epsilon,$ we have \begin{align*}
    &\E_{a\sim \tilde\pi_{h,i}^k}[\phi_i({s},a)]\\
=&\sum_{a}\tilde\pi_{h,i}^k(a\lvert s ) \phi_i(s,a)\\
=&\sum_{a}\tilde\pi_{h,i}^k(a\lvert \bar{s} )  \phi_i(\bar{s},a) +  [\sum_{a}\tilde\pi_{h,i}^k(a\lvert s )- \sum_{a}\tilde\pi_{h,i}^k(a\lvert \bar{s} )]  \phi_i(s,a) + \sum_{a}\tilde\pi_{h,i}^k(a\lvert \bar{s} ) [\phi_i(\bar{s},a) - \phi_i({s},a)].
\end{align*}

Now by \begin{align*}
   \lVert  \sum_{a}\tilde\pi_{h,i}^k(a\lvert \bar{s} ) [\phi_i(\bar{s},a) - \phi_i({s},a)]\rVert   &\leq D(s,\bar{s})  \leq \epsilon
\end{align*}
and 
\begin{align*}
 &\lVert [\sum_{a}\tilde\pi_{h,i}^k(a\lvert s )- \sum_{a}\tilde\pi_{h,i}^k(a\lvert \bar{s} )]  \phi_i(s,a) \rVert\\
 \leq &\max_a\lVert \phi_i(s,a) \rVert \cdot\sum_{a}\lvert \tilde\pi_{h,i}^k(a\lvert s ) - \tilde\pi_{h,i}^k(a\lvert \bar{s} )\rvert\\
 \leq & \lVert  \text{SoftMax}\big(\bar{Q}_{h,i}^k(\cdot,s),\eta_k)  -  \text{SoftMax}\big(\bar{Q}_{h,i}^k(\cdot,\bar{s}),\eta_k\big) \rVert_1 \\
 \leq &\eta_k \lVert \bar{Q}_{h,i}^k(\cdot,s) - 
    \bar{Q}_{h,i}^k(\cdot,\bar{s})\rVert_\infty, 
\end{align*}
where in the last line we have used the explicit formula of Jacobian of Soft-Max function in \cite{gao2017properties} and the following inequality: \begin{align*}
    \lVert \sigma_\eta\big(z) - \sigma_\eta\big(z') \rVert_1 &= \lVert \langle D\sigma_{\lambda}(\xi)(z-z') \rVert_1\\
    &\leq \lVert z - z' \rVert_\infty   \sum_{i,j}\lvert D\sigma_\lambda(\xi) \rvert_{i,j}\\
    &\leq \eta \lVert z - z' \rVert_\infty.
\end{align*}
and \begin{align*}
   \lVert \bar{Q}_{h,i}^k(\cdot,s) - 
    \bar{Q}_{h,i}^k(\cdot,\bar{s})\rVert_\infty  &\leq \max_k\lVert Q_{h,i}^k(\cdot,s) - Q_{h,i}^k(\cdot,\bar{s})\rVert_\infty\\
    &\leq \max_{a}\lVert \phi_{i}(s,a) - \phi_{i}(\bar{s},a) \rVert_2 \lVert \hat{\theta} \rVert_2 \\
    &\leq D(s,\bar{s}) O(Hd/\lambda)
\end{align*}
we get 
 $$ \lVert \E_{a\sim \tilde\pi_{h,i}^k}[\phi_i(\bar{s},a)] - \E_{a\sim \tilde\pi_{h,i}^k}[\phi_i({s},a)]\rVert_{\Lambda^{-1}} \lesssim \epsilon\sqrt{K/\lambda}.$$
with proper choice of $\epsilon$.

Applying Freedman's inequality in the last line of \eqref{eq-A-cover}, we have then with probability at least $1-\delta$,  
\begin{align*}
    &\big\lvert \frac{1}{K}\sum_{(\tilde{s},\tilde{a}) \in \mathcal{D}_{h,i} }\sum_{k=1}^K  \E_{a\sim \tilde\pi_{h,i}^k}[\phi_i(\bar{s},a)]^\top \Lambda_{h,i}^{-1} \phi_i(\tilde{s},\tilde{a})\big[({q}_{h,i}^k(\tilde{s},\tilde{a}) - r_{h,i}^{\tilde\pi^k_{h,-i}}(\tilde{s},\tilde{a}) - \mathbb{P}_{h}^{\tilde\pi_{h,-i}^{k}}\tilde{V}_{h+1,i}(\tilde{s},\tilde{a}))\big]\big\rvert \\
   +& O\big(\epsilon H \sqrt{\tau /\lambda} \big).
\end{align*}
Thus it suffice to control the deviation over the first term with fixed $\bar{s} \in \mathcal{N}_\epsilon^D$. Using exactly the same arguement as in establishing \eqref{eq-G2-bound-SA}, we have for every $\bar{s},$ it holds that \begin{align*}
  \big\lvert \frac{1}{K}\sum_{(\tilde{s},\tilde{a}) \in \mathcal{D}_{h,i} }\sum_{k=1}^K  \E_{a\sim \tilde\pi_{h,i}^k}[\phi_i(\bar{s},a)]^\top \Lambda_{h,i}^{-1} \phi_i(\tilde{s},\tilde{a})\big[({q}_{h,i}^k(\tilde{s},\tilde{a}) - r_{h,i}^{\tilde\pi^k_{h,-i}}(\tilde{s},\tilde{a}) - \mathbb{P}_{h}^{\tilde\pi_{h,-i}^{k}}\tilde{V}_{h+1,i}(\tilde{s},\tilde{a}))\big]\big\rvert  \lesssim H \sqrt{\frac{\tau}{K}\log(1/\delta)},
\end{align*}
then taking union bound over $\bar{s} \in \mathcal{N}_\epsilon^D$ and let $\epsilon = \sqrt{\lambda/K}$ leads to \begin{align}\label{eq-G2-dA}
  \lvert G_2 \rvert & \lesssim H\sqrt{\frac{\tau dA}{K}}. 
\end{align}
Now combining \eqref{eq-G2-bound-SA},\eqref{eq-G2-tau},\eqref{eq-G2-dA} leads to the desired result.

\end{proof}

Combining our bounds on $I_{h,i}^{(1)}$ and $I_{h,i}^{(2)}$, we get then \begin{align}\label{eq-recursive-I2}
  \xi_{h} \leq \xi_{h+1} + \tilde{O} \bigg( \nu \sqrt{d}  + H \sqrt{\tau}\big(\sqrt{\frac{ \log(S) }{K }} \wedge \sqrt{ d(\frac{A}{K} \wedge 1 )}\big) \bigg)
\end{align}

Applying \eqref{eq-recursive-I2} recursively with $\xi_{H+1} = 0$ leads to \begin{align}\label{eq-I2-bound}
  \xi_{1} \lesssim    H\nu \sqrt{d}+ H^2 \sqrt{\tau}\big(\sqrt{\frac{ \log(S) }{K }} \wedge \sqrt{ d(\frac{A}{K} \wedge 1 )}\big)
\end{align}

Combining \eqref{eq-I1-bound} and \eqref{eq-I2-bound} together leads to \begin{align*}
 \lvert V_{1,i}^{\dagger,\tilde{\pi}_{-i}} - V_{1,i}^{\tilde{\pi}} \rvert \lesssim H\nu \sqrt{d}+ H^2 \sqrt{\tau}\big(\sqrt{\frac{ \log(S) }{K }} \wedge \sqrt{ d(\frac{A}{K} \wedge 1 )}\big)
\end{align*}

That provides the CCE guarantee of every epoch of the virtual algorithm.

\subsection{Analysis of the Single Agent Learning Subroutine}
We would show that for each agent $i$, the single agent learning subroutine is provable to output an approximation of the best-response policy when other agents are playing according to $\tilde{\pi}_{-i}$:

\begin{lemma}\label{lem-single-agent}
    With probability at least $1-\delta$, we have Algorithm~\ref{alg:linear-single-Virtual} returns a policy $\tilde{\pi}_i^\dagger$ so that \begin{align*}
        V_{h,i}^{\dagger,\tilde{\pi}_{-i}} - V_{h,i}^{\tilde{\pi}_i^\dagger \times \tilde{\pi}_{-i}} \lesssim H^2\sqrt{\dfrac{\tau\min\{\log S,d\}}{K}} + \nu H\sqrt{d}.
    \end{align*}
\end{lemma}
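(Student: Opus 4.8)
Once the other players' policies $\tilde\pi_{-i}$ are frozen, Assumption~\ref{assumption-MDP} says that the single-player problem faced by agent $i$ (with dynamics $\mathbb P_h^{\tilde\pi_{-i}}$ and reward $r_{h,i}^{\tilde\pi_{h,-i}}$) is $\nu$-approximately linear, and in the virtual construction agent $i$ has random access to it through Algorithm~\ref{alg:LS}. Algorithm~\ref{alg:linear-single-Virtual} is then just least-squares value iteration on the core sets $\mathcal D_{h,i}$ (augmented by $\tilde{\mathcal D}_{h,i}$ outside $\mathcal C_h$) with the greedy policy $\tilde\pi_{h,i}^\dagger$. So the plan is the standard two-step LSVI argument: (i) a uniform bound on the one-step Bellman error of $\tilde Q_{h,i}^\dagger$, then (ii) two backward inductions turning it into a value-gap bound. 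Since sampling is over the core set with $\tilde\pi_{h,-i}$ fixed and fresh at each $h$, no exploration bonus is needed and the analysis is simpler than the multi-agent subroutine.

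\textbf{Step 1: uniform Bellman-error bound.}
Fix $h$ and condition on the (independent, freshly drawn) randomness used at steps $\ge h+1$, so $\tilde V_{h+1,i}^\dagger$ is a fixed function; then for each $(\bar s,\bar a)\in\mathcal D_{h,i}$ the samples $q_{h,i}^1(\bar s,\bar a),\dots,q_{h,i}^K(\bar s,\bar a)$ are i.i.d.\ and unbiased for $Q_{h,i}^{\tilde\pi_{h,-i},\tilde V_{h+1,i}^\dagger}(\bar s,\bar a)$. Setting, for any $(s,a)$,
$$\Delta_{h,i}(s,a):=\tilde Q_{h,i}^\dagger(s,a)-\Big(r_{h,i}^{\tilde\pi_{h,-i}}(s,a)+\mathbb P_h^{\tilde\pi_{-i}}\tilde V_{h+1,i}^\dagger(s,a)\Big),$$
I would decompose $\tilde Q_{h,i}^\dagger$ exactly as in the proof of Lemma~\ref{lem-I1-bound} into a martingale-concentration term $J_1$, a misspecification term $J_2$, and a $\lambda$-regularization term $J_3$: $|J_1|\lesssim H\sqrt{\tau\min\{\log S,d\}/K}$ by Freedman's inequality (using $\sum_{(\tilde s,\tilde a)\in\mathcal D_{h,i}}|\phi_i(s,a)^\top\Lambda_{h,i}^{-1}\phi_i(\tilde s,\tilde a)|^2\le\|\phi_i(s,a)\|_{\Lambda_{h,i}^{-1}}^2\le\tau$) together with a union bound over the $S$ states, replaced by an $\epsilon$-net of $\mathbb B_d$ when $S$ is large---this is exactly Lemma~\ref{lem-J1-bound}; $|J_2|\lesssim\nu\sqrt d$ by Assumption~\ref{assumption-MDP} and Cauchy--Schwarz as in Lemma~\ref{lem-I1-bound}; and $|J_3|\lesssim H\sqrt{\tau\lambda d}$, negligible for $\lambda=1/(KdH^2)$. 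For $s\notin\mathcal C_h$ the same argument applies with $\mathcal D_{h,i},\Lambda_{h,i}$ replaced by $\tilde{\mathcal D}_{h,i},\tilde\Lambda_{h,i}$, using $|\tilde{\mathcal D}_{h,i}|\lesssim d/\tau$ and $\|\phi_i(s,a)\|_{\tilde\Lambda_{h,i}^{-1}}^2\le\tau$. A union bound over $h$ then gives $\|\Delta_{h,i}\|_\infty\lesssim\varepsilon_{\mathrm{stat}}:=H\sqrt{\tau\min\{\log S,d\}/K}+\nu\sqrt d$ for all $h$, with probability $\ge1-\delta$.

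\textbf{Step 2: from Bellman error to value gap.}
Insert $\tilde V_{h,i}^\dagger$ and control the two pieces by backward induction. For the lower piece, since $\tilde V_{h,i}^\dagger(s)=\tilde Q_{h,i}^\dagger(s,\tilde\pi_{h,i}^\dagger(s))$ by greediness, telescoping along the trajectory of $\tilde\pi_i^\dagger\times\tilde\pi_{-i}$ gives
$$\tilde V_{h,i}^\dagger(s)-V_{h,i}^{\tilde\pi_i^\dagger\times\tilde\pi_{-i}}(s)=\E\Big[\textstyle\sum_{h'=h}^H\Delta_{h',i}(s_{h'},a_{h'})\,\Big|\,s_h=s\Big]\le H\varepsilon_{\mathrm{stat}}.$$
For the upper piece, let $a^\star$ attain the maximum in the Bellman optimality equation for $V_{h,i}^{\dagger,\tilde\pi_{-i}}$; assuming inductively $V_{h+1,i}^{\dagger,\tilde\pi_{-i}}-\tilde V_{h+1,i}^\dagger\le\xi_{h+1}$ uniformly,
$$V_{h,i}^{\dagger,\tilde\pi_{-i}}(s)\le r_{h,i}^{\tilde\pi_{h,-i}}(s,a^\star)+\mathbb P_h^{\tilde\pi_{-i}}\tilde V_{h+1,i}^\dagger(s,a^\star)+\xi_{h+1}=\tilde Q_{h,i}^\dagger(s,a^\star)-\Delta_{h,i}(s,a^\star)+\xi_{h+1}\le\tilde V_{h,i}^\dagger(s)+\varepsilon_{\mathrm{stat}}+\xi_{h+1},$$
so $\xi_h\le\xi_{h+1}+\varepsilon_{\mathrm{stat}}$, and with $\xi_{H+1}=0$ we get $V_{h,i}^{\dagger,\tilde\pi_{-i}}-\tilde V_{h,i}^\dagger\le H\varepsilon_{\mathrm{stat}}$. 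Adding the two displays and substituting $\varepsilon_{\mathrm{stat}}$ gives $V_{h,i}^{\dagger,\tilde\pi_{-i}}-V_{h,i}^{\tilde\pi_i^\dagger\times\tilde\pi_{-i}}\le 2H\varepsilon_{\mathrm{stat}}\lesssim H^2\sqrt{\tau\min\{\log S,d\}/K}+\nu H\sqrt d$, as claimed.

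\textbf{Main obstacle.}
The delicate part is Step~1---the uniform-in-$s$ control of the least-squares error with the sharp $\sqrt{\min\{\log S,d\}}$ factor, which needs the Freedman-plus-covering argument (identical in spirit to Lemmas~\ref{lem-J1-bound} and~\ref{lem-G2-bound}) and the bookkeeping that $\tilde V_{h+1,i}^\dagger$ is itself an estimate, handled by the backward order and sample freshness so that one may condition on the later steps. A minor wrinkle is that invoking Assumption~\ref{assumption-MDP} with $V=\tilde V_{h+1,i}^\dagger$ formally requires $\tilde V_{h+1,i}^\dagger\in[0,H-h]$; this follows from the step-$(h+1)$ bound already proved (or can be enforced by clipping $\tilde Q_{h,i}^\dagger$ to $[0,H-h+1]$, which only decreases the error), leaving the $\nu\sqrt d$ misspecification term unchanged.
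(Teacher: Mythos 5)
Your proposal is correct and follows essentially the same route as the paper: the same split of the gap into $V^{\dagger,\tilde\pi_{-i}}-\tilde V^\dagger_{h,i}$ and $\tilde V^\dagger_{h,i}-V^{\tilde\pi_i^\dagger\times\tilde\pi_{-i}}_{h,i}$, each handled by a backward recursion whose per-step increment is the uniform Bellman error of $\tilde Q^\dagger_{h,i}$, bounded by the same martingale/misspecification/regularization ($J_1,J_2,J_3$) argument with the $\min\{\log S,d\}$ covering refinement and the $\tilde{\mathcal D}_{h,i}$ replacement outside $\mathcal C_h$. Your telescoping of the lower piece along the trajectory of $\tilde\pi_i^\dagger\times\tilde\pi_{-i}$ is just a trajectory-wise version of the paper's sup-norm recursion, so the two arguments coincide in substance.
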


\begin{proof}
As in proof of \eqref{eq-epoch-prob}, we have for every $h$,
\begin{align*}
V_{h,i}^{\dagger,\tilde{\pi}_{-i}} - V_{h,i}^{\hat{\pi}_{i}^\dagger\times \tilde{\pi}_{-i}} &= \underbrace{V_{h,i}^{\dagger,\tilde{\pi}_{-i}} - \tilde{V}_{h,i}^{\dagger}}_{:= J_{h,i}^{(1)}} + \underbrace{\tilde{V}_{h,i}^{\dagger} - V_{h,i}^{\hat{\pi}_{i}^\dagger\times \tilde{\pi}_{-i}}}_{:= J_{h,i}^{(2)}}
\end{align*}

For $J_{h,i}^{(1)},$ we have \begin{align*}
V_{h,i}^{\dagger, \tilde{\pi}_{-i}}(s) &= \max_{a}\big\{ r_{h,i}^{\tilde{\pi}_{-i}}(s,a) + \mathbb{P}_{h,i}^{\pi_{-i}}V_{h+1,i}^{\dagger, \tilde{\pi}_{-i}}(s,a)\big\}\\
  &\leq \max_{a}\big\{ r_{h,i}^{\tilde{\pi}_{-i}}(s,a) +  \mathbb{P}_{h,i}^{\pi_{-i}}\tilde{V}_{h+1,i}^\dagger(s,a)\big\} + \max_{s}\big(V_{h+1,i}^{\dagger,\tilde{\pi}_{-i}}(s) - \tilde{V}_{h+1,i}^\dagger(s,a) \big)\\
  &\leq \tilde{V}_{h,i}^\dagger(s) +  \big(\max_{a}\big\{ r_{h,i}^{\tilde{\pi}_{-i}}(s,a) +  \mathbb{P}_{h,i}^{\pi_{-i}}\tilde{V}_{h+1,i}^\dagger(s)\big\} - \tilde{V}_{h,i}^\dagger(s) \big) + \max_{s}\big(V_{h+1,i}^{\dagger,\tilde{\pi}_{-i}}(s) - \tilde{V}_{h+1,i}^\dagger(s) \big)\\
  &\leq \tilde{V}_{h,i}^\dagger(s) +  \max_{a}\big\{ r_{h,i}^{\tilde{\pi}_{-i}}(s,a) +  \mathbb{P}_{h,i}^{\pi_{-i}}\tilde{V}_{h+1,i}^\dagger(s,a) - \tilde{Q}_{h,i}^\dagger(s,a)  \big\} + \max_{s}\big(V_{h+1,i}^{\dagger,\tilde{\pi}_{-i}}(s,a) - \tilde{V}_{h+1,i}^\dagger(s) \big)
  \end{align*}
Now we can bound $$\max_{s,a}\big\{ r_{h,i}^{\tilde{\pi}_{-i}}(s,a) +  \mathbb{P}_{h,i}^{\pi_{-i}}\tilde{V}_{h+1,i}^\dagger(s,a) - \tilde{Q}_{h,i}^\dagger(s,a)  \big\}$$ using the property of coreset and the martingale concentration argument as in bounding $J_1,J_2,J_3$ in section~\ref{sec-proof-lem2}, we have then with probability at least $1-\delta$, \begin{align*}
   \max_s J_{h,i}^{(1)}(s) \leq \max_s J_{h+1,i}^{(1)}(s) + \tilde{O}(H\sqrt{\frac{\tau \min\{d,\log S\}}{K}} + \nu\sqrt{d})\lesssim H^2\sqrt{\frac{\tau \min\{d,\log S\}}{K}} + \nu H\sqrt{d},\quad \forall h \in [H].
\end{align*}

\noindent For $J^{(2)}_{h,i},$ noticing that \begin{align*}
  \lvert  \tilde{V}^{\dagger}_{h,i}(s) - {V}_{h,i}^{\tilde{\pi}_i^\dagger \times \tilde{\pi}_{-i}}(s) \rvert  &= \tilde{V}^{\dagger}_{h,i}(s) - \big( r_{h,i}^{\tilde{\pi}_{-i}}(s,\tilde{\pi}^\dagger_i(s)) + \mathbb{P}_{h,i}^{\tilde{\pi}_{-i}}{V}_{h+1,i}^{{\tilde{\pi}_i^\dagger \times \tilde{\pi}_{-i}}}(s,\tilde{\pi}^\dagger_i(s))\big)\\
   &\leq \big\lvert \tilde{Q}^{\dagger}_{h,i}(s,\tilde{\pi}_i^\dagger(s)) - \big( r_{h,i}^{\tilde{\pi}_{-i}}(s,\tilde{\pi}^\dagger_i(s)) + \mathbb{P}_{h,i}^{\tilde{\pi}_{-i}}\tilde{V}_{h+1,i}^{\dagger}(s,\tilde{\pi}^\dagger_i(s)) \big) \big\rvert  + \max_{s}  \lvert  \tilde{V}^{\dagger}_{h+1,i}(s) - {V}_{h+1,i}^{\tilde{\pi}_i^\dagger \times \tilde{\pi}_{-i}}(s) \rvert\\
   &\leq \max_{s,a}\big\lvert \tilde{Q}^{\dagger}_{h,i}(s,a) - \big( r_{h,i}^{\tilde{\pi}_{-i}}(s,a) + \mathbb{P}_{h,i}^{\tilde{\pi}_{-i}}\tilde{V}_{h+1,i}^{\dagger}(s,a) \big) \big\rvert  + \max_{s}  \lvert  \tilde{V}^{\dagger}_{h+1,i}(s) - {V}_{h+1,i}^{\tilde{\pi}_i^\dagger \times \tilde{\pi}_{-i}}(s) \rvert.\\
   \end{align*}
   Then similar to $J_{h,i}^{(1)},$ we get 
   \begin{align*}
   \max_s J_{h,i}^{(2)}(s) \leq \max_s J_{h+1,i}^{(2)}(s) + \tilde{O}(H\sqrt{\frac{\tau \min\{d,\log S\}}{K}} + \nu\sqrt{d})\lesssim H^2\sqrt{\frac{\tau \min\{d,\log S\}}{K}} + \nu H\sqrt{d},\quad \forall h \in [H].
\end{align*}
\end{proof}

\subsection{Analysis of the Main Algorithm}

Now we would derive the $\varepsilon$-CCE guarantee of the main algorithm by bridging its performance to the virtual algorithm.

Firstly, at the last round of the main algorithm(means it succesfully returns the policy without restarting), the outputed policy $\hat\pi$ \textbf{at every $s\in \mathcal{C}_{h}^\ell$} has the same performance as the $\tau$-th virtual algorithm for some $1\leq \tau \leq L$, for which we denote by $\tilde{\pi}.$
 In particular, since $(s_0,a) \in \mathcal{C}_1^\tau$ for all $a\in A$ by definition, we have then by the union bound arguement, with high probability\begin{align*}
   V^{\dagger,\tilde{\pi}_{-i}}_{1,i}(s_0) - V^{\tilde{\pi}}_{1,i}(s_0) \leq \varepsilon + c \nu H\sqrt{d}, \quad \forall i\in [m].
\end{align*}
Now we still need to bridge   $ V^{\dagger,\tilde{\pi}_{-i}}_{1,i}(s_0)$ to $ V^{\dagger,\hat{\pi}_{-i}}_{1,i}(s_0)$ and  $ V^{\dagger,\tilde{\pi}}_{1,i}(s_0)$ to $ V^{\hat{\pi}}_{1,i}(s_0).$ To do that, we define another virtual algorithm(called quasi algorithm) that nearly same as the algorithm in section~\ref{sec-virtual-alg}, except that for all $s\in S,$ we learn all $\bar{\pi}$  from previously defined $\hat{V}_{h+1,i}$ for every $h,i.$
Since this algorithm also coupled with the main algorithm, we have its output policy at $\tau$-th epoch is same as the main algorithm \textbf{at every $s\in S$}. Thus for every $h,i,s,a$ we have \begin{align*}
   Q^{\hat\pi}_{h,i}(s,a)  = Q^{\bar\pi}_{h,i}(s,a).
\end{align*}

Thus we need only provide the guarantee of the quasi algorithm: 

At every epoch of the Quasi algorithm, for each agent $i$ and its core-set $\mathcal{D}_{h,i}$, consider the $N$ reward paths $\{\bar{r}_{h,i,n}(\tilde{s},\tilde{a}) \}$ generated in the second last uncertainty check loop, then we have by i.i.d. concentration it holds w.h.p.
\begin{align*}
 \lvert  V_{1,i}^{\bar{\pi}}(s_0) - \frac{1}{N}\sum_{n=1}^N \bar{r}_{h,i,n}(\tilde{s},\tilde{a}) \rvert    \lesssim \frac{H}{\sqrt{N}}.
\end{align*}

Taking union bounds over epochs, we have the result holds for every epoch. By the same argument, such result also holds for the quasi algorithm.

Now if we consider the $\tau$-th epoch, we have then \begin{align}\label{eq-Q-error}
   \big \lvert  V^{\hat{\pi}}_{1,i}(s_0) - V^{\tilde{\pi}}_{1,i}(s_0) \big \rvert \lesssim  \frac{H}{\sqrt{N}} .
\end{align}

Combining  \eqref{eq-Q-error} with the $\varepsilon$-CCE guarantee of $\tilde{\pi}$, we have
\begin{align*}
  \lvert V_{1,i}^{\hat\pi}(s_0) - V_{1,i}^{\dagger,\tilde{\pi}_{-i}}(s_0)\rvert \leq \varepsilon + c\big( \nu H\sqrt{d} + \frac{H}{\sqrt{N}}\big)
\end{align*}
Now it is sufficient to bridge $ V_{1,i}^{\dagger,\tilde{\pi}_{-i}}(s_0)$ with $ V_{1,i}^{\dagger,\hat{\pi}_{-i}}(s_0)$, to do that, we consider another virtual algorithm, called the virtual-II algorithm, its operation on learning $\hat{\pi}$ is same as the quasi algorithm, the main difference is its single-agent learning procedure: for each $i,h$, it maintains a complementary core dataset $\tilde{D}_{h,i}$ in the same way as the virtual I algorithm, and taking LSVI using the collected complementary data for $s\notin \mathcal{C}_h$  while do the same LSVI as the main algorithm for $s\in \mathcal{C}_h.$ Applying Lemma~\ref{lem-single-agent} leads to the following error guarantee for every epoch output policy of the virtual-I, virtual-II algorithm:\begin{align}\label{eq-dagger-hat-error}
  \lvert V^{\tilde{\pi}^{\dagger}_i \times \tilde{\pi}_{-i}}_{1,i}(s_0) - V^{\dagger,\tilde{\pi}_{-i}}_{1,i}(s_0) \rvert \leq \varepsilon + c\nu H\sqrt{d},\quad  \lvert V^{\hat{\pi}^{\dagger}\times \hat{\pi}_{-i}}_{1,i}(s_0) - V^{\dagger,\hat{\pi}_{-i}}_{1,i}(s_0) \rvert \leq \varepsilon + c\nu H\sqrt{d}.
\end{align}
On the other hand, the last rollout procedure for every $i$ guarantees at the $\tau$-th epoch, \begin{align}\label{eq-dagger-hat-concentration}
  \lvert V^{\hat{\pi}^{\dagger,i}}_{1,i}(s_0) - V^{\tilde{\pi}^{\dagger,i}}_{1,i}(s_0) \rvert \leq \frac{H}{\sqrt{N}}  
\end{align}

Combining \eqref{eq-dagger-hat-error} and \eqref{eq-dagger-hat-concentration}, we have $
  \lvert V_{1,i}^{\dagger, \tilde{\pi}_{-i}}(s_0) - V_{1,i}^{\dagger,\hat{\pi}_{-i}}(s_0) \rvert \lesssim \varepsilon + \frac{H}{\sqrt{N}}.$ That then leads to the desired bound $
  \lvert V_{1,i}^{\hat\pi}(s_0) - V_{1,i}^{\dagger,\hat{\pi}_{-i}}(s_0)\rvert \lesssim \varepsilon + \frac{H}{\sqrt{N}}+ c\nu H\sqrt{d}.
$ Finally letting $N \asymp H^2/\varepsilon^2$ leads to the desired result.

\section{Results under the Random Access Model}\label{appendix-generative}
\subsection{Algorithm under the Random Access Model}

We propose the algorithm under the random access model in  Algorithm~\ref{alg:linear-generative} and make several remarks.

\begin{remark}
    When letting $\beta_{h,i} = 0$ and $\alpha_k = \frac{1}{k}$, the update formulas of $\hat{Q}_{h,i}$ and $\hat{V}_{h,i}$ is same as those in Algorithm~\ref{alg:UC-linear} and Algorithm~\ref{alg:UC-linear-Virtual}.
\end{remark}

\begin{remark}
Compared with the algorithm under the local access model, Algorithm~\ref{alg:linear-generative} doesn't contain the Policy-Rollout subroutine(Line~15 to Line~18 and Line~27 to Line~32) in Algorithm~\ref{alg:Main}. The main reason is that the random access protocol makes the algorithm easy have high confidence to all states after the exploration phase in line~2. 
\end{remark}
\begin{remark}
  When we consider the tabular case, i.e. $\phi_{i}(s,a) = \bm{e}_{s,a} \in \mathbb{R}^{S A_i}$, Algorithm~\ref{alg:linear-generative} with $$\lambda = 0,\quad \tau = 1, \quad \alpha_k = \frac{c_\alpha\log K}{k-1+c_\alpha\log K}, \quad \beta_{i,h} = c_b \sqrt{\frac{\log^3(\frac{KS\sum_i A_i}{\delta}}{KH}}\sum_{k=1}^K \alpha_k^K \bigg\{ \text{Var}_{\pi^k_{i,h}(\cdot \lvert s}\big( q^k_{i,h}(s,\cdot) \big)+H \bigg\}  $$ with \begin{align*}
      \alpha^k_i = \alpha_i\prod_{j=i+1}^k (1-\alpha_j) \text{ if }0<i<k,\quad \alpha_i^k = \alpha_k \text{ if }i=k
  \end{align*} recovers the algorithm proposed in \cite{li2022minimax}. They have shown that such selection of parameter allows the algorithm to learn a $\varepsilon$-CCE with $\tilde{O}(\frac{H^4S \sum_i A_i}{\varepsilon^2})$ sample complexity.
\end{remark}

\newpage
\begin{algorithm}[H]
  \label{alg:linear-generative}
  \SetKwInOut{Input}{Input}
  \SetKwInOut{Output}{output}
  \caption{Linear Game Random Access}
  \SetAlgoLined
  \Input{learning rates $\{\alpha_k\}$ and $\{\eta_{k+1}\}$}
  \For{$i=1$ \KwTo $m$}{
  select $ \mathcal{D}_i \subset S\times A_i,i\in [m] $ such that 
  $$\max_{s,a}\phi_i(s,a)\big(\sum_{\bar{s}\in \mathcal{C}_i}\sum_{\bar{a} \in \mathcal{A}_i} \phi_i(\bar s,\bar a)\phi_i(\bar s,\bar a)^\top + \lambda I \big)^{-1} \phi_i(s,a)   < \tau,\quad \forall i\in[m]. $$
  }
  \For{$h = H$ \KwTo $1$}{
   \For{$k = 1$ \KwTo $K$}{
    \For{$i = 1$ \KwTo $m$}{
      \For{$(s,a)\in \mathcal{D}_i$}{
      $(r,s') \leftarrow \text{local sampling}(i, s, a, \pi^k_{h,-i} )$    
     Compute $q_{h,i}^k(s,a) = r + \hat{V}_{h+1,i}(s')$.  
    }
    \begin{align*}
      &\theta^k = \text{argmin}_{\theta} \sum_{(s,a)\in \mathcal{D}_i} \lvert q^{k}_{h,i}(s,a) -  \langle\phi_{i}(s,a), \theta \rangle \rvert_2^2 + \lambda\lVert \theta \rVert_2^2 \\
      &{Q}_{h,i}^k(s,a) =  \langle \phi_i(s,a),(1-\alpha_k)\theta^{k-1}+\alpha_k\theta^k\rangle.
       \end{align*}
      $$ \pi_{h,i}^{k+1}(a_i | s) = \frac{\exp(\eta_{k+1} Q_{h,i}^k(s,a_i))}{\sum_{a'}\exp(\eta_{k+1} Q_{h,i}^{k}(s,a'))}, \quad \forall s \in S.$$
    }
     }
    \For{$i = 1$ \KwTo $m$}{
     $$\hat{V}_{h,i}(s) = \min\left\{ \sum_{k=1}^{K} \alpha_k^K \langle \pi_{h,i}^{k'}, q_{h,i}^{k'}(s,\cdot) \rangle + \beta_{h,i}(s), H-h+1 \right\}, \forall s \in S.$$ 
    }
    }
    \Return{$\hat{\pi}_{h,i}:= \sum_{k=1}^K \alpha_k^K \pi_{h,i}^k$.}
\end{algorithm}

\subsection{Proof of Theorem~\ref{thm-generative}}

Firstly, we would note that when $\beta = 0$, Algorithm~\ref{alg:linear-generative} is nearly same as the Algorithm~\ref{alg:UC-linear-Virtual} despite a slight difference on the construction of the coreset $\mathcal{D}_{h,i}$. And during the proof of Lemma~\ref{lem-I1-bound} and Lemma~\ref{lem-I2-bound}, the only property we have required for the $\mathcal{D}_{h,i}$ can be summarized as the following:\begin{align*}
    \lvert \mathcal{D}_{h,i}\rvert \lesssim \frac{d(1+\tau)}{\tau}, \quad  \sup_{s,a}\phi_i(s,a)(\sum_{\bar{s},\bar{a} \in \mathcal{\mathcal{D}_{h,i}}} \phi(\bar{s},\bar{a})\phi(\bar{s},\bar{a})^\top +\lambda I)^{-1} \phi_i(s,a) < \tau.
\end{align*}
And such property is straightforward to verify for $\mathcal{D}_{h,i} \equiv \mathcal{D}_i.$ Thus the analysis in Appendix~C.2 and the result in \eqref{eq-epoch-prob} can be applied for Algorithm~\ref{alg:linear-generative}. To prove Theorem~\ref{thm-generative}, it suffice to specify the selection of parameters based on \eqref{eq-epoch-prob}: 

1. When $\min\{ d^{-1}\log S,A \} \leq \varepsilon^{-2}$: letting \begin{align*}
K = \tilde{O}(H^4 d \varepsilon^{-2}\min\{d^{-1}\log S,A\}), \tau = 1
\end{align*}
leads to the $\varepsilon + c\nu\sqrt{d}H $-CCE guarantee, in this case, the total sample complexity is given by $\tilde{O}(KmdH) = \tilde{O}(\varepsilon^{-2}H^5 d^2 m \min\{d^{-1}\log S, A\})$.

\noindent 2. When $\min\{ d^{-1}\log S,A \} > \varepsilon^{-2}:$ letting \begin{align*}
K = \tilde{O}(H^4d\varepsilon^{-2}), \tau = \tilde{O}(H^{-4}\varepsilon^2 d^{-1})
\end{align*}
leads to the $\varepsilon + cH\nu\sqrt{d}$-CCE guarantee, in this case, the total sample complexity is given by $\tilde{O}(KmdH/\tau) = \tilde{O}(\varepsilon^{-4} H^9d^3). $

Combining the sample complexity of this two cases leads to the $\tilde{O}(\min\{\varepsilon^{-2}dH^4,d^{-1}\log S, A\} d^2 H^5 m\varepsilon^{-2})$ sample complexity result.

\section{Proof of Auxiliary Results}
\subsection{Proof of Lemma~\ref{lem-ftrl-our-bound}}

\begin{proof}
We recall the following standard regret result of FTRL \cite{Lattimore2020BanditA}: 
\begin{lemma}\label{lem-FTRL}
  For a sequence $\{y_t\}_{t=1}^T \in [0,1]^A$ and the policy sequence generated by $$\pi_{t+1,a}\propto \exp(-\eta \sum_{k=1}^t y_{ka})$$
  with $\eta = \sqrt{2\log(A)/T}$, it holds that\begin{align*}
    \max_{a} \frac{1}{T}\sum_{t=1}^T\big(  \langle \pi_t, y_t\rangle -  y_{ta} \big)\leq \sqrt{2\frac{\log A}{T}}
  \end{align*}
\end{lemma}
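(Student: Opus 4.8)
The plan is to prove this via the standard log-partition (free-energy) potential argument for the exponential-weights / Hedge scheme. First I would introduce the unnormalized weights $W_t := \sum_{a} \exp\bigl(-\eta \sum_{k=1}^{t-1} y_{ka}\bigr)$, so that $W_1 = A$ (the empty sum) and the played distribution is exactly $\pi_{t,a} = \exp\bigl(-\eta\sum_{k=1}^{t-1} y_{ka}\bigr)/W_t$. The whole argument then reduces to squeezing the single scalar quantity $\log(W_{T+1}/W_1)$ between a loss-based upper bound and a comparator-based lower bound.

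For the upper bound I would analyze the one-step multiplicative increment, noting that $W_{t+1}/W_t = \sum_a \pi_{t,a}\exp(-\eta y_{ta})$. Since $y_{ta}\ge 0$ and $\eta>0$, the elementary inequality $e^{-x}\le 1-x+\tfrac{x^2}{2}$ applies, and because $y_{ta}\in[0,1]$ we have $y_{ta}^2\le y_{ta}\le 1$, hence $\sum_a \pi_{t,a}y_{ta}^2\le 1$. This gives $W_{t+1}/W_t \le 1 - \eta\langle \pi_t, y_t\rangle + \tfrac{\eta^2}{2} \le \exp\bigl(-\eta\langle\pi_t,y_t\rangle + \tfrac{\eta^2}{2}\bigr)$ after applying $1+x\le e^x$. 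Telescoping over $t=1,\dots,T$ yields
\[
\log\frac{W_{T+1}}{W_1} \le -\eta\sum_{t=1}^T \langle \pi_t, y_t\rangle + \frac{\eta^2 T}{2}.
\]

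For the lower bound I would retain a single term in the definition of $W_{T+1}$: for any fixed action $a$, $W_{T+1}\ge \exp\bigl(-\eta\sum_{t=1}^T y_{ta}\bigr)$, while $W_1=A$, so $\log(W_{T+1}/W_1)\ge -\eta\sum_{t=1}^T y_{ta} - \log A$. Chaining the two bounds and rearranging gives, for every $a$, the inequality $\eta\sum_{t=1}^T(\langle \pi_t, y_t\rangle - y_{ta}) \le \log A + \tfrac{\eta^2 T}{2}$. Dividing through by $\eta T$ and substituting the prescribed $\eta=\sqrt{2\log A/T}$ makes both terms on the right-hand side equal to $\sqrt{\log A/(2T)}$, which sum to $\sqrt{2\log A/T}$; since the resulting bound is uniform in the comparator $a$, taking the maximum over $a$ closes the proof.

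There is no genuine obstacle here, as this is the classical Hedge regret bound, so the only points needing care are bookkeeping ones: verifying the sign condition $x\ge 0$ in $e^{-x}\le 1-x+\tfrac{x^2}{2}$ (valid since the losses are nonnegative), exploiting $y_{ta}\in[0,1]$ to control the second-order term rather than a crude norm bound, and confirming that the choice of $\eta$ exactly balances the bias term $\log A/(\eta T)$ against the stability term $\eta/2$. I would close by remarking that because every step holds for an arbitrary fixed $a$, the $\max_a$ in the statement is immediate.
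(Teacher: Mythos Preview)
Your proof is correct and complete. The paper does not actually prove this lemma at all; it merely states it as ``the following standard regret result of FTRL'' and cites \cite{Lattimore2020BanditA}. What you have written is precisely the classical log-partition potential argument for Hedge/exponential weights, so in effect you have supplied the textbook proof the paper omits. The bookkeeping points you flag (nonnegativity of $\eta y_{ta}$ for the inequality $e^{-x}\le 1-x+x^2/2$, the bound $y_{ta}^2\le 1$, and the balancing of $\log A/(\eta T)$ against $\eta/2$) are exactly the places that require care, and you handle them correctly.
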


Now since it holds the following lemma regarding the bound of $Q_{h,i}^k$: \begin{lemma}\label{lem-Q-bound}
  With probability at least $1-\delta,$ we have \begin{align*}
    \max_{s,a} \lvert \tilde{Q}_{h,i}^k(s,a) \rvert \lesssim H\min \{\sqrt{d}, 1+\sqrt{\log(SA/\delta)} + \nu \sqrt{d}\}.
  \end{align*}
\end{lemma}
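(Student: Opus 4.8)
My thinking: We want to bound $\max_{s,a}|\tilde Q_{h,i}^k(s,a)|$ where $\tilde Q_{h,i}^k(s,a) = \phi_i(s,a)^\top \Lambda_{h,i}^{-1}\sum_{(\tilde s,\tilde a)\in \mathcal{D}_{h,i}}\phi_i(\tilde s,\tilde a) q_{h,i}^k(\tilde s,\tilde a)$, and $q_{h,i}^k(\tilde s,\tilde a) = r + \tilde V_{h+1,i}(s')$ which lies in $[0,H]$ (since $\tilde V_{h+1,i}\le H-h$ and $r\in[0,1]$). There are two bounds to establish: the $\sqrt d$ bound (worst-case, always valid) and the $1+\sqrt{\log(SA/\delta)}+\nu\sqrt d$ bound (via decomposition into a "true" linear part plus martingale noise plus misspecification). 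Then take the minimum.

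The $\sqrt d$ bound: For each fixed $(s,a)$, write $\tilde Q_{h,i}^k(s,a) = \sum_{(\tilde s,\tilde a)\in\mathcal D_{h,i}} \phi_i(s,a)^\top\Lambda_{h,i}^{-1}\phi_i(\tilde s,\tilde a)\, q_{h,i}^k(\tilde s,\tilde a)$. Bound $|q_{h,i}^k|\le H$ and apply Cauchy–Schwarz to get $|\tilde Q_{h,i}^k(s,a)| \le H\sqrt{|\mathcal D_{h,i}|\sum_{(\tilde s,\tilde a)\in\mathcal D_{h,i}} |\phi_i(s,a)^\top\Lambda_{h,i}^{-1}\phi_i(\tilde s,\tilde a)|^2}$. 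The sum equals $\phi_i(s,a)^\top \Lambda_{h,i}^{-1}\big(\sum_{(\tilde s,\tilde a)}\phi_i(\tilde s,\tilde a)\phi_i(\tilde s,\tilde a)^\top\big)\Lambda_{h,i}^{-1}\phi_i(s,a) \le \phi_i(s,a)^\top\Lambda_{h,i}^{-1}\phi_i(s,a)$ (since the bracket is $\preceq \Lambda_{h,i}$) $\le 1/\lambda$ (or more sharply, if $s$ is covered, $\le\tau$; but the crude $1/\lambda$ suffices, and with $\lambda=\lambda_0=1/(KdH^2)$ this is loose — instead I should use $\phi_i(s,a)^\top\Lambda_{h,i}^{-1}\phi_i(s,a)\le \|\phi_i\|_2^2/\lambda_{\min}(\Lambda_{h,i})$... actually the clean route is $\phi_i(s,a)^\top\Lambda_{h,i}^{-1}\phi_i(s,a)\le 1$ is false in general). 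Better: use $\Lambda_{h,i}^{-1}\preceq \lambda^{-1}I$ gives the sum $\le \lambda^{-1}\phi_i(s,a)^\top\Lambda_{h,i}^{-1}\phi_i(s,a)$... Let me instead bound directly $|\tilde Q_{h,i}^k(s,a)| = |\phi_i(s,a)^\top\Lambda_{h,i}^{-1}w|$ where $w=\sum\phi_i(\tilde s,\tilde a)q_{h,i}^k(\tilde s,\tilde a)$, $\|w\|_2\le H|\mathcal D_{h,i}|\le HC_{\max}=\tilde O(Hd)$; then $|\tilde Q|\le \|\phi_i(s,a)\|_2\|\Lambda_{h,i}^{-1}\|_{op}\|w\|_2 \le \lambda^{-1}\tilde O(Hd)$ — too loose. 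The sharp route: $\tilde Q_{h,i}^k$ is a least-squares prediction of a function bounded by $H$ on a $\tau$-cover, so it should be $\tilde O(H\sqrt d)$; I'll obtain this via the standard fact that $|\phi_i(s,a)^\top\Lambda_{h,i}^{-1}\sum\phi_i(\tilde s,\tilde a)g(\tilde s,\tilde a)|\le \|g\|_\infty\sqrt{d}$ whenever $\max_{s,a}\phi_i(s,a)^\top\Lambda_{h,i}^{-1}\phi_i(s,a)\le 1$ and... actually the cleanest is: write it as $\phi_i(s,a)^\top\theta^k$ where $\theta^k=\Lambda_{h,i}^{-1}\sum\phi_i(\tilde s,\tilde a)q_{h,i}^k$; by the same argument as bounding $J_2,J_3$ (misspecification + ridge terms), plus the fact that the regression target $Q_{h,i}^{\tilde\pi^k_{h,-i},\tilde V_{h+1,i}}$ is $\nu$-close to a linear function with $\|\theta\|_2\le H\sqrt d$, plus a Freedman-type concentration for the noise $q_{h,i}^k - Q_{h,i}^{\tilde\pi^k_{h,-i},\tilde V_{h+1,i}}$, we get $\|\theta^k\|_2 = O(H\sqrt d)$, hence $|\tilde Q|\le H\sqrt d$ up to constants. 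For the sharper $1+\sqrt{\log(SA/\delta)}+\nu\sqrt d$ bound: the true target at $(s,a)$ is $Q^{\tilde\pi^k_{h,-i},\tilde V_{h+1,i}}_{h,i}(s,a)\in[0,H]$; the deviation $\tilde Q_{h,i}^k(s,a)-\tfrac1{?}$-type terms split into (i) $O(\nu\sqrt d)$ misspecification, (ii) $O(H\sqrt{\tau\lambda d})=\tilde O(1)$ ridge term with $\lambda=\lambda_0$, (iii) a single-sample martingale term $\phi_i(s,a)^\top\Lambda_{h,i}^{-1}\sum\phi_i(\tilde s,\tilde a)(q^k-\mathbb E q^k)$ which, being a sum of $|\mathcal D_{h,i}|$ bounded-by-$H\tau$ terms with total $\Lambda^{-1}$-weighted "variance" $\le H^2$, concentrates at rate $H\sqrt{\log(SA/\delta)}$ after a union bound over $s\in\mathcal S$, $a\in\mathcal A_i$ (and over $k$, absorbed in polylog). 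Adding the target magnitude $\le H$... hmm, but we want the bound to not have a leading $H$ alone.

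Plan, then: (1) State and prove the crude bound $\max_{s,a}|\tilde Q_{h,i}^k(s,a)|\le \tilde O(H\sqrt d)$ by Cauchy–Schwarz together with $\sum_{(\tilde s,\tilde a)\in\mathcal D_{h,i}}\big(\phi_i(s,a)^\top\Lambda_{h,i}^{-1}\phi_i(\tilde s,\tilde a)\big)^2\le \phi_i(s,a)^\top\Lambda_{h,i}^{-1}\phi_i(s,a)\le \tau$ and $|\mathcal D_{h,i}|\le C_{\max}=\tilde O(d)$, noting $|q_{h,i}^k|\le H$; this gives $|\tilde Q_{h,i}^k(s,a)|\le H\sqrt{\tau C_{\max}}=\tilde O(H\sqrt d)$. (2) For the refined bound, decompose $\tilde Q_{h,i}^k(s,a)$ around the linear proxy $\phi_i(s,a)^\top\theta_{h,i}^{\tilde\pi^k_{h,-i},\tilde V_{h+1,i}}$ exactly as in the display chain in Section~\ref{sec-proof-lem2} (the $J_1,J_2,J_3$ decomposition), use $\|Q^{\tilde\pi^k_{h,-i},\tilde V_{h+1,i}}_{h,i}\|_\infty\le H$ trivially, $|J_2|=\tilde O(\nu\sqrt d)$ from Assumption~\ref{assumption-MDP} and Cauchy–Schwarz, $|J_3|\le H\sqrt{\tau\lambda d}=\tilde O(1)$ with $\lambda=\lambda_0$, and for $J_1$ apply the single-step Freedman/Azuma bound for the martingale difference $\phi_i(s,a)^\top\Lambda_{h,i}^{-1}\phi_i(\tilde s,\tilde a)(q_{h,i}^k-\mathbb E[q^k_{h,i}\mid\mathcal F])$ whose terms are bounded by $H\tau$ with $\Lambda^{-1}$-weighted variance summing to $\le H^2$, yielding $|J_1|\le \tilde O(H\sqrt{\log(SA/\delta)})$ after union bound over $(s,a)\in\mathcal S\times\mathcal A_i$ and $k\in[K]$. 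Summing, $|\tilde Q_{h,i}^k(s,a)|\le H + \tilde O(H\sqrt{\log(SA/\delta)}) + \tilde O(\nu\sqrt d) + \tilde O(1) = \tilde O\big(H(1+\sqrt{\log(SA/\delta)}+\nu\sqrt d)\big)$; absorbing $H$ into the first term inside the min (since $1\le\sqrt{\log(SA/\delta)}$ generically, or simply keeping the constant). (3) Take the minimum of the two bounds to conclude.

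The main obstacle will be step (2)'s $J_1$ term: getting the union bound over all of $\mathcal S$ (which may be infinite) at cost only $\log S$ requires either the discretization argument used later in Lemma~\ref{lem-G2-bound} (covering $\mathcal S$ under the feature-pseudometric $D$, at cost either $\log S$ directly or $d\log(1/\epsilon)$ via an $\epsilon$-net, whichever is smaller — which is exactly where the $\min\{\log S, d\}$-type trade-off appears, though here we only claim $\log(SA/\delta)$), plus handling the dependence of $\Lambda_{h,i}$ and $\tilde V_{h+1,i}$ on the sampling history (they are $\mathcal F_k$-measurable, so the martingale structure is intact, but one must be careful that $\tilde V_{h+1,i}$ is fixed before the $k$-th batch of samples at level $h$). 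I would reuse verbatim the Freedman-inequality computation from the proof of Lemma~\ref{lem-G2-bound} (the bound $\sum_{k,(\tilde s,\tilde a)}|\mathbb E_{a}[\phi_i(s,a)]^\top\Lambda_{h,i}^{-1}\phi_i(\tilde s,\tilde a)|^2\le K\tau$, here with a single $k$ and without the policy-averaging, giving $\le\tau$), so this is more bookkeeping than new mathematics; the genuinely delicate point is just ensuring the covering-number argument for infinite $\mathcal S$ is invoked correctly so that "$\log(SA/\delta)$" is legitimate when $S=\infty$ (in which case one silently replaces it by the $d\log(\cdot)$ net bound, matching the $\min\{d,\log S\}$ appearing elsewhere).
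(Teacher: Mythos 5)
Your proposal follows essentially the same route as the paper: the deterministic Cauchy--Schwarz bound $|\tilde{Q}_{h,i}^k(s,a)|\le H\sqrt{|\mathcal{D}_{h,i}|}\cdot\big(\sum_{(\tilde s,\tilde a)}[\phi_i(s,a)^\top\Lambda_{h,i}^{-1}\phi_i(\tilde s,\tilde a)]^2\big)^{1/2}\lesssim H\sqrt{d}$, a decomposition around the linear proxy $\theta_{h,i}^{\tilde\pi^k_{h,-i},\tilde V_{h+1,i}}$ into the true marginal $Q$ (magnitude $O(H)$), a misspecification term $O(\nu\sqrt{d})$, a ridge term $O(H\sqrt{\tau\lambda d})$, and a per-$(s,a)$ Azuma/Freedman martingale term with union bound over $\mathcal{S}\times\mathcal{A}_i$, and finally the minimum of the two bounds; your concern about infinite $\mathcal{S}$ is resolved exactly as you guessed (the paper simply takes the union bound, and the $\sqrt d$ branch of the min covers the case of huge or infinite $S$, so no covering argument is needed here). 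The one piece missing from your plan is the case $s\notin\mathcal{C}_h$: your coverage inequality $\phi_i(s,a)^\top\Lambda_{h,i}^{-1}\phi_i(s,a)\le\tau$ is precisely the definition of $\mathcal{C}_{h,i}$, so it fails for uncovered states, where the virtual algorithm computes $\tilde Q^k_{h,i}$ from the complementary core set; the paper handles this by rerunning the identical argument with $\tilde{\mathcal{D}}_{h,i},\tilde\Lambda_{h,i}$, which by construction satisfy $|\tilde{\mathcal{D}}_{h,i}|\lesssim d/\tau$ and $\max_a\phi_i(s,a)^\top\tilde\Lambda_{h,i}^{-1}\phi_i(s,a)\le\tau$ for \emph{all} states, and you need the same half-sentence. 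Two minor quantitative remarks: the martingale variance sums to $H^2\tau$ (not $H^2$), so the concentration term is $H\sqrt{\tau\log(SA/\delta)}$ and the $\tau$ disappears from the lemma statement only because $\tau\le 1$ in the parameter choices; and your detour attempting $\|\theta^k\|_2=O(H\sqrt d)$ via operator-norm bounds is unnecessary --- the Cauchy--Schwarz route you settle on in step (1) is the paper's argument and suffices.
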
 
\begin{proof}[Proof of Lemma~\ref{lem-Q-bound}] 
  Denote \begin{align*}
    \theta_{h,i}^{\pi_{h,-i}^k,\tilde{V}_{h+1,i}}:= \text{argmin}_{\lVert \theta\rVert_2\leq H\sqrt{d}} \lVert \phi_i(s,a)^\top\theta - Q_{h,i}^{\tilde{\pi}_{-i,h}^k,\tilde{V}_{h+1,i}} \rVert_\infty
  \end{align*}
  \textbf{1. When }$s \in \mathcal{C}_h$: 
  \begin{align*}
  \tilde{Q}_{h,i}^k(s,a) &= \phi_i(s,a)^\top \Lambda_{h,i}^{-1}\sum_{(\tilde{s},\tilde{a})\in \mathcal{D}_{h,i}} \phi_i(\tilde{s},\tilde{a}) q_{h,i}^k(\tilde{s},\tilde{a})\\
  &= \phi_i(s,a)^\top \Lambda_{h,i}^{-1}\sum_{(\tilde{s},\tilde{a})\in \mathcal{D}_{h,i}} \phi_i(\tilde{s},\tilde{a}) [q_{h,i}^k(\tilde{s},\tilde{a})\pm \phi_i(\tilde{s},\tilde{a})^\top \theta_{h,i}^{\pi_{h,-i}^k,\tilde{V}_{h+1,i}}]\\
  &= Q_{h,i}^{\pi^k_{-i,h},\tilde{V}_{h+1,i}}(s,a)+\phi_i(s,a)^\top \Lambda_{h,i}^{-1}\sum_{(\tilde{s},\tilde{a})\in \mathcal{D}_{h,i}} \phi_i(\tilde{s},\tilde{a}) [q_{h,i}^k(\tilde{s},\tilde{a})- \phi_i(\tilde{s},\tilde{a})^\top \theta_{h,i}^{\pi_{h,-i}^k,\tilde{V}_{h+1,i}}]  + O(\nu+H\sqrt{\tau \lambda d})\\
  &=Q_{h,i}^{\pi^k_{-i,h},\tilde{V}_{h+1,i}}(s,a) + O( \nu\sqrt{d\log d} + H\sqrt{\tau \log(1/\delta)} )\\
  &= O\big(H(1+\sqrt{\tau \log(1/\delta)})+ \nu \sqrt{d\log d}\big),
\end{align*}
Where the last second line is by \begin{align*}
    & \phi_i(s,a)^\top \Lambda_{h,i}^{-1}\sum_{(\tilde{s},\tilde{a})\in \mathcal{D}_{h,i}} \phi_i(\tilde{s},\tilde{a}) [q_{h,i}^k(\tilde{s},\tilde{a})- \phi_i(\tilde{s},\tilde{a})^\top \theta_{h,i}^{\pi_{h,-i}^k,\tilde{V}_{h+1,i}}]\\
    =& \phi_i(s,a)^\top \Lambda_{h,i}^{-1} \sum_{(\tilde{s},\tilde{a})\in \mathcal{D}_{h,i}} \phi_i(\tilde{s},\tilde{a}) \big[\underbrace{Q_{h,i}^{\pi^k_{-i,h}}(\tilde{s},\tilde{a}) - \phi_i(\tilde{s},\tilde{a})^\top \theta_{h,i}^{\pi_{h,-i}^k,\tilde{V}_{h+1,i}}}_{\nu^k_{h,i}(\tilde{s},\tilde{a})} + \underbrace{q_{h,i}^k(\tilde{s},\tilde{a}) - Q_{h,i}^{\pi^{k}_{-i, h}}(\tilde{s},\tilde{a})}_{\mu^k_{h,i}(\tilde{s},\tilde{a})} \big]
\end{align*}
and \begin{align*}
   \lvert  \phi_i(s,a)^\top \Lambda_{h,i}^{-1} \sum_{(\tilde{s},\tilde{a})\in \mathcal{D}_{h,i}} \phi_i(\tilde{s},\tilde{a})\nu_{h,i}^k(\tilde{s},\tilde{a}) \rvert  &\leq  \sqrt{\sum_{\tilde{s},\tilde{a}\in \mathcal{D}_{h,i}}[\phi_i(s,a)^\top \Lambda_{h,i}^{-1} \phi_i(\tilde{s},\tilde{a})]^2}  \sqrt{\lvert \mathcal{D}_{h,i}\rvert  } \nu\\
   &\lesssim \sqrt{\tau}\cdot \sqrt{\frac{d\log d}{\tau}}\nu.
\end{align*}
and with probability at least $1-\delta$, \begin{align*}
       \lvert  \phi_i(s,a)^\top \Lambda_{h,i}^{-1} \sum_{(\tilde{s},\tilde{a})\in \mathcal{D}_{h,i}} \phi_i(\tilde{s},\tilde{a})\mu_{h,i}^k(\tilde{s},\tilde{a}) \rvert  &\leq  H\sqrt{\sum_{\tilde{s},\tilde{a}\in \mathcal{D}_{h,i}}[\phi_i(s,a)^\top \Lambda_{h,i}^{-1} \phi_i(\tilde{s},\tilde{a})]^2 \log(1/\delta)} \\
   &\lesssim H\sqrt{\tau\log(1/\delta)}.
\end{align*}
That for any fixed $(s,a)\in \mathcal{C}_h\times \mathcal{A}_i$, with probability at least $1-\delta$,\begin{align*}
  \lvert  \tilde{Q}_{h,i}^k(s,a)\rvert  \leq c  \big[ H(1+\sqrt{\tau \log(SA/\delta)})+ \nu \sqrt{d\log d}\big]
\end{align*}
On the other hand, we have the following determinstic bound:\begin{align*}
  \tilde{Q}_{h,i}^k(s,a) &= \phi_i(s,a)^\top \Lambda_{h,i}^{-1}\sum_{(\tilde{s},\tilde{a})\in \mathcal{D}_{h,i}} \phi_i(\tilde{s},\tilde{a}) q_{h,i}^k(\tilde{s},\tilde{a})\\
  &\leq H \sqrt{\lvert \mathcal{D}_{h,i}\rvert } \cdot \sqrt{\sum_{\tilde{s},\tilde{a}\in \mathcal{D}_{h,i}}[\phi_i(s,a)^\top \Lambda_{h,i}^{-1} \phi_i(\tilde{s},\tilde{a})]^2 }\\
  &\lesssim H \sqrt{d}.
\end{align*}
Thus for any fixed $(s,a)\in \mathcal{C}_h\times \mathcal{A}_i$, with probability at least $1-\delta$, \begin{align*}
  \lvert  \tilde{Q}_{h,i}^k(s,a)\rvert  \lesssim  H \min\{1+\sqrt{\tau \log(1/\delta)})+ \nu \sqrt{d}, \sqrt{d}\}
\end{align*}

\textbf{2. When }$s \notin \mathcal{C}_h$:

By our construction of $\tilde{\mathcal{D}}_{h,i},\tilde{\Lambda}_h^{-1}$ in the virtual algorithm, it holds that \begin{align*}
   \sqrt{\lvert \tilde{D}_{h,i}\rvert} \lesssim \sqrt{d/\tau} ,\quad \lVert \phi_i(s,a)\rVert_{\tilde{\Lambda}_{h,i}^{-1}} \leq \tau, \forall s\notin \mathcal{C}_{h}
\end{align*}  
  thus our argument when $s\in \mathcal{C}_h$ still holds by replacing $\mathcal{D}_{h,i},{\Lambda}_h^{-1}$ by $\tilde{\mathcal{D}}_{h,i},\tilde{\Lambda}_h^{-1}$. 
  
  i.e. for any fixed $(s,a)\notin \mathcal{C}_h\times \mathcal{A}_i$, with probability at least $1-\delta$, \begin{align*}
    \lvert  Q_{h,i}^k(s,a)\rvert  \lesssim  H \min\{1+\sqrt{\tau \log(1/\delta)})+ \nu \sqrt{d}, \sqrt{d}\}
  \end{align*}
    
Now, taking union bound on $(s,a)$, we have with probability at least $1-\delta,$
\begin{align*}
    \lvert  Q_{h,i}^k(s,a)\rvert  \lesssim  H \min\{1+\sqrt{\tau \log(SA/\delta)}+ \nu \sqrt{d}, \sqrt{d}\}.
\end{align*}

\end{proof}
Denote $\gamma: = \min\{1+\sqrt{\tau \log(SA/\delta)}+ \nu \sqrt{d}, \sqrt{d}\}$, by Lemma~\ref{lem-Q-bound} we have there exists some absolute number $c>0$ so that with probability at least $1-\delta$,\begin{align*}
  \tilde{y}_k(s,a):= \frac{cH \gamma - Q_{h,i}^k(s,a)}{2cH\gamma} \in [0,1] \quad \forall (s,a)\in \mathcal{S} \times \mathcal{A}_i.
\end{align*} 

Now by Lemma~\ref{lem-FTRL}, for the policy sequence generated by \begin{equation}\label{eq-virtual-policy-update}
  \pi_{k,a}(a\lvert s)\propto \exp(-\eta \sum_{t=1}^k \tilde{y}_{t}(s,a))\propto \exp(\frac{\eta}{2cH\gamma} \sum_{t=1}^k \tilde{Q}_{h,i}^k(s,a))\propto  \exp(\eta_k \tilde{\bar{Q}}_{h,i}^k(s,a))\end{equation}
  with $\eta = \sqrt{2\log(A_i)/K}$, it holds that\begin{align}\label{eq-ftrl-2}
    \max_{a} \frac{1}{K}\sum_{k=1}^K\big(  \langle \pi_{k}(\cdot \lvert s), \tilde{y}_k(s,a)\rangle -  \tilde{y}_k(s,a) \big)\leq \sqrt{2\frac{\log A_i}{K}}
  \end{align}

  Multiplying $2c\gamma$ to both sides of  \eqref{eq-ftrl-2} and noticing that the iteration formula in \eqref{eq-ftrl-2} is exactly the formula for updating $\tilde{\pi}_{h,i}^{k}$ in Algorithm~\ref{alg:UC-linear-Virtual}, we get with probability at least $1-\delta$,
\begin{align*}
  \frac{1}{K}\sum_{k=1}^K \E_{a\sim \pi_{h,i}^k}[\tilde{Q}_{h,i}^k(s,a)] \leq \frac{1}{K}\min_{a} \sum_{k=1}^K \tilde{Q}_{h,i}^k(s,a) + 2c\gamma H \sqrt{\frac{2\log A_i}{K}},
\end{align*}

That leads to the desired result.
\end{proof}

\subsection{Proof of Lemma~\ref{lem-J1-bound}}
\begin{proof}[Proof of Lemma] 
For any $s,a$ we have denote $\mathcal{F}_k(\tilde{s},\tilde{a})$ the filtration genearted by the information before taking the $k$-th time sampling on $\tilde{s},\tilde{a}$, then for \begin{align*}
  Z_{k}(\tilde{s},\tilde{a}) := \phi_i(s,a)^\top \Lambda_{h,i}^{-1}\phi_i(\tilde{s},\tilde{a}) \bar{q}_{h,i}^k(\tilde{s},\tilde{a}),
\end{align*}
it holds that $ \E[Z_{k}(\tilde{s},\tilde{a})\lvert \mathcal{F}_{k}(\tilde{s},\tilde{a})] =  \phi_i(s,a)^\top \Lambda_{h,i}^{-1}\phi_i(\tilde{s},\tilde{a})\big( r_{h,i}^{\pi^k_{h,-i}}(\tilde{s},\tilde{a}) + \mathbb{P}_{h}^{\pi_{h,-i}^{k}}\widehat{V}_{h+1,i}(\tilde{s},\tilde{a}) \big)$, and $\lvert Z_k(\tilde{s},\tilde{a}) \rvert \leq H \lvert \phi_i(s,a)^\top \Lambda_{h,i}^{-1}\phi_i(\tilde{s},\tilde{a}) \rvert $ a.s., thus applying Azuma-Hoeffding's inequality leads to with probability at least $1-\delta$,
\begin{align*}
  &\frac{1}{K}\sum_{(\tilde{s},\tilde{a}) \in \mathcal{D}_{h,i} }\phi_i(s,a)^\top \Lambda_{h,i}^{-1} \phi_i(\tilde{s},\tilde{a})\big[\sum_{k=1}^K (\bar{q}_{h,i}^k(\tilde{s},\tilde{a}) - r_{h,i}^{\pi^k_{h,-i}}(\tilde{s},\tilde{a}) - \mathbb{P}_{h}^{\pi_{h,-i}^{k}}\widehat{V}_{h+1,i}(\tilde{s},\tilde{a}))\big]\\
=&\frac{1}{K}\sum_{(\tilde{s},\tilde{a}) \in \mathcal{D}_{h,i} }\sum_{k=1}^K \big( Z_{k}(\tilde{s},\tilde{a}) - \E[Z_k(\tilde{s},\tilde{a})\lvert \mathcal{F}_{k}(\tilde{s},\tilde{a})] \big)\\
\lesssim & \frac{H}{K}\sqrt{\sum_{(\tilde{s},\tilde{a}) \in \mathcal{D}_{h,i} }\sum_{k=1}^K  \lvert \phi_i(s,a)^\top \Lambda_{h,i}^{-1}\phi_i(\tilde{s},\tilde{a}) \rvert^2 \log(1/\delta)}.
\end{align*} 

On the other hand, we have \begin{align*}
  \sum_{(\tilde{s},\tilde{a}) \in \mathcal{D}_{h,i} }\sum_{k=1}^K  \lvert \phi_i(s,a)^\top \Lambda_{h,i}^{-1}\phi_i(\tilde{s},\tilde{a}) \rvert^2
  =&\sum_{(\tilde{s},\tilde{a}) \in \mathcal{D}_{h,i} }\sum_{k=1}^K   \phi_i(s,a)^\top \Lambda_{h,i}^{-1}\phi_i(\tilde{s},\tilde{a})\phi_i(\tilde{s},\tilde{a}))^\top \Lambda_{h,i}^{-1}  \phi_i({s},{a})\\
  \leq& \sum_{k=1}^K   \phi_i(s,a)^\top \Lambda_{h,i}^{-1}\phi_i({s},{a}) \\
  \leq&  K \tau.
\end{align*}
Taking union bound over $\mathcal{C}_h \times \mathcal{A}_i$ leads to with probability at least $1-\delta$, \begin{align}\label{eq-J1-bound-SA}
  J_1 \lesssim H \sqrt{\frac{\tau \log(SA/\delta)}{K}}, \quad \forall (s,a) \in \mathcal{S}\times \mathcal{A}_i.
\end{align}

On the other hand, we have it holds for all $(s,a)$ that  
\begin{align*}
J_1 &=\frac{1}{K}\sum_{(\tilde{s},\tilde{a}) \in \mathcal{D}_{h,i} }\phi_i(s,a)^\top \Lambda_{h,i}^{-1} \phi_i(\tilde{s},\tilde{a})\big[\sum_{k=1}^K ({q}_{h,i}^k(\tilde{s},\tilde{a}) - Q_{h,i}^{\pi_{h,-i}^k,\tilde{V}_{h+1,i}}(\tilde{s},\tilde{a}) )\big]\\
    &\leq \frac{1}{K} \lVert \phi_i(s,a)\rVert_{\Lambda_{h,i}^{-1}}  \big\lVert \sum_{(\tilde{s},\tilde{a}) \in \mathcal{D}_{h,i}}\sum_{k=1}^K \phi_i(\tilde{s},\tilde{a})\big({q}_{h,i}^k(\tilde{s},\tilde{a}) - Q_{h,i}^{\pi_{h,-i}^k,\tilde{V}_{h+1,i}}(\tilde{s},\tilde{a}) \big)\big\rVert_{\Lambda_{h,i}^{-1}}\\
    &\leq \frac{\sqrt{\tau}}{K}\big\lVert\sum_{(\tilde{s},\tilde{a}) \in \mathcal{D}_{h,i}}\sum_{k=1}^K \phi_i(\tilde{s},\tilde{a})\big({q}_{h,i}^k(\tilde{s},\tilde{a}) - Q_{h,i}^{\pi_{h,-i}^k,\tilde{V}_{h+1,i}}(\tilde{s},\tilde{a}) \big)\big\rVert_{\Lambda_{h,i}^{-1}}.
\end{align*}

Now noticing that \begin{align*}
  &\big\lVert\sum_{(\tilde{s},\tilde{a}) \in \mathcal{D}_{h,i}}\sum_{k=1}^K \phi_i(\tilde{s},\tilde{a})\big({q}_{h,i}^k(\tilde{s},\tilde{a}) - Q_{h,i}^{\pi_{h,-i}^k,\tilde{V}_{h+1,i}}(\tilde{s},\tilde{a}) \big)\big\rVert_{\Lambda_{h,i}^{-1}}\\
  =&\big\lVert \sum_{k=1}^K \sum_{(\tilde{s},\tilde{a}) \in \mathcal{D}_{h,i}} \Lambda_{h,i}^{-1/2} \phi_i(\tilde{s},\tilde{a})\big({q}_{h,i}^k(\tilde{s},\tilde{a}) - Q_{h,i}^{\pi_{h,-i}^k,\tilde{V}_{h+1,i}}(\tilde{s},\tilde{a}) \big)\big\rVert_{2}\\
  =& \sup_{\lVert v \rVert_2 = 1} \sum_{k=1}^K \sum_{(\tilde{s},\tilde{a}) \in \mathcal{D}_{h,i}} v^\top\Lambda_{h,i}^{-1/2} \phi_i(\tilde{s},\tilde{a})\big({q}_{h,i}^k(\tilde{s},\tilde{a}) - Q_{h,i}^{\pi_{h,-i}^k,\tilde{V}_{h+1,i}}(\tilde{s},\tilde{a}) \big).
\end{align*}
Denote $\mathbb{S}_{\mathcal{H}} = \{g\in \mathbb{R}^d: \lVert g \rVert_2 = 1 \}$ , then for any fixed $g\in \mathbb{S}_{\mathcal{H}}$, we have by Azuma-Hoeffding inequality, with probability at least $1-\delta,$  \begin{align*}
  &\sum_{k=1}^K \sum_{(\tilde{s},\tilde{a}) \in \mathcal{D}_{h,i}} g^\top\Lambda_{h,i}^{-1/2} \phi_i(\tilde{s},\tilde{a})\big({q}_{h,i}^k(\tilde{s},\tilde{a}) - Q_{h,i}^{\pi_{h,-i}^k,\tilde{V}_{h+1,i}}(\tilde{s},\tilde{a}) \big)\\
  \lesssim & H\sqrt{\sum_{k=1}^K \sum_{(\tilde{s},\tilde{a})\in \mathcal{D}_{h,i}} (g^\top\Lambda_{h,i}^{-1/2}\phi_i(\tilde{s},\tilde{a}) )^2 \log(1/\delta)  }\leq H\sqrt{K \log(1/\delta) }.
\end{align*}

If we consider the minimal $\epsilon$-net $\mathcal{N}_\epsilon$ of $\mathbb{S}_{\mathcal{H}}$,  i.e. \begin{align*}
  \forall g \in \mathbb{S}_{\mathcal{H}}, \exists g_0 \in \mathcal{N}_\epsilon \text{ such that } \lVert g - g_0 \rVert_2 \leq \epsilon.
\end{align*}
In particular for any $g,g'\in \mathbb{S}_{\mathcal{H}}, $ we have \begin{align*}
  &\sum_{k=1}^K \sum_{(\tilde{s},\tilde{a}) \in \mathcal{D}_{h,i}} (g-g')^\top\Lambda_{h,i}^{-1/2} \phi_i(\tilde{s},\tilde{a})\big({q}_{h,i}^k(\tilde{s},\tilde{a}) - Q_{h,i}^{\pi_{h,-i}^k,\tilde{V}_{h+1,i}}(\tilde{s},\tilde{a}) \big)\\
  \lesssim& \lVert g - g' \rVert_{\mathcal{H}} H \sqrt{Kd \sum_{k=1}^K\sum_{(\tilde{s},\tilde{a}) \in \mathcal{D}_{h,i}} \phi_i(\tilde{s},\tilde{a})\Lambda_{h,i}^{-1} \phi_i(\tilde{s},\tilde{a})   }\\
  \lesssim& \lVert g - g' \rVert_{\mathcal{H}} HK\sqrt{d}. 
\end{align*}
That implies for any $g\in \mathbb{S}_{\mathcal{H}},$ there exists some $g_0 \in \mathcal{N}_\epsilon$ so that
\begin{align*}
  &\sum_{k=1}^K \sum_{(\tilde{s},\tilde{a}) \in \mathcal{D}_{h,i}} g^\top\Lambda_{h,i}^{-1/2} \phi_i(\tilde{s},\tilde{a})\big({q}_{h,i}^k(\tilde{s},\tilde{a}) - Q_{h,i}^{\pi_{h,-i}^k,\tilde{V}_{h+1,i}}(\tilde{s},\tilde{a}) \big)\\
  =& \sum_{k=1}^K \sum_{(\tilde{s},\tilde{a}) \in \mathcal{D}_{h,i}} g_0^\top\Lambda_{h,i}^{-1/2} \phi_i(\tilde{s},\tilde{a})\big({q}_{h,i}^k(\tilde{s},\tilde{a}) - Q_{h,i}^{\pi_{h,-i}^k,\tilde{V}_{h+1,i}}(\tilde{s},\tilde{a}) \big) +  O(\epsilon HK\sqrt{d}).
\end{align*}
Thus setting $\epsilon = \epsilon_0:= \frac{1}{\sqrt{K}}$ and taking union bound over $\mathcal{N}_\epsilon$ leads to with probability at least $1-\delta$, \begin{align*}
  \sup_{v\in \mathbb{S}^{d-1}} & \sum_{k=1}^K \sum_{(\tilde{s},\tilde{a}) \in \mathcal{D}_{h,i}} v^\top\Lambda_{h,i}^{-1/2} \phi_i(\tilde{s},\tilde{a})\big({q}_{h,i}^k(\tilde{s},\tilde{a}) - Q_{h,i}^{\pi_{h,-i}^k,\tilde{V}_{h+1,i}}(\tilde{s},\tilde{a}) \big) \\
  \lesssim &  H \sqrt{K\big[\log(\lvert \mathcal{N}_{\epsilon_0} \rvert / \delta) + d  \big]}.  \\
  \lesssim & H\sqrt{K\big[d\log(1 / \delta) \big]}
\end{align*}

That leads to another bound of $J_1$: with probability at least $1-\delta,$
\begin{align}\label{eq-J1-bound-d}
  J_1 \lesssim H\sqrt{\tau\dfrac{d\log(1 / \delta)}{K}}
\end{align}

Combining \eqref{eq-J1-bound-d} and \eqref{eq-J1-bound-SA} together leads to the desired result for $s\in\mathcal{C}_h.$ 
\end{proof}

\end{document}